\newcommand{\rev}[1]{\textcolor{black}{#1}}
\newcommand{\icml}[1]{\textcolor{black}{#1}}
\title{Provable Anytime Ensemble Sampling Algorithms in Nonlinear Contextual Bandits}
\author{
    Jiazheng Sun\thanks{Duke University; e-mail: {\tt jiazheng.sun@duke.edu}}~\footnotemark[3]
    ~~
    Weixin Wang\thanks{Duke University; e-mail: {\tt weixin.wang@duke.edu}}~\thanks{Equal contribution} 
    ~~
    Pan Xu\thanks{Duke University; e-mail: {\tt pan.xu@duke.edu}}
}
\begin{document}

\date{}
\maketitle

\begin{abstract}
We provide a unified algorithmic framework for ensemble sampling in nonlinear contextual bandits and develop corresponding regret bounds for two most common nonlinear contextual bandit settings: Generalized Linear Ensemble Sampling (GLM-ES) for generalized linear bandits and Neural Ensemble Sampling (Neural-ES) for neural contextual bandits. Both methods maintain multiple estimators for the reward model parameters via maximum likelihood estimation on randomly perturbed data. We prove high-probability frequentist regret bounds of $\widetilde{\mathcal{O}}(d^{3/2} \sqrt{T} + d^{4})$ for GLM-ES and $\widetilde{\mathcal{O}}(\widetilde{d}^{3/2} \sqrt{T})$ for Neural-ES, where $d$ is the dimension of feature vectors, $\widetilde{d}$ is the effective dimension of a neural tangent kernel (NTK) matrix and $T$ is the number of rounds. The regret bound of GLM-ES matches the state-of-the-art result of randomized exploration algorithms in generalized linear bandit setting. In the theoretical analysis, we introduce techniques that address challenges specific to nonlinear models. Practically, we remove fixed-time horizon assumption by developing anytime versions of our algorithms, suitable when $T$ is unknown. Finally, we empirically evaluate GLM-ES, Neural-ES and their anytime variants, demonstrating strong performance. Overall, our results establish ensemble sampling as a provable and practical randomized exploration approach for nonlinear contextual bandits.
\end{abstract}

\section{Introduction}

The contextual bandit is an online learning problem where an agent interacts with an environment by pulling arms, each associated with a feature vector. After each pull, the agent receives a stochastic reward whose expected value depends on the chosen arm’s feature vector. The agent’s goal is to maximize the accumulated reward. Contextual bandits provide a natural abstraction for real-world sequential decision-making problems such as content recommendation \citep{zhu2023deep} and clinical trials \citep{varatharajah2022contextual}. To maximize rewards, the agent must learn the mapping from an arm’s feature vector to its expected reward. Most prior work has focused on the linear contextual bandit setting \citep{abbasi2011improved,abeille2017linear,kveton2020randomized}, where the expected reward is assumed to be a linear function of the feature vector. While this assumption facilitates theoretical analysis and efficient implementations, it fails to capture complex relationships between features and rewards. This has motivated the study of nonlinear contextual bandits, where the expected reward is modeled as a nonlinear function of the arm’s features, e.g., through generalized linear models (GLMs) or neural networks. In the GLM setting \citep{filippi2010parametric,kveton2020randomized}, the reward is generated by applying a nonlinear function $\mu(\cdot)$ to the inner product of the feature vector and an unknown parameter vector. In more general cases where the reward cannot be expressed within the GLM structure, neural contextual bandits approximate the reward function with deep neural networks, without assuming any particular functional form \citep{jacot2018neural,zhou2020neural,zhang2021neural,xu2022neural,jia2022learning}. These nonlinear approaches have substantially improved the empirical performance of linear bandit algorithms, especially in complex environments \citep{xu2022langevin,jia2022learning}.

While nonlinear models enhance expressivity, they also complicate the design of effective exploration strategies. Existing exploration methods such as Upper Confidence Bound (UCB) \citep{abbasi2011improved} and Thompson Sampling (TS) \citep{agrawal2013thompson} heavily depend on reward structure and distributional assumptions. Extending them to nonlinear settings requires significant approximations \citep{zhou2020neural,zhang2021neural,xu2022neural,xu2022langevin}, often making the methods impractical in real-world applications. This motivates the search for exploration strategies that combine strong empirical performance with theoretical guarantees in nonlinear bandits. Ensemble sampling \citep{lu2017ensemble} has emerged as a promising class of algorithms for online decision-making problems, including bandits \citep{lee2024improved}, deep reinforcement learning \citep{osband2016deep}, and recommendation systems \citep{zhu2023deep}. Ensemble sampling maintains an ensemble of $m$ models, each trained on randomly perturbed historical data comprising arm features and rewards. At each round, one model is selected to estimate the expected rewards, and the arm with the highest estimate is chosen. Subsequently, the received reward is incorporated into the dataset, along with a new data point that has undergone random perturbation. All models are then updated accordingly.

In recent years, ensemble sampling has gained popularity due to its strong empirical performance and moderate computational cost. However, theoretical understanding has lagged behind. Existing analysis provide regret guarantees only in the linear contextual bandit setting. For example, \cite{janz2024ensemble} established a regret bound of $\widetilde{\mathcal{O}}(d^{5/2}\sqrt{T})$ for infinitely many arms with ensemble size $\Theta(d \log T)$. Built on these results, \cite{lee2024improved} proved a high-probability $T$-round regret bound of $\widetilde{\mathcal{O}}(d^{3/2}\sqrt{T})$ with ensemble size $\Omega(K \log T)$, where $d$ is the feature dimension and $K$ is the number of arms. While these theoretical works provide valuable insights, they are far from fully elucidating the empirical success of ensemble sampling in complex decision-making applications, where reward models are typically nonlinear in the arm features.

In this work, we extend ensemble sampling to nonlinear bandit settings with finitely many arms. Specifically, we study the two most widely used models: generalized linear bandits and neural contextual bandits. We show that ensemble sampling in these settings achieves high-probability regret bounds matching the state-of-the-art for randomized exploration algorithms, with ensemble size logarithmic in $T$. In addition, we develop anytime versions of ensemble sampling using the doubling trick, addressing the limitation that ensemble size and several other hyper-parameters traditionally depend on the horizon $T$. These anytime variants significantly broaden the applicability of ensemble sampling. Finally, we complement our theoretical results with  experiments on \texttt{Lin-ES}, \texttt{GLM-ES}, and \texttt{Neural-ES} against baselines. Our experiments highlight the practicality of ensemble sampling in nonlinear bandits, balancing strong performance with computational efficiency.

Our contributions are summarized as follows. 
\begin{itemize}[nosep, leftmargin=*]
\item We propose a general framework for ensemble sampling in bandit problems and introduce \texttt{GLM-ES} and \texttt{Neural-ES} as its realizations in nonlinear settings.
\item We provide theoretical analyses of \texttt{GLM-ES} and \texttt{Neural-ES}, proving high-probability regret bound of $\widetilde{\mathcal{O}}(d^{3/2} \sqrt{T} + d^{4})$ and $\widetilde{\mathcal{O}}\big(\widetilde{d}^{3/2} \sqrt{T}\big)$, respectively. The regret bound for \texttt{GLM-ES} matches the state-of-the-art for randomized exploration algorithms. To the best of our knowledge, these are the first high-probability regret bounds for ensemble sampling in nonlinear bandit settings.
\item For generalized linear bandits, we optimize the warm-up procedure in existing literature, reducing regret from $d^9$ to $d^{4}$. This improvement also applies to perturbed-history type of exploration strategies. We further remove the need for adaptive reward perturbations, simplifying the design and improving the efficiency of our algorithm.
\item We develop anytime versions of ensemble sampling using the doubling trick and show that their asymptotic cumulative regret guarantees are preserved.
\item We conduct empirical evaluations comparing cumulative regret and computational cost with baselines, demonstrating the practicality of ensemble sampling.
\end{itemize}

\section{Related Work}

{In this section, we briefly summarize the recent works in nonlinear contextual bandit settings with randomized exploration algorithms.}

\paragraph{Randomized Exploration}

Randomized exploration strategies add controlled randomness to promote exploration of actions with high uncertainty. In sequential decision making problems, randomized exploration strategies often outperform deterministic strategies such as Upper Confidence Bound (UCB) \citep{chu2011contextual,lattimore2020bandit} by preventing early convergence to suboptimal actions \citep{jin2021mots,jin2023thompson}. Among such methods, Thompson Sampling (TS) \citep{thompson1933likelihood} is a key approach for multi-armed bandits \citep{agrawal2017near}, contextual bandits \citep{agrawal2013thompson}, and RL \citep{osband2013more}. TS maintains a posterior over model parameters, updated each round from a prior (e.g., Gaussian) and observed rewards \citep{agrawal2013thompson}, and samples a parameter from this posterior for arm selection. Despite its simplicity, many TS variants rely on exact posteriors or accurate Laplace approximations, which can be costly. To address this, approximate sampling methods such as Langevin Monte Carlo (LMC) \citep{xu2022langevin,hsu2024randomized}, Stochastic gradient Langevin dynamics (SGLD) variants \citep{mazumdar2020approximate,zheng2024accelerating} and variational inference \citep{pmlr-v235-clavier24a} have been developed and applied to various problem settings, including multi-armed bandits with non-conjugate or highly nonlinear rewards, nonlinear contextual bandits and RL \citep{ishfaq2024provable,ishfaq2024more,hsu2024randomized}. Another important method is perturb-history exploration (PHE) method, which involves introducing random perturbations in the historical data to approximate posterior sampling, making it applicable to complex reward distributions \citep{kveton2020randomized, ishfaq2021randomized}. Ensemble sampling maintains a small set of independently perturbed model replicas and selects arms using a randomly chosen replica \citep{lu2017ensemble}. Follow-up work provided theory for the linear contextual bandit setting. \cite{qin2022analysis} gave the first regret bound. \cite{janz2024ensemble} tightened guarantees with an ensemble of size $\Theta(d\log T)$ for linear bandits with infinitely many arms. \texttt{Lin-ES} \citep{lee2024improved} further improved the regret to $\widetilde O(d^{3/2}\sqrt{T})$ and clarified its connection to \texttt{Lin-PHE}.

\paragraph{Generalized Linear Bandits}

Generalized linear contextual bandits model rewards via a link function of a linear predictor, extending linear bandits to a more general setting. Early work introduced \texttt{GLM-UCB} and proved regret guarantees under standard regularity conditions \citep{filippi2010parametric}. Subsequent advances focused on optimality and efficiency. \cite{li2017provably} gave provably optimal algorithms with refined confidence sets. \cite{ding2021efficient} combined online stochastic gradient updates with Thompson Sampling for scalable inference. \cite{kveton2020randomized} developed randomized exploration for GLMs with sharper analyses. Perturbation-based methods provide practical alternatives: linearly perturbed loss minimization yields simple, sampling-free exploration with strong guarantees \citep{janz2024exploration}, and PHE adapts perturb-history exploration to sub-Gaussian GLMs \citep{liu2023glm}. \cite{sawarni2024generalized} analyze GLMs under limited adaptivity (batched policies) with communication and deployment constraints. Anytime-valid confidence sequences for GLMs enable principled UCB/TS decisions and valid sequential inference \citep{lee2024a}. At large horizons, One-pass update methods achieve near-optimal regret with single-pass, low-memory updates \citep{zhang2025generalized}.

\paragraph{Neural Bandits}

Neural bandits combine deep neural networks (DNNs) with contextual bandit algorithms. This setting leverages the representation power of DNNs and insights from neural tangent kernel (NTK) theory \citep{jacot2018neural}. \texttt{Neural-UCB} \citep{zhou2020neural} builds confidence sets using DNN-derived random features to enable UCB-style exploration. \texttt{Neural-TS} \citep{zhang2021neural} extends Thompson Sampling to this setting by using a neural estimator to approximate the posterior over rewards. \texttt{Neural-LCB} \citep{nguyen-tang2022offline} studies offline neural bandits and uses a neural lower confidence bound to take pessimistic decisions under uncertainty. To reduce the compute burden of explicit exploration, \icml{\texttt{Neural-PHE}} \citep{jia2022learning} learns a neural bandit model with perturbed rewards, avoiding separate exploration updates. Subsequent work explores added networks for exploitation \citep{ban2022eenet}, provable guarantees with smooth activations \citep{salgia2023provably}, and extensions to combinatorial selection \citep{hwang2023combinatorial, atalar2025neural, wang2025neural}, where the learner selects a subset (e.g., multiple arms under constraints) each round and receives a corresponding reward.

\section{Problem Setting}

Contextual bandits form a broad class of sequential decision making problems where the player chooses an action from an observed action set based on interaction history. Each action is associated with a feature vector (context).
At round $t$, the player observes an action set $\cX_t\subseteq\RR^d$, where we assume for any $X \in \cX_t$, $\|X\|_2 \leq 1$.
The agent then selects an arm (action) $X_t\in\cX_t$ and the environment immediately reveals a reward $Y_t$.
We consider the setting that the action set is finite with $K$ arms and fixed across different rounds. For simplicity, we use $\mathcal{X}$ to denote the fixed arm set.
{We assume that the reward $Y_{t}$ is generated by an unknown distribution with mean $h(X): \RR^d \rightarrow \RR$.}
In general, the form of $h(X)$ is unknown. One important special case is that we set reward model to be a linear function $h(X)=X^\top\theta^*$, then $h(X)$ is parameterized using a vector $\theta^* \in\RR^d$ and we have the standard linear contextual bandit setting~\citep{chu2011contextual,abbasi2011improved,agrawal2013thompson}.

In this work, we consider nonlinear contextual bandits where the reward model $h(X)$ is a nonlinear function of feature vector $X$. We focus on two most common nonlinear settings: (1) generalized linear bandits with $h(X)=\mu\big(X^\top\theta_{\text{GLM}}^*\big)$, where $\theta_{\text{GLM}}^{*} \in \RR^{d}$ is the true parameter with $\|\theta_{\text{GLM}}^{*}\| \leq S$ and $\mu(\cdot)$ is a strictly increasing link function~\citep{li2017provably, kveton2020randomized}; %
(2) neural contextual bandits where no assumptions are made about the form of $h(X)$, we use a neural network $f(X; \theta_{\text{Neural}})$ to approximate $h(X)$, where $\theta_{\text{Neural}}$ is the concatenation of all weight parameters and its dimension is determined by the structure of the neural network~\citep{zhou2020neural,zhang2021neural,jia2022learning}.

The goal of a bandit algorithm is to maximize the cumulative reward over a horizon $T$, equivalently to minimize the pseudo-regret~\citep{lattimore2020bandit}
\begin{align}\label{def:pseudo_regret}
    \rev{R(T)=\sum_{t=1}^{T}\big(h(X^*)-h(X_t)\big),}
\end{align}
where $\rev{X_t\in\cX}$ is the arm played at round $t$, and $\rev{X^*=\argmax_{X\in\cX}h(X)}$ is the arm with the highest expected reward. To minimize the cumulative regret $R(T)$, the agent needs to collect information and learn the true reward model $h(X)$ from interactions with the environment.

\paragraph{Notations}

We adopt the following standard notations throughout this paper. The set $\{1, 2, ..., n\}$ is denoted by $[n]$.
For any positive semi-definite matrix $M$, we use $\lambda_{\text{max}}(M)$ and $\lambda_{\text{min}}(M) \geq 0$ to denote maximum and minimum eigenvalues of $M$.
{
The 2-norm of a symmetric matrix $M$ is defined as $\| M \|_{2} = \max_{i} | \lambda_{i}(M) |$. For any positive semi-definite matrices $M_{1}$ and $M_{2}$, $M_{1} \preceq M_{2}$ if and only if $X^{\top} M_{1} X \leq X^{\top} M_{2} X$ for all $X \in \mathbb{R}^{d}$. All vectors are column vectors. For any vector $X$, we use the following vector norms: $|| X ||_{2} = \sqrt{X^{\top}X}$, $|| X ||_{M} = \sqrt{X^{\top} M X}$. The indicator function that event $\mathcal{E}$ occurs is $\ind \{\mathcal{E}\}$.
We use $\widetilde{\mathcal{O}}$ for the big-O notation up to logarithmic factors.
We define the filtration $\mathcal{F}_{t}^{\prime} := \sigma(X_{1},...,X_{t},Y_{1},...,Y_{t})$ as the $\sigma$-algebra generated by the pulled arms and observed rewards by the end of round $t\in [T]$, $\mathcal{F}_{t} := \sigma\big(X_{1},...,X_{t},Y_{1},...,Y_{t}, j_{1},...,j_{t}, \{Z_{1}^{j}\}_{j=1}^{m}, ...,  \{Z_{t}^{j}\}_{j=1}^{m}\big)$ as the $\sigma$-algebra generated by the pulled arms, observed rewards, chosen model and added perturbations by the end of round $t\in [T]$, and $\mathcal{F}_{t}^{-} := \sigma\big(X_{1},...,X_{t},Y_{1},...,Y_{t-1}, j_{1},...,j_{t}, \{Z_{1}^{j}\}_{j=1}^{m}, ...,  \{Z_{t-1}^{j}\}_{j=1}^{m}\big)$ as the $\sigma$-algebra before observing the reward at round $t\in [T]$.
}

\section{Ensemble Sampling for Nonlinear Contextual Bandits}

We apply the design principle of ensemble sampling to nonlinear contextual bandits, extending previous works on linear ensemble sampling (\texttt{Lin-ES}) to broader applications. In this section, we first present a unified algorithm framework for ensemble sampling in nonlinear contextual bandit, then we focus on two common nonlinear cases: 1) generalized linear contextual bandits; 2) neural contextual bandits, and respectively provide algorithms \texttt{GLM-ES} and \texttt{Neural-ES}.

\subsection{Unified Algorithm Framework}

Ensemble sampling follows the randomized exploration principle and exploration is realized through adding perturbations to observed rewards.
Therefore, the perturbed history $\mathcal{D}_{t} = \{(X_{l}, Y_{l} + Z_{l})\}_{l=1}^{t}$ is utilized to estimate the true mean reward model $h(X)$ and choose the best arm, where $\{X_{l}\}_{l=1}^{t}$ are pulled arms, $\{Y_{l}\}_{l=1}^{t}$ are observed rewards and $\{Z_{l}\}_{l=1}^{t}$ are perturbations. %

In particular, we maintain an ensemble of $m$ perturbed models, each with different perturbed history \icml{$\mathcal{D}_{t}^{j}$}, where $j \in [m]$ is the model index. At each round, we randomly select one model to estimate the mean reward of each arm in the arm set, then select the arm $X_{t}$ that maximizes the estimated mean reward. We use $f_{t}(X)$ to denote the estimated mean reward of arm $X$ from the chosen model at round $t$. After observing the reward $Y_{t}$ from the environment, each model in the ensemble is updated incrementally based on history $\mathcal{D}_{t}^{j}$.
At each round, we only sample one perturbation $Z_{t}^{j}$ for each model $j \in[m]$, the previous perturbations $\{Z_{l}^{j}\}_{l=1}^{t-1}$ are not re-sampled.

\begin{remark}
The design of ensemble sampling is similar to that of perturb-history exploration (PHE)-based algorithms. In PHE-based algorithms \citep{kveton2020perturbed,kveton2020randomized}, we only keep one model, but the entire perturbation sequence $\{Z_{l}\}_{l=1}^{t}$ is freshly sampled at each round. As a result, the per-round computational cost due to sampling increases linearly in $t$ {and accumulates to $\Theta\big(T^{2}\big)$ after $T$ rounds}, the algorithm becomes impractical for large $T$. Ensemble sampling can significantly reduce the computational cost by keeping previous perturbations, the per-round sampling cost remains constant for any $t$.
\end{remark}

A unified algorithmic framework is given in \Cref{alg:es_nonlinear}. For the generalized linear and neural bandit settings, we specify (i) a warm-up/initialization step (\icml{Line 2}) and (ii) a parameter-estimation loss function $L_{\text{nonlin}}$ (\icml{Line 9}). The following sections detail these two algorithmic instantiations.

\begin{algorithm}[H]
\caption{Ensemble Sampling \icml{Framework} for Nonlinear Contextual Bandits}
\begin{algorithmic}[1]  \label{alg:es_nonlinear}
\STATE \textbf{Input:} ensemble size $m$, regularization parameter $\lambda$, reward-perturbation distribution $\mathcal{P}_{R}$ on $\mathbb{R}$, number of warm-up exploration rounds $\tau$ {and warm-up strategy}

\STATE Warm-up for the first $\tau$ rounds and initialization \hfill $\triangleleft$ \textcolor{red}{GLM-ES} or \textcolor{red}{Neural-ES} \label{line:warm_up}

\FOR{$t = \tau + 1, ..., T$}
    \STATE Sample $j_{t}$ uniformly from $\text{[}m\text{]}$
    \STATE Pull arm $X_{t} \leftarrow \text{argmax}_{X\in\mathcal{X}} f_{t}(X)$ and receive reward $Y_{t}$
    \FOR{$j=1, 2, ..., m$}
        \STATE Sample $Z_{t}^{j} \sim \mathcal{P}_{R}$

        \STATE Update \icml{$\mathcal{D}_{t}^{j} \leftarrow \mathcal{D}_{t-1}^{j} \cup \{\big(X_{t}, Y_{t} + Z_{t}^{j}\big)\}$}
        
        \STATE $\icml{\theta_{t}^{j}} \leftarrow \text{argmin}_{\theta} \, L_{\text{nonlin}}(\theta {, \lambda}; \mathcal{D}_{t}^{j})$ \hfill $\triangleleft$ \textcolor{red}{GLM-ES} or \textcolor{red}{Neural-ES} \label{line:theta_estimate}
    \ENDFOR
\ENDFOR
\end{algorithmic}
\end{algorithm}

\subsection{Ensemble Sampling for Generalized Linear Model (GLM-ES)}

We first consider the generalized linear model (GLM) and provide the algorithm design of \texttt{GLM-ES}. %
{We make the following assumption on the true reward model.
\begin{assumption}[Exponential family distribution]
\label{assumption:exponential_family}
Given feature vector $X \in \RR^{d}$, the conditional distribution of reward $Y \in \mathbb{R}$ follows exponential family distribution:
\begin{align}
    \label{equ:exponential_family_main}
    p_{\theta^{*}_{\text{GLM}}}(Y \vert X) =
    b_{0}(Y) \, \text{exp} \big[Y\cdot X^{\top}\theta^{*}_{\text{GLM}} - b\big(X^{\top}\theta^{*}_{\text{GLM}}\big)\big],
\end{align}
where $\theta^{*}_{\text{GLM}} \in \RR^{d}$ is the true parameter.
We assume the link function $\mu (\cdot)$ is known to the agent, where the link function is defined as
\begin{align*}
    \mu\big(X^{\top} \theta\big) := \mathbb{E}_{\theta}\big[Y|X\big] = \dot{b}\big(X^{\top} \theta\big).
\end{align*}
\end{assumption}
}

\paragraph{Generalized Linear Model with Regularization} 
{According to \Cref{assumption:exponential_family}, reward $Y_{t}$ is generated from exponential family distribution with conditional expected value $\mu\big(X_{t}^{\top} \theta_{\text{GLM}}^{*}\big)$, where link function $\mu (\cdot)$ is known to the agent.}
Given observed data set $\mathcal{D}_{t} = \{(X_{l}, Y_{l})\}_{l=1}^{t}$, the $\lambda$-regularized negative log-likelihood of $\mathcal{D}_{t}$ under parameter $\theta$ is defined as follows:
\begin{align}
\label{equ:regularized_negative_log_likelihood}
    L_{\text{GLM}}(\theta; \icml{\mathcal{D}_{t}}) := \frac{\lambda}{2}\|\theta\|_{2}^2-\sum_{l=1}^t\big(Y_l \cdot X_l^{\top} \theta-b(X_l^{\top} \theta)\big).
\end{align}

\paragraph{Algorithm Design}
In \texttt{GLM-ES}, we maintain an ensemble of $m$ models (estimators), each model is parametrized by parameter $\theta_{t}^{j}$, $j \in [m]$, which is an estimation of $\theta^{*}_{\text{GLM}}$ based on perturbed history {at the end of round $t$}.
We use the $\lambda$-regularized negative log-likelihood to obtain the parameter estimation,
\begin{align}  \label{equ:glm_es_estimate}
    \theta_{t}^{j} :=
    \text{argmin}_{\theta \in \mathbb{R}^{d}}
    L_{\text{GLM}}\big(\theta; \{X_{l}, Y_{l} + Z_{l}^{j}\}_{l=1}^{\icml{t}}\big),
\end{align}
where $Z_{l}^{j} \in \mathbb{R}$ are perturbations i.i.d. sampled from distribution $\mathcal{P}_{R}$.
We use a warm-up procedure that approximates a G-optimal design, which is detailed as \Cref{alg:warm_up_glm} in \Cref{appendix_sub:warm_up_glm}.

\begin{remark}
The algorithm \texttt{GLM-ES} is an application of ensemble sampling in the GLM setting. Compared to \texttt{Lin-ES}, a warm-up procedure \Cref{alg:warm_up_glm} is required to guarantee that optimism is satisfied with constant probability. In the generalized linear bandit literature, the number of rounds in warm-up procedure is typically chosen by enforcing a lower bound on the minimum eigenvalue of the empirical feature covariance matrix \citep{li2017provably, kveton2020randomized, liu2023glm}. However, most works do not specify the required order for the number of warm-up rounds, effectively treating it as an assumption. We propose a practical warm-up scheme that directly controls the uncertainty level (see \Cref{lem:warm_up_procedure} for details). The same procedure is also used in \cite{liu2023glm}. Additionally, compared to the algorithm design in \cite{liu2023glm}, we removed the requirement for adapted perturbation on rewards and reduced requirements on the number of rounds in the warm-up procedure, making the algorithm simpler and more efficient.
\end{remark}

\subsection{Ensemble Sampling for Neural Contextual Bandit (Neural-ES)}
\label{section_sub:neural_es_design}

We now introduce neural contextual bandit setting and introduce the algorithm \texttt{Neural-ES}.

\paragraph{Neural Contextual Bandit}  
In neural contextual bandit, we only assume that the true mean reward $h(\cdot)$ is bounded, and we use a deep neural network (DNN) to approximate $h(\cdot)$.
We adopt the following fully connected neural network $f(X; \theta)$ to approximate $h(X)$:
\begin{align*}
    f(X; \theta) = \sqrt{N} \, W_{L} \phi \big( W_{L - 1} \phi \big( \cdot\cdot\cdot \phi(W_{1}X) \big) \big),
\end{align*}
where $N$ and $L$ are the width and depth of the neural network, $\phi(x) = \text{ReLU} (x)$, $W_{l}$ are learnable parameter matrices, and $\theta = [\text{vec}(W_{1}), \cdot\cdot\cdot, \text{vec}(W_{L})] \in \mathbb{R}^{d'}$ is the concatenation of all learnable parameters.
The dimension of $\theta$ is $d' = N + Nd + N^{2}(L-2)$.
For simplicity, we design the neural network such that each layer has the same \rev{width $N$}.

To learn the parameters in the neural network, we define the following $\lambda$-regularized loss function:
\begin{align*}
    L_{\text{Neural}}(\theta; \mathcal{D}_{t}) = \frac{1}{2} & \sum_{l=1}^{\icml{t}} \big( f(X_{l}; \theta) - Y_{l} \big)^{2}
    + \frac{1}{2} \lambda N \, ||\theta - \theta_{0}||_{2}^{2},
\end{align*}
where parameter $\theta_{0}$ is randomly sampled at initialization. \rev{The parameter $\theta$ is estimated using gradient descent to minimize the loss function. We use learning rate $\eta$ and number of steps $J$ in the gradient descent in neural network learning.}

\paragraph{Algorithm Design}
We follow the unified algorithm framework (\Cref{alg:es_nonlinear}) to design \texttt{Neural-ES}.
We maintain $m$ different models to approximate the true mapping $h (\cdot)$, each model is a deep neural network with the same structure.
{We first initialize the neural network $\theta_{0} = [\text{vec}(W_{1}^{0}), \cdot\cdot\cdot, \text{vec}(W_{L}^{0})] \in \mathbb{R}^{d'}$ using random parameters sampled from Gaussian distribution: for $2 \leq l \leq L-1$, $W_{l}^{0} = (W, 0; 0, W)$ with each entry in $W \in \mathbb{R}^{N/2 \times N/2}$ sampled from $\mathcal{N}(0, 4/N)$; $W_{1}^{0} = (W, 0; 0, W)$ with each entry in $W \in \mathbb{R}^{N/2 \times d/2}$ sampled from $\mathcal{N}(0, 4/N)$; $W_{L}^{0} = (\mathbf{w}^{\top}, -\mathbf{w}^{\top})$ with each entry in $\mathbf{w} \in \mathbb{R}^{1 \times N/2}$ sampled from $\mathcal{N} (0, 2/N)$.}
Note that the initialization $\theta_{0}$ is shared across all models in the ensemble.

We then perform a simple warm-up by pulling each arm once.
At each round \icml{$t$}, we uniformly randomly choose one model $j_{t}$ from the ensemble and choose arm $X_{t}$ which maximizes the learned function \rev{$f(X; \theta_{t-1}^{j_{t}})$} and receive reward $Y_{t}$.
We use gradient descent on the $\lambda$-regularized loss function to \rev{update} the parameter estimation \rev{for each model}:
\begin{align}  \label{equ:neural_es_estimate}
    \theta_{t}^{j} \approx
    \text{argmin}_{\theta \in \mathbb{R}^{d'}}
    L_{\text{Neural}}(\theta; \{X_{l}, Y_{l} + Z_{l}^{j}\}_{l=1}^{t}).
\end{align}
{Note that there is optimization error due to finite learning rate and optimization steps, $\theta_{t}^{j}$ is not the exact minimizer of the loss function. We explicitly quantify this error in the theoretical analysis.}

\begin{remark}   
The most relevant design is Neural Bandit with Perturbed Reward (\texttt{NPR}) proposed by \citet{jia2022learning}, except that our \texttt{Neural-ES} keeps an ensemble of models and updates the perturbations incrementally instead of resampling all perturbations at each round.
While this design could consume more memory, different models can be updated in parallel if we distribute the ensemble in $m$ different machines. Since the computational cost of updating one model is reduced compared to \texttt{NPR}, our algorithm can accelerate the overall learning process in this parallelized setting.
\end{remark}

\section{Theoretical Analysis}
In this section, we provide theoretical guarantees of the proposed algorithms.

\subsection{Regret Bound of GLM-ES}

To analyze \texttt{GLM-ES}, we first lay down the following  assumptions commonly used in the literature or generalized linear bandits.
The assumption on the derivative of link function $\mu(\cdot)$ is standard in the GLM setting \citep{li2017provably,kveton2020randomized}, while the $M$-self-concordant assumption is recently proposed and applies to a broad class of functions \citep{liu2024almost}.

\begin{assumption}[Link function in GLM]
\label{assum:mu}
The link function $\mu (\cdot)$ is strictly increasing and the derivative of $\mu (\cdot)$ is bounded as follows: $\dot{\mu}(s) > 0,\forall s \in \mathbb{R}$ and $0 < \dot{\mu}_{\text{min}} \leq \dot{\mu}(s) \leq \dot{\mu}_{\text{max}}$.
\end{assumption}

\begin{assumption} [$M$-Self-concordant]
\label{assum:M_self_concordant}
    The link function $\mu (\cdot)$ is $M$-self-concordant with a constant $M>0$ known to the agent: $|\ddot{\mu}(u)| \leq M \dot{\mu}(u), \forall u \in \mathbb{R}$.
\end{assumption}

\begin{remark}[\icml{$M$-self-concordant assumption}]
    We adopt the $M$-self-concordant assumption as a mild, notation-simplifying condition for exponential family models. In fact, for common reward distributions (Gaussian, exponential, Poisson, and Beta), it have been shown that \Cref{assum:M_self_concordant} automatically holds \citep[Table 3.1]{liu2023glm}.
    {This assumption is mainly used to bridge $\dot{\mu}(X^{\top}\theta)$ and $\dot{\mu}(X^{\top}\theta')$ for $(\theta, \theta')$ pairs, which is crucial in our theoretical analysis.}
    This assumption has also been used by \citet[Theorem 2]{janz2024exploration}.
\end{remark}

\begin{remark}[Removal of the regularity assumption]
We use $\lambda$-regularized negative log-likelihood in \eqref{equ:regularized_negative_log_likelihood}. This is used to remove the additional regularity assumptions in existing papers \citep{li2017provably, kveton2020randomized}: (i) there exists a constant $\sigma_0>0$ such that $\lambda_{\min }\big(\mathbb{E}[\frac{1}{K} \sum_{a \in[K]} X_{t, a} X_{t, a}^{\prime}]\big) \geq \sigma_0^2$ for all $t$, and (ii) arm context vectors $\{X_{t, a} | a \in[K]\} \subset \mathbb{R}^d$ are i.i.d. drawn. These two assumptions are usually used to guarantee $V_t$ is invertible. By deploying the $\lambda$-regularized negative log-likelihood in \eqref{equ:regularized_negative_log_likelihood}, our analysis does not require these assumptions.
\end{remark}

We present the frequentist high-probability regret bound of \texttt{GLM-ES} as follows. The complete proof and exact expression of the regret bound is presented in \Cref{appendix:glm_es_bound}.
\begin{theorem} [Regret Bound for \texttt{GLM-ES}]  \label{theorem:glm-es}
    Fix $\delta \in (0, 1]$. Assume $| \mathcal{X} | = K < \infty$ and run \texttt{GLM-ES} with regularization parameter
    $\lambda = 1 \vee d \vee \log(1/\delta)$, ensemble size $m = \Omega \big(K \text{log} T + \log 1/\delta\big)$, perturbation distribution $\mathcal{P}_{R} = \mathcal{N}(0, \sigma_{R}^{2})$ where {$\sigma_{R} = \Theta\big((d\log T)^{1/2}\big)$, warm-up rounds $\tau = \widetilde{\Theta}\big(d^{4}\big)$}.
    Then, with probability at least $1 - 4\delta$, the cumulative regret of \texttt{GLM-ES} is bounded by
    \begin{align}  \label{equ:regret_bound_bigO}
        R(T) = \widetilde{\mathcal{O}}\big(d^{3/2} \sqrt{T} + d^{4}\big).
    \end{align}
\end{theorem}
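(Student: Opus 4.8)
The plan is to split the horizon into the warm-up phase (rounds $1,\dots,\tau$) and the main phase (rounds $\tau+1,\dots,T$), and to bound their contributions to $R(T)$ separately. In the warm-up phase the rewards are bounded, so the regret is at most $O(\tau)$; the whole job there is to choose $\tau$ just large enough that the G-optimal design warm-up forces the regularized GLM design matrix $V_t = \lambda I + \sum_{l\le t}\dot\mu(X_l^\top\theta^*)X_lX_l^\top$ to have a minimum eigenvalue above the threshold required by the later concentration and optimism arguments (\Cref{lem:warm_up_procedure}); I expect $\tau = \widetilde{O}(d^{9/2})$, which produces the additive $d^{9/2}$ term. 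For the main phase I would use the standard randomized-exploration decomposition of the instantaneous regret, writing $h(X_t^*)-h(X_t)$ as $[h(X_t^*)-f_t(X_t^*)]+[f_t(X_t^*)-f_t(X_t)]+[f_t(X_t)-h(X_t)]$, where the middle bracket is $\le 0$ by the greedy choice $X_t=\argmax_X f_t(X)$, the last bracket is controlled by concentration of the chosen estimator, and the first bracket is made $\le 0$ on an optimism event.

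The first technical pillar is a per-model concentration bound: on a high-probability event I want $|X^\top(\theta_t^j-\theta^*)|\le \beta_t\|X\|_{V_t^{-1}}$ for all arms $X$, all $j\in[m]$, and all $t>\tau$, with $\beta_t = \widetilde{O}(d)$. Since $\theta_t^j$ solves a nonlinear stationarity equation rather than a least-squares normal equation, I would not argue directly; instead I would invoke the $M$-self-concordance assumption through the lemmas bounding $H(\theta;\mathcal{D}_t)$ by $Q(\theta,\theta';\mathcal{D}_t)$ (\Cref{lem:bounds_of_alpha,lem:upper_bound_of_H}) to reduce the estimation error to the size of the perturbed gradient $\nabla L_{\text{GLM}}(\theta^*)=-\sum_{l\le t}(\eta_l+Z_l^j)X_l+\lambda\theta^*$ in the $V_t^{-1}$ norm. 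This gradient splits into a sub-Gaussian noise part, handled by a self-normalized martingale concentration inequality contributing $\widetilde{O}(\sqrt d)$, and a Gaussian perturbation part of scale $\sigma_R$ contributing $\widetilde{O}(\sigma_R)=\widetilde{O}(d)$; the perturbation term dominates and fixes $\beta_t=\widetilde{O}(d)$.

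The second pillar is optimism, which is where the ensemble size enters. Conditioned on the history, the $m$ perturbations are independent, and Gaussian anti-concentration guarantees that a single perturbed model is optimistic for a fixed arm, i.e. $X^\top\theta_t^j \ge X^\top\theta^*$, with at least constant probability, provided $\sigma_R=\Theta(d\log T)$ is large enough to overcome the $\widetilde{O}(\sqrt d)$ estimation bias. Taking $m=\Omega(K\log T)$ and union-bounding over the $K$ arms and $T$ rounds then ensures, with high probability over the whole run, that at every round at least one ensemble member is optimistic for $X_t^*$, so the first bracket is $\le 0$. Under the intersection of the concentration and optimism events the instantaneous regret is at most $\dot\mu_{\max}\beta_t\|X_t\|_{V_t^{-1}}$, the Lipschitz factor $\dot\mu_{\max}$ arising from the mean-value form $f_t(X_t)-h(X_t)=\dot\mu(\xi)X_t^\top(\theta_t^{j_t}-\theta^*)$. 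Summing with Cauchy--Schwarz and the elliptical-potential (log-determinant) lemma gives $\sum_{t>\tau}\beta_t\|X_t\|_{V_t^{-1}}\le \beta_T\sqrt{T\sum_t\|X_t\|_{V_t^{-1}}^2}=\widetilde{O}(d)\cdot\widetilde{O}(\sqrt{dT})=\widetilde{O}(d^{3/2}\sqrt T)$; adding the warm-up cost yields $R(T)=\widetilde{O}(d^{3/2}\sqrt T + d^{9/2})$, and tracking the failure probabilities of the concentration, optimism, and warm-up events produces the claimed $1-4\delta$.

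I expect the main obstacle to be the coupling between the nonlinear estimation error and the optimism argument. Unlike the linear case, the width $\beta_t$, the anti-concentration of $X^\top(\theta_t^j-\theta^*)$, and the validity of the local quadratic surrogate all hinge on controlling the Hessian $H(\theta;\mathcal{D}_t)$ uniformly over a neighborhood of $\theta^*$, which is precisely what the self-concordance assumption together with the warm-up eigenvalue lower bound are there to secure; making the warm-up threshold tight enough to give $d^{9/2}$ rather than $d^9$ while still certifying both concentration and constant-probability optimism is the delicate step. A secondary difficulty is the adaptivity of the data, since $V_t$ depends on past random model choices, which forces the concentration and anti-concentration statements to be established conditionally on the filtration and then combined by careful union bounds over the ensemble and the horizon.
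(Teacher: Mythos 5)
Your proposal has the right skeleton for the warm-up phase, the self-concordance-based concentration, and the elliptical-potential summation, but it breaks at the optimism step, and the break is not repairable within your decomposition. You claim that with $m=\Omega(K\log T)$ one can union-bound so that ``at every round at least one ensemble member is optimistic for $X_t^*$, so the first bracket is $\le 0$.'' The first bracket $h(X_t^*)-f_t(X_t^*)$ is evaluated at the \emph{chosen} model $j_t$, which is drawn uniformly from $[m]$; the existence of some optimistic member elsewhere in the ensemble says nothing about $\theta_t^{j_t}$. Worse, per-round optimism of the chosen model with high probability is structurally impossible here: the perturbations $Z_l^j$ are zero-mean Gaussians, so a single model's perturbation term $U_t^\top \mathbf{Z}_t^j$ is negative with probability about $1/2$ no matter how large $\sigma_R$ is, and Gaussian anti-concentration only yields a \emph{constant} per-round optimism probability (the paper's $p_N/4$ in \Cref{lemma:optimism}). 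Consequently the chosen model fails to be optimistic at a constant fraction of rounds, and in your decomposition the first bracket is positive and uncontrolled at all of those rounds, so the regret bound does not follow.

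What is missing is precisely the mechanism the paper uses to convert constant-probability optimism into a regret bound. The paper decomposes the instantaneous regret as $I_1+I_2$ with $I_1=\mu(x^{*\top}\theta^*)-\mu(X_t^\top\theta_t)$, introduces the pessimistic comparator $\theta_t^-$ and the function $g_t(\theta,\mathcal{E}')=(J(\theta)-J(\theta_t^-))\ind\{\mathcal{E}'\}$ from \Cref{lemma:gt_define}, and applies Markov's inequality conditionally on $\mathcal{F}_{t-1}$: since optimism occurs with conditional probability at least $p_N/4$ (an event certified uniformly over $t$ by the Azuma--Hoeffding-plus-$T^K$-permutation-counting argument behind the choice $m=\Omega(K\log T)$, not by a naive union bound over arms and rounds), one gets $I_1\le (4/p_N)\,\mathbb{E}\big[2\gamma\dot\mu_{\max}\|X_t\|_{H_{t-1}^{-1}}\,\big|\,\mathcal{F}_{t-1}\big]$, i.e., the regret at non-optimistic rounds is charged to the \emph{expected} exploration bonus rather than set to zero. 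A final Azuma--Hoeffding step replaces $\sum_t\mathbb{E}[\|X_t\|_{H_{t-1}^{-1}}\mid\mathcal{F}_{t-1}]$ by $\sum_t\|X_t\|_{H_{t-1}^{-1}}$ (this accounts for one of the four $\delta$'s in the failure probability). Without this expectation/Markov argument (or an equivalent saturated-arm technique \`a la Agrawal--Goyal), your three-term decomposition cannot close. A secondary, smaller inaccuracy: your width accounting takes the perturbation contribution to be $\widetilde{\mathcal{O}}(\sigma_R)$, whereas the self-normalized bound on $\|\sum_l Z_l^j X_l\|_{H_t^{-1}}$ carries an extra $\widetilde{\gamma}_T=\widetilde{\mathcal{O}}(\sqrt{d})$ factor, as in \Cref{lemma:concentration_pert}.
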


\begin{remark}
When we have $T = \Omega\big(d^5\big)$, the regret bound of \texttt{GLM-ES} becomes $\widetilde{\mathcal{O}}(d^{3/2} \sqrt{T})$, which matches the result of \texttt{GLM-TSL}, $\texttt{GLM-FPL}$ \citep{kveton2020randomized} and \texttt{EVILL} \citep{janz2024exploration}, %
achieving a state-of-the-art theoretical guarantee for randomized exploration algorithms in the GLM setting. Compared with \citet{liu2023glm}, we also improve the number of rounds of warm-up from $d^{9}$ to $d^{4}$. We also develop novel analysis techniques to avoid the adapted perturbation requirement as in \cite{liu2023glm}, where the distribution of perturbation changes adaptively in each round. This makes our algorithm more efficient and easier to implement. 
\end{remark}

\subsection{Regret Bound of Neural-ES}

For the regret analysis of \texttt{Neural-ES}, we first introduce the following notations.
A detailed introduction to neural tangent kernel is provided by \citet{jacot2018neural}, here we only list the important notations for our theoretical results.
We use $\mathbf{H}$ to denote the neural tangent kernel (NTK) matrix defined on the context set $\mathcal{X}$ and $\mathbf{h} = ( h(X_{1}), ..., h(X_{K}) )$. Then, we use notation $S_{\text{Neural}}$ as the upper bound of $\sqrt{2\mathbf{h}^{\top}\mathbf{H}^{-1}\mathbf{h}}$.
The effective dimension $\widetilde{d}$ of the NTK matrix is defined as
\begin{align}
    \label{equ:effective_dim_def}
    \widetilde{d} = \frac{\text{log} \, \text{det}(I + TH / \lambda)}{\text{log} (1 + TK/\lambda)}.
\end{align}

We need the following sub-Gaussian assumption for the random noise in the reward observations.
\begin{assumption}[Sub-Gaussian random noise]
\label{assumption:sub_gaussian}
{
Given feature vector $X \in \RR^{d}$, the reward $Y_{t}$ is generated by $Y_{t} = h(X_{t}) + \eta_{t}$. There exists $\sigma > 0$ such that $\eta_{t}$ is $\mathcal{F}_{t}^{-}$-conditionally $\sigma$-sub-Gaussian: $\forall t \in [T]$, we have $\mathbb{E} [\exp(s\eta_{t}) | \mathcal{F}_{t}^{-}] \leq \exp(\sigma^{2}s^{2}/2) $
holds almost surely for all $s \in \mathbb{R}$.
}
\end{assumption}
We also need the following common assumption on the bound of the NTK matrix and arm set.
\begin{assumption}
    \label{assumption:neural_x}
    {
    We use $\mathbf{H}$ to denote the neural tangent kernel (NTK) matrix on the context set. We assume that $\mathbf{H} \succeq \lambda_{0} I$.
    Moreover, assume that for any $X \in \mathcal{X}$, $||X||_{2} \leq 1$ and $[X]_{j} = [X]_{j+d/2}$.
    }
\end{assumption}
\begin{remark}
    {
    \Cref{assumption:neural_x} is a mild and widely adopted assumption (such as in \texttt{Neural-PHE} \citep{jia2022learning} and \texttt{Neural-UCB} \citep{zhou2020neural}). For any context $X$ with $||X||_{2} \leq 1$, we can construct $X' = [X^{\top}, X^{\top}]^{\top} / \sqrt{2}$ to satisfy this condition, with $d \rightarrow 2d$ for the system.
    }
\end{remark}
Additionally, we need the following assumption for space coverage.
\begin{assumption}
    \label{assumption:coverage}
    {
    Define the subspace $\mathcal{S} = \text{span} \{g(X; \theta_{0})/\sqrt{N}: X \in \mathcal{X}\} \subseteq \mathbb{R}^{d'}$, where $g(X; \theta_{0}) := \nabla_{\theta} f(X; \theta_{0})$. We assume that $\sum_{X \in \mathcal{X}} g(X; \theta_{0})g(X; \theta_{0})^{\top} / N \succeq \rho I_{\mathcal{S}}$, where $\rho > 0$ is a constant.
    }
\end{assumption}

Now we present the frequentist high-probability regret bound for \texttt{Neural-ES} as follows. The complete proof and exact expression of the regret bound is presented in \Cref{appendix:neural_es_bound}.
\begin{theorem} [Regret Bound for Neural-ES] \label{theorem:neural-es}
    Fix $\delta \in (0, 1]$.
    Let $\widetilde{d}$ be the effective dimension of the neural tangent kernel matrix. %
    Assume $| \mathcal{X} | = K < \infty$ and run \texttt{Neural-ES} with ensemble size $m = \Omega \big( K \text{log} T + \log 1/\delta \big)$, regularization parameter $\lambda \geq \text{max}\{1, S_{\text{Neural}}^{-2}\}$, perturbation distribution $\mathcal{P}_{R} = \mathcal{N}(0, \sigma_{R}^{2})$ with $\sigma_{R} = \alpha_{T} \sqrt{(\lambda + \rho)/\rho}$, $\alpha_{T} = \sigma \sqrt{\widetilde{d} \log \Big(1 + TL^{2}/\lambda \widetilde{d}\Big)} + \sqrt{\lambda} S_{\text{Neural}}$.
    Set learning rate $\eta = C_{1} (NTL + N\lambda)^{-1}$, neural network width $N[\text{log} N]^{-3} \geq C_{2} \, \text{max} \big\{ TL^{12}\lambda^{-1}, T^{7}\lambda^{-8}L^{18}(\lambda + LT)^{6}, L^{21}T^{7}\lambda^{-7}(1 + \sqrt{T/\lambda})^{6} \big\}$, where $C_{1}, C_{2} > 0$ are constants.
    Then, with probability at least $1 - 4\delta$, the cumulative regret of \texttt{Neural-ES} is bounded by
    \begin{align}  \label{equ:regret_bound_neural_bigO}
        R(T) = \widetilde{\mathcal{O}}\big(\widetilde{d}^{3/2} \sqrt{T}\big).
    \end{align}
\end{theorem}

\begin{remark}
    {
    To the best of our knowledge, this is the first high-probability regret bound for ensemble sampling in neural contextual bandit setting.
    Comparing with \texttt{Neural-TS} \citep{zhang2021neural} and \texttt{Neural-UCB} \citep{zhou2020neural} which achieved $\widetilde{\mathcal{O}}\big(\widetilde{d} \sqrt{T}\big)$ high-probability regret bound, we acknowledge that our result has an extra $\widetilde{d}^{1/2}$ factor and is less sharp.
    The extra $\widetilde{d}^{1/2}$ factor comes from the fact that our perturbations are memorized rather than freshly sampled at each round, resulting in less tight concentration bound of perturbations. Note that we obtained the same dependency on $d$ in linear and generalized linear bandit settings, indicating that this is a common issue for ensemble sampling theoretical analysis.
    We leave whether the $\widetilde{\mathcal{O}}\big(\widetilde{d} \sqrt{T}\big)$ high probability regret bound could be achieved in ensemble sampling as an open question for future work.
    }
\end{remark}
\begin{remark}
    {
    Compared to prior works in the neural contextual bandit literature, \texttt{Neural-ES} presents unique practical advantages and better scalability.
    Compared to \texttt{Neural-PHE}, the sampling cost per model is reduced from $\Theta(T^{2})$ to $\Theta(T)$ because we do not resample all perturbations in each round.
    Compared to \texttt{Neural-TS} and \texttt{Neural-UCB}, we do not need to construct high-probability confidence sets for exploration, which is difficult to implement and typically involves very high computational cost.
    }
\end{remark}

\section{Extension to Anytime Algorithms}

In this section, we present how to use doubling trick \citep{besson2018doubling} to extend ensemble sampling into anytime algorithms while keeping the asymptotic behavior of regret bound.
Here we present the algorithm design and theoretical guarantee of anytime versions of ensemble sampling, the analysis of regret bound are presented in \Cref{appendix:doubling_trick}.

To apply doubling trick, we choose a sequence of time steps $\{T_{i}\} = \{T_{0},\, T_{1},\, T_{2}, \, ...\}$
and fully restart the original (non-anytime) algorithm when we reach $t = T_{i}+1$.
Therefore, after each reset, the algorithm runs from $T_{i} + 1$ until $\text{min} \, \{T_{i+1},\, T\}$, and we can initialize the $T$-dependent parameters for $\tau_{i} = T_{i+1} - T_{i}$ rounds.
The number of rounds follows the sequence $\{\tau_{i}\} = \{T_{0},\, T_{1} - T_{0},\, T_{2} - T_{1}, \, ...\}$.
The doubling trick approach treats the original non-anytime algorithm as a black box, thus we can easily extend both \texttt{GLM-ES} and \texttt{Neural-ES} into anytime algorithms using the same method.

Our main theory of anytime versions of ensemble sampling is as follows.
\begin{theorem} [Regret Bound of Doubling Trick]
    \label{theorem:doubling-trick}
    \icml{
    Set sequence $T_{i} = \lfloor T_{0} b^{i} \rfloor$, where $T_{0} \geq 100$ and $b = (3 + \sqrt{5})/2 \approx 2.6$.
    Use $\mathcal{R}(T, \delta)$ to denote the regret bound of a non-anytime algorithm with $\mathcal{R}(T, \delta) = \mathcal{O}\big(\sqrt{T} (\text{log}T)^{3/2}\big)$, and $R^{DT}(T, \delta)$ to denote its corresponding anytime version using doubling trick.
    Then, by applying doubling trick, the cumulative regret is bounded by 
    }
    \begin{align*}
        \mathcal{R}^{DT} (T, \delta) \leq \, 3.3 \, \mathcal{R} (T, \delta).
    \end{align*}
\end{theorem}

{
Applying \Cref{theorem:doubling-trick}, when the total round is $T$ (unknown to the agent), the anytime version of \texttt{GLM-ES} and \texttt{Neural-ES} achieves the same order of regret bound with probability at least $1 - \delta \, \text{log}T$.
}

\begin{remark}
    By directly applying doubling trick, we obtain the same asymptotic behavior of regret bound. By properly setting the sequence $\{T_{i}\}$, extending ensemble sampling to anytime algorithm comes with the cost of only a constant factor. This constant factor is determined by the parameters $T_{0}$ and $b$, the choice of these parameters can considerably affect the empirical performance. We provide more comprehensive discussions on how to set $T_{0}$ and $b$ in \Cref{appendix:experiments}.
\end{remark}
\begin{remark}
    {
    Recall that in \texttt{GLM-ES} and \texttt{Neural-ES}, the requirement for ensemble size is $m = \Omega \big(K \log T + \log 1/\delta\big)$. When applying doubling trick, on average, $T_{i+1} - T_{i} = T(b-1)/b$, thus the requirement for ensemble size after the last resetting is $K \log T - K \log \big(b/(b-1)\big)$. Compared to the original ensemble sampling algorithm, the anytime extension requires $K \log \big(b/(b-1)\big)$ less models but incurs higher regret by a constant factor.
    }
\end{remark}

\section{Experiments}  \label{section:exp}

We conduct experiments to demonstrate the practicality of ensemble sampling and its anytime variants in bandit settings.
We use \icml{synthetic} linear bandit environment to test \texttt{Lin-ES}, logistic bandit to test \texttt{GLM-ES}, then distance bandit and quadratic form bandit to test \texttt{Neural-ES}.
\icml{Additional experiments on real-world datasets and performance sensitivity with respect to hyper-parameter setups are included in \Cref{appendix:experiments}.}
Our result shows that ensemble sampling can \icml{achieve} competitive cumulative regret with reduced computational cost.

\begin{figure*}[t]
\centering
    \subfigure[Linear Bandit.]{\includegraphics[width=.32\textwidth]{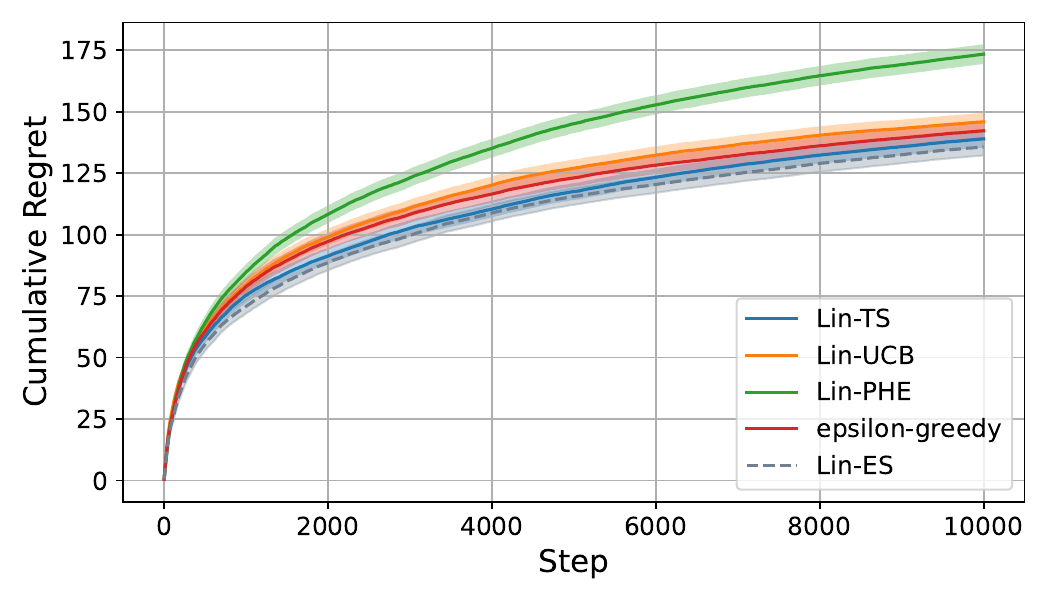}\label{fig:linear_bandit}}
    \subfigure[Logistic Bandit.]{\includegraphics[width=.32\textwidth]{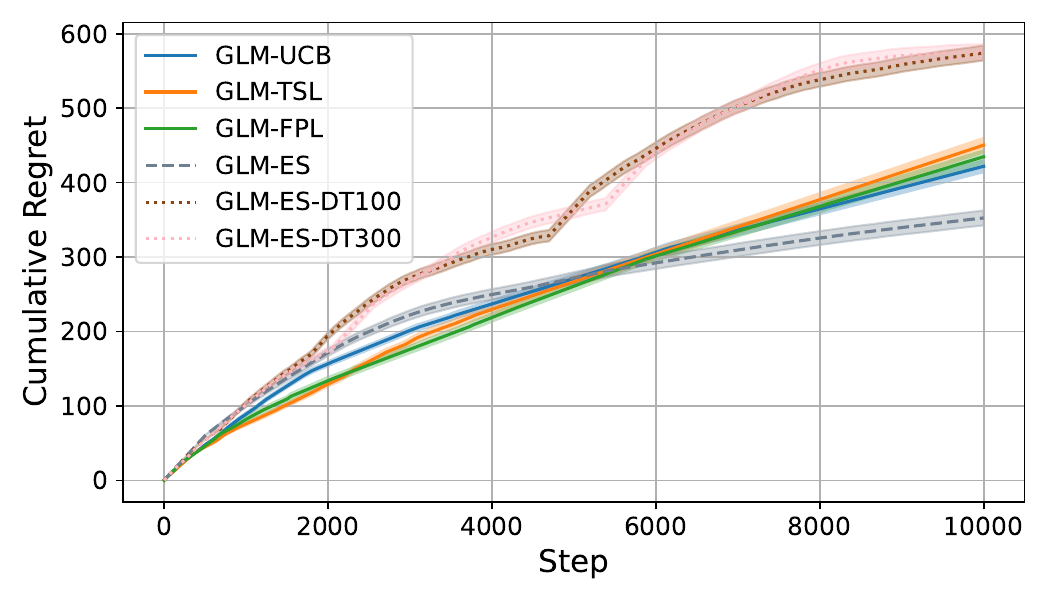}\label{fig:logistic_bandit}} 
    \subfigure[Runtime comparison for logistic bandit.]{\includegraphics[width=.32\textwidth]{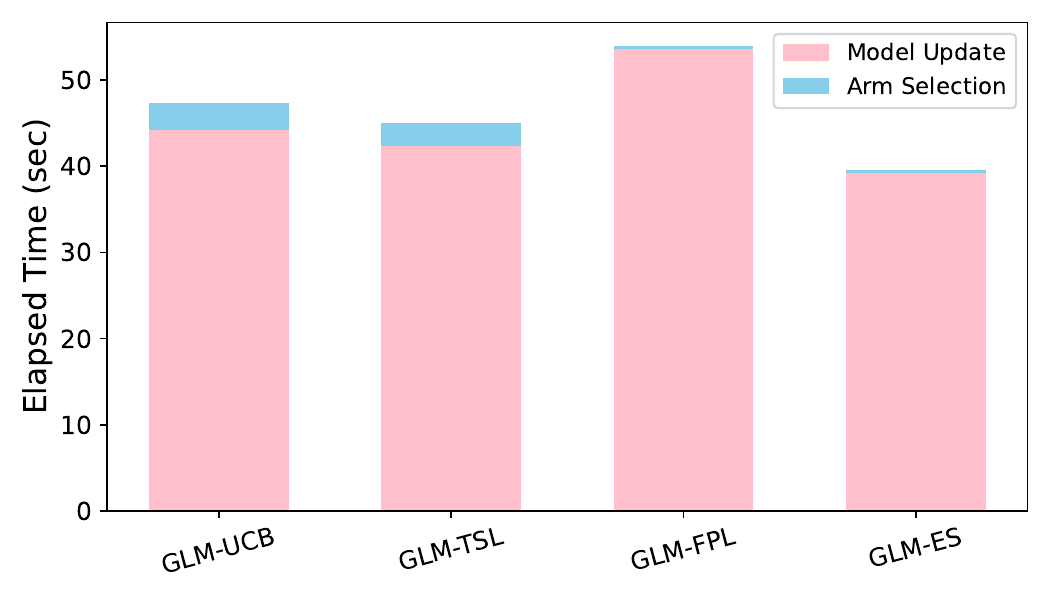}\label{fig:runtime_log}} 
    \subfigure[Distance Bandit.]{\includegraphics[width=.32\textwidth]{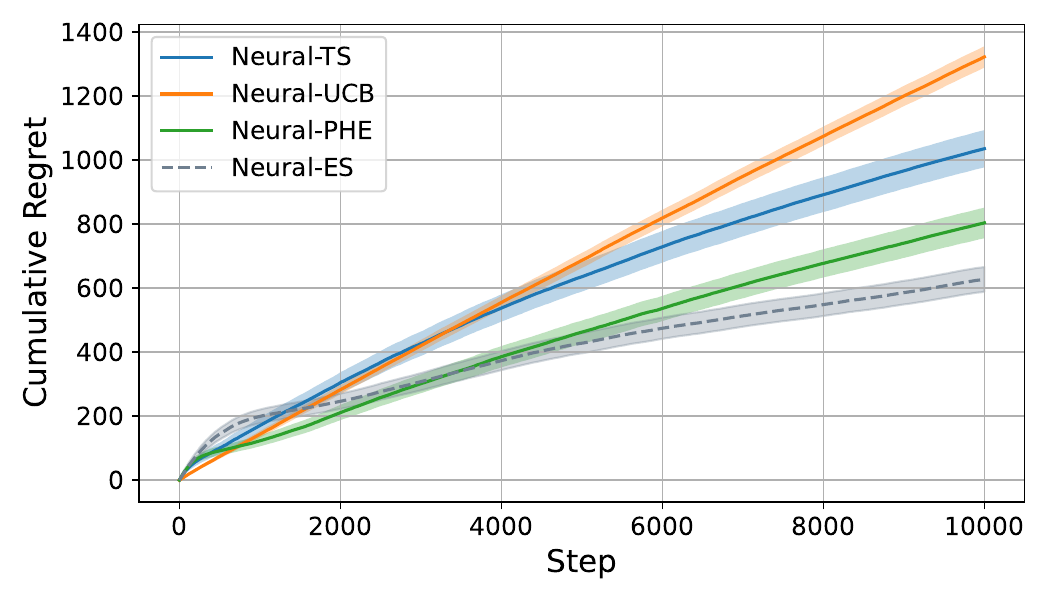}\label{fig:distance_bandit}}
    \subfigure[Quadratic Bandit.]{\includegraphics[width=.32\textwidth]{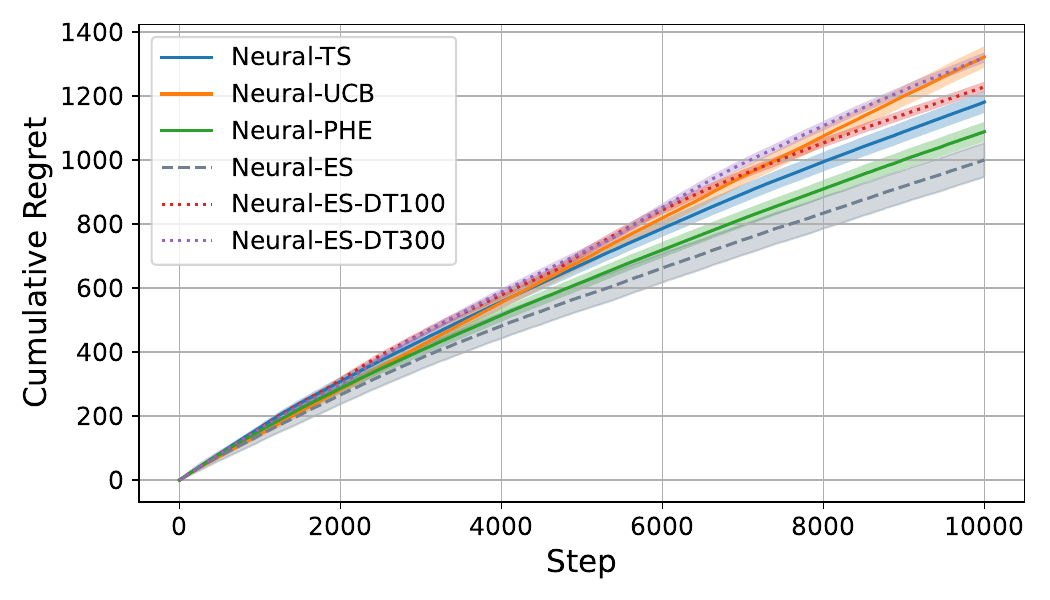}\label{fig:quadratic_bandit}}
    \subfigure[Runtime comparison for quadratic bandit.]{\includegraphics[width=.32\textwidth]{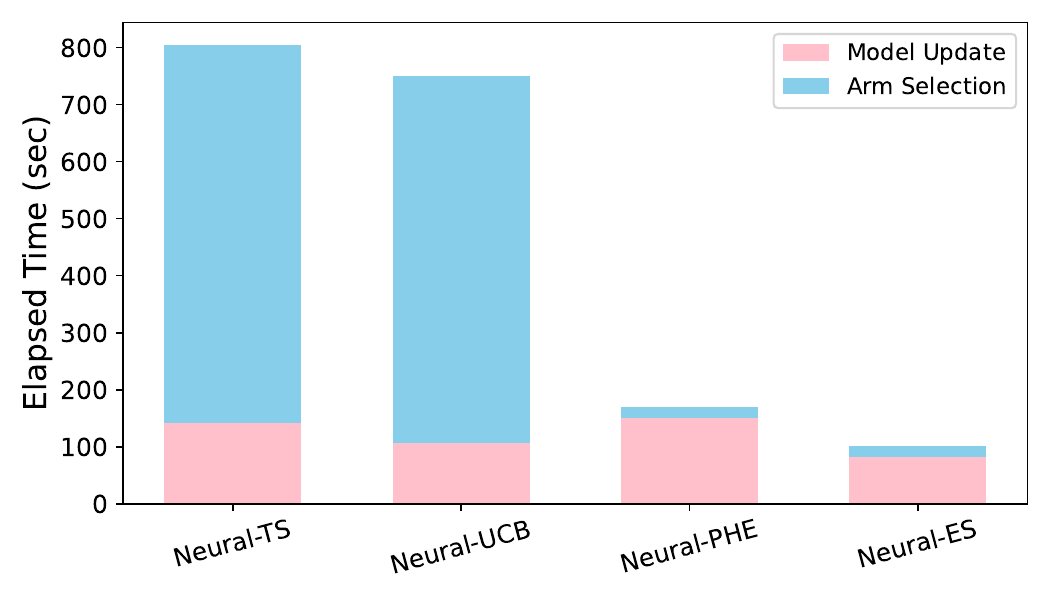}\label{fig:runtime_neural}} 
    \caption{Experiment results in various \icml{synthetic contextual} bandit settings. \icml{The results are averaged over 50 repetitions. The shaded area represents the standard deviation of the cumulative regret.} %
    \label{fig:bandit}}
\end{figure*}

\subsection{Environment Setup}

\textbf{Linear Bandit}
We assess the empirical performance of \texttt{Lin-ES} using linear bandit environment, where the reward is generated by $Y_{t} = X_{t}^{\top} \theta^{*} + \eta_{t}$.
Specifically, we set the number of arms $K = 50$, dimension of feature vector $d = 20$ and total steps $T = 10^{4}$.
The noise of reward are generated from Gaussian distribution $\eta_{t} \sim \mathcal{N}(0, \sigma^{2})$ with $\sigma = 0.5$.

\textbf{Generalized Linear Bandit}
We assess the empirical performance of \texttt{GLM-ES} using logistic bandit environment, where the link function is given by $Y_{t} = 1 / \big(1 + \text{exp} (-X_{t}^{\top}\theta^{*})\big) + \eta_{t}$.
We set the number of arms $K = 50$, dimension of feature vector $d = 20$, total steps $T = 10^{4}$. The noise of reward are generated from Gaussian distribution $\eta_{t} \sim \mathcal{N}(0, \sigma^{2})$ with $\sigma = 0.5$.

\textbf{Neural Bandit}
We test \texttt{Neural-ES} on two nonlinear reward models: (\icml{i}) distance bandit: $h_{1}(X) = - ||X - \theta^{*}||_{2}$; and (\icml{ii}) quadratic bandit: $h_{2}(X) = 10^{-2} ( X^{\top}AA^{\top}X )$,
where $A$ is a $d \times d$ matrix with each entry randomly sampled from $\mathcal{N}(0, 1)$. In the experiments, we set number of arms $K = 50$, dimension of feature vector $d = 20$ and total steps $T = 10^{4}$. The noise of reward are generated from Gaussian distribution $\eta_{t} \sim \mathcal{N}(0, \sigma^{2})$ with $\sigma = 0.5$.

\subsection{Implementations of Algorithms}

\paragraph{Lin-ES}
In the \texttt{Lin-ES} algorithm, we set ensemble size $m = 25$, regularization $\lambda = 1.0$, and perturbation distribution $\mathcal{N}(0, \sigma_{R}^{2})$ with $\sigma_{R} = 0.1$.
There is no warm-up procedure in \texttt{Lin-ES}.

\paragraph{GLM-ES}
We implement \texttt{GLM-ES} assuming that the link function $\mu(\cdot)$ is known to the agent. The parameter $\theta$ is learned through gradient descent of the $\lambda$-regularized negative likelihood.
Specifically, we use 100 iterations of gradient descent with step size 0.01.
In the non-doubling-trick experiment, we set ensemble size $m = 10$, perturbation distribution $\mathcal{N}(0, \sigma_{R}^{2})$ with $\sigma_{R} = 0.1$, warm-up steps $\tau = 500$ and regularization $\lambda = 1.0$.
In the experiment with doubling trick, for number of rounds $\tau_{i}$, we set ensemble size $m = 2 \times \text{log}\tau_{i}$, perturbation distribution $\mathcal{N}(0, \sigma_{R}^{2})$ with $\sigma_{R} = 0.02 \times \text{log}\tau_{i}$, warm-up steps $\tau = 500$ and regularization $\lambda = 1.0$.

\paragraph{Neural-ES}
We implement the fully connected neural network $f(X; \theta)$ using $L=3$ layers, the width is set to $N = 20$ for each layer and we use ReLU as the activation function. The network structure is the same for \texttt{Neural-ES}, \texttt{Neural-TS}, \texttt{Neural-UCB} and \texttt{Neural-PHE}.
We optimize the loss function using gradient descent with 100 steps and learning rate 0.01.
Similar to \texttt{GLM-ES}, in the non-doubling-trick experiment, we set ensemble size $m = 10$, perturbation distribution $\mathcal{N}(0, \sigma_{R}^{2})$ with $\sigma_{R} = 0.1$, warm-up steps $\tau = 50$ and regularization $\lambda = 1.0$.
In the experiment with doubling trick, for number of rounds $\tau_{i}$, we set ensemble size $m = 2 \times \text{log}\tau_{i}$, perturbation distribution $\mathcal{N}(0, \sigma_{R}^{2})$ with $\sigma_{R} = 0.02 \times \text{log}\tau_{i}$, warm-up steps $\tau = 50$ and regularization $\lambda = 1.0$.

\subsection{Results and Discussions}

The empirical results for synthetic contextual bandits are plotted in \Cref{fig:bandit}.
The cumulative regret results of ensemble sampling are plotted in gray dashed lines, the results of ensemble sampling with doubling trick are plotted in dotted lines. Specifically, ``DT100'' and ``DT300'' indicate that we choose $T_{0}=100$ and $T_{0}=300$ in doubling trick, respectively.
We set $b = (3 + \sqrt{5})/2 \approx 2.6$ for all doubling trick simulations.
\icml{The $m$ models in ensemble sampling algorithms are updated in parallel.}
All numerical results are averaged over 50 problem instances.
{For fair comparison, all linear/generalized linear bandit algorithm evaluations are conducted on two Intel Xeon Platinum 8375C CPUs with 32GB memory, all neural bandit algorithm evaluations are conducted on an NVIDIA T4 GPU.  The maximum time budget for each simulation is 10,000 seconds.}

According to simulation results, the cumulative regret of ensemble sampling is competitive compared to baseline algorithms in both linear and nonlinear bandit settings.
\icml{From \Cref{fig:runtime_neural}, comparing with \texttt{Neural-TS} and \texttt{Neural-UCB}, the perturbed-history-based \texttt{Neural-PHE} and \texttt{Neural-ES} involve much less computational cost at inference (arm selection) due to the fact that we do not need to construct confidence sets for the reward of each arm. The computational cost of \texttt{Neural-ES} at inference is similar to \texttt{Neural-PHE}, while the model update is considerably faster because perturbations are kept in history and the model is updated incrementally.}
Additionally, the doubling trick variant of ensemble sampling demonstrates its practicality by \icml{achieving} competitive cumulative regret while removing the requirement of $T$ in prior.

Overall, our experiment result \icml{demonstrates} that ensemble sampling is practical in both linear and nonlinear bandit settings, its cumulative regret is similar to or outperforms baseline models in different bandit environments.
\icml{Our additional experiments in \Cref{appendix:experiments} further verifies that ensemble sampling-based algorithms are competitive in cumulative regret and computational efficiency in real-world and more complex environments.}

\section{Conclusion}

In this work, we studied ensemble sampling in  nonlinear contextual bandit settings. We proposed a general framework of algorithm design for ensemble sampling in bandit problems, then discussed two realizations \texttt{GLM-ES} and \texttt{Neural-ES} for generalized linear bandit and neural bandit settings, respectively.
We proved high-probability regret bound $\widetilde{\mathcal{O}}(d^{3/2} \sqrt{T} + d^{4})$ for \texttt{GLM-ES} and $\widetilde{\mathcal{O}}(\widetilde{d}^{3/2} \sqrt{T})$ for \texttt{Neural-ES}. To the best of our knowledge, these are the first high-probability regret bounds for ensemble sampling in nonlinear bandit settings. The regret bound of \texttt{GLM-ES} matches the state-of-the-art result of randomized exploration algorithms in generalized linear bandit setup. We used synthetic bandit environments and real-world datasets to evaluate the performance of the proposed algorithms in terms of cumulative regret and computational cost. The empirical results demonstrate that ensemble sampling and its anytime variants can achieve competitive cumulative regret with considerably reduced computational cost. Our work establishes ensemble sampling as a provable and practical algorithm framework in bandit problems.

\appendix

\section{Proof of Regret Bound of GLM-ES}  \label{appendix:glm_es_bound}

{In this section, we prove the claimed high-probability regret bound for \texttt{GLM-ES} \Cref{theorem:glm-es}. The structure of the proof is as follows. In \Cref{appendix_sub:glm_es_preliminary}, we introduce the notations used throughout the analysis, as well as useful technical lemmas obtained directly from the generalized linear model setup and \Cref{assumption:exponential_family}, \Cref{assum:mu}, \Cref{assum:M_self_concordant}. In \Cref{appendix_sub:glm_es_lemmas}, we list the main technical lemmas for concentration and optimism, which are required for the proof of the regret bound. The proof of these technical lemmas are provided in \Cref{appendix:glm_es_lemmas}. With these technical lemmas, in \Cref{appendix_sub:glm_es_proof}, we present the proof of the high-probability regret bound \Cref{theorem:glm-es}.}

\subsection{Preliminary Analysis}  \label{appendix_sub:glm_es_preliminary}

In this section, we present basic technical lemmas directly obtained from the definition and assumptions of the GLM setting.
The technical lemmas and notations introduced in this section are fundamental and will be extensively utilized in the following analysis.
The discussion of link function and negative log-likelihood function is standard in the generalized linear bandit literature. We also {adopt} the definitions and technical lemmas of the secant approximation discussed in \cite{janz2024exploration} in our analysis. {Our analysis of \texttt{GLM-ES} builds on these previous works, we list the definitions and technical lemmas of the secant approximation here to make our proof self-contained.} The introduction of the auxiliary matrix \eqref{equ:gt_definition} is novel to our proof and is the key to remove the requirement for adaptive perturbations.

In the GLM setting with exponential family distribution \Cref{assumption:exponential_family}, given feature vector $X \in \RR^{d}$, the conditional distribution of reward {$Y \in \mathbb{R}$} is given by
\begin{align}
    \label{equ:exponential_family}
    p_{\theta}(Y \vert X) =
    b_{0}(Y) \, \text{exp} \big[Y\cdot X^{\top}\theta - b(X^{\top}\theta)\big],
\end{align}
where $\theta \in \RR^{d}$ is a fixed parameter for a given model.
We define the link function $\mu (\cdot)$ as
\begin{align*}
    \mu (\cdot) := \dot{b} (\cdot),
\end{align*}
the conditional expected value of $Y$ is given by {$\mathbb{E}_{\theta}\big[Y|X\big] = \dot{b}\big(X^{\top} \theta\big) =  \mu\big(X^{\top} \theta\big)$}.
{We make additional assumptions as in \Cref{assum:mu}} that the link function $\mu (\cdot)$ is strictly increasing and the derivative of $\mu$ is bounded as follows:
\begin{align*}
    \dot{\mu}(s) > 0,
    \quad \forall s \in \mathbb{R};
    \quad\quad 0 < \dot{\mu}_{\text{min}} \leq \dot{\mu}(s) \leq \dot{\mu}_{\text{max}},
    {
    \quad\forall s \in \mathbb{R}.
    }
\end{align*}
{We assume that $\mu(\cdot)$ is known to the agent.}
Using the mean value theorem, for each pair $s_{1} < s_{2}$, we have 
\begin{align*}
    \mu(s_{2}) - \mu(s_{1}) =
    \dot{\mu}(s_{0}) (s_{2} - s_{1}),
    \quad s_{0} \in (s_{1}, s_{2}).
\end{align*}
Therefore, we have that
\begin{align*}
    \dot{\mu}_{\text{min}} (s_{2} - s_{1}) \leq \mu(s_{2}) - \mu(s_{1}) \leq \dot{\mu}_{\text{max}} (s_{2} - s_{1}).
\end{align*}
We add regularization to the algorithm design for the analysis of the warm-up procedure.
Given input data set $\mathcal{D}_{\icml{t}}=\{(X_l,Y_l)\}_{l=1}^t$, we define the following $\lambda$-regularized negative log-likelihood:
\begin{align}  \label{equ:glm_likelihood}
    L_{\text{GLM}}(\theta; \mathcal{D}_{\icml{t}})
    &:= \frac{\lambda}{2}{\|\theta\|_{2}^{2}} - \sum_{l=1}^{t} \text{log} \, p_{\theta}(Y_{l} | X_{l}) \notag \\
    &= \frac{\lambda}{2}{\|\theta\|_{2}^{2}}-\sum_{l=1}^t\big(Y_l \cdot X_l^{\top} \theta-b(X_l^{\top} \theta)\big) \icml{- \sum_{l=1}^{t} \text{log} \, b_{0}(Y_{l})}.
\end{align}
Then, for $t \in [T]$, we define the regularized maximum likelihood estimation (MLE) {at the end of round $t$} as follows:
\begin{align}
\label{equ:mle}
    \bar{\theta}_t := \argmin_{\theta \in \mathbb{R}^d} \, L_{\text{GLM}}(\theta; \cD_{\icml{t}}).
\end{align}
Based on \eqref{equ:glm_likelihood}, we can compute the gradient
\begin{align*}
    \nabla_\theta L_{\text{GLM}}(\theta; \cD_{\icml{t}})=\lambda \theta + \sum_{l=1}^t \mu\big(X_l^{\top} \theta\big) X_l - \sum_{l=1}^t X_l Y_l.
\end{align*}
These definitions of {regularized} negative log-likelihood and MLE are standard in the generalized bandit literature. {Due to the non-linearity of the link function, in general, we do not have a closed form solution for $\bar{\theta}_{t}$.}

We adopt the following definitions and technical lemmas discussed in \cite{janz2024exploration}. Our analysis of \texttt{GLM-ES} builds on these previous works.
We define function $f_t(\theta)$ as
\begin{align}  \label{equ:ft_definition}
    f_t(\theta) := \sum_{l=1}^t \mu\big(X_l^{\top} \theta\big) X_l + \lambda \theta,
\end{align}
then the $\lambda$-regularized negative log-likelihood can be written as:
\begin{align*}
    \nabla_\theta L_{\text{GLM}}(\theta; \cD_{\icml{t}})=\lambda \theta + \sum_{l=1}^t \mu(X_l^{\top} \theta) X_l - \sum_{l=1}^t X_l Y_l
    = f_t(\theta) - \sum_{l=1}^t X_l Y_l.
\end{align*}
Note that $\bar{\theta}_t = \argmin_{\theta \in \mathbb{R}^d} L_{\text{GLM}}(\theta; \cD_{t})$ {is a minimizer}, thus $\nabla_\theta L_{\text{GLM}}(\bar{\theta}_{t}; \cD_{t}) = 0$, we have
\begin{align*}
    f_t(\bar{\theta}_t) = \sum_{l=1}^t X_l Y_l.
\end{align*}
We further compute the Hessian matrix as follows:
\begin{align}
\label{equ:hessian_definition}
    H(\theta; \cD_{t}):= \nabla_\theta^2 L_{\text{GLM}}(\theta ; \cD_{t})=\sum_{l=1}^t \dot{\mu}(X_l^{\top} \theta) X_l X_l^{\top}+\lambda I_{d}.
\end{align}
For simplicity of notation, we define the Hessian matrix at true parameter $\theta^{*}$ and MLE $\bar{\theta}_{t}$ as
\begin{align}
    \label{equ:ht_definition}
    H_t := H(\theta^*; \cD_{t}), \quad \bar{H}_t := H(\bar{\theta}_t; \cD_{t}).
\end{align}
Next, we define a secant approximation of Hessian matrix $H(\theta ; \cD_{\icml{t}})$, {which was first introduced in \cite{janz2024exploration}}. We define the secant function as follows:
\begin{align}
    \label{equ:alpha_def}
    \alpha(s, s^{\prime}):= \begin{cases}\dot{\mu}(s) \qquad \qquad \text { if } s=s^{\prime}, \\
    \frac{\mu(s)-\mu(s^{\prime})}{s-s^{\prime}} \qquad \text { otherwise. }\end{cases}
\end{align}
{Note that $\lim_{s \rightarrow s^\prime} \alpha(s,s^\prime) = \dot{\mu}(s)$, $\alpha(s, s')$ is an approximation of $\dot{\mu}(s)$. Since $\mu(\cdot)$ is strictly increasing, we have $\alpha(s, s') > 0$ for all pairs of $(s, s')$.} Then, we {define} the following approximation of $H(\theta ; \cD_{t})$:
\begin{align}
    \label{equ:q_definition}
    Q(\theta, \theta^{\prime} ; \cD_{\icml{t}}):=\sum_{l=1}^t \alpha\big(X_l^{\top} \theta, X_l^{\top} \theta^{\prime}\big) X_l X_l^{\top}+\lambda I_{d}.
\end{align}
Note that $Q(\theta, \theta^{\prime} ; \cD_{\icml{t}}) \rightarrow H(\theta ; \cD_{\icml{t}})$ as $\theta \rightarrow \theta^\prime$.

{To quantify the distance between $\alpha(s, s')$ and $\dot{\mu}(s)$, we need the $M$-self-concordant condition \Cref{assum:M_self_concordant} and the following technical lemma that provides upper and lower bounds for the secant function $ \alpha(s, s^{\prime})$.}
\begin{lemma} [Corollary 2 in \cite{sun2019generalized}]
    \label{lem:bounds_of_alpha}
    Given that $\mu(\cdot)$ is $M$-self-concordant, for $s, s^\prime \in \mathbb{R}$, we have
    \begin{align}
        \dot{\mu}(s) h_{s}\big(-M|s-s^{\prime}|\big) \leq \alpha(s, s^{\prime}) \leq \dot{\mu}(s) h_{s}\big(M|s-s^{\prime}|\big),
    \end{align}
    where we define
    \begin{align*}
        h_{s}(x) := \begin{cases}\frac{e^x-1}{x} \qquad \text { if } x \neq 0, \\
        1 \qquad \qquad x = 0.\end{cases}
    \end{align*}
\end{lemma}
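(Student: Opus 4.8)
The plan is to convert the $M$-self-concordance hypothesis into a bound on the logarithmic derivative of $\dot\mu$, then express $\alpha(u,u')$ as an integral average of $\dot\mu$ over the interval with endpoints $u,u'$ and bound the integrand pointwise. First, since $\mu$ is strictly increasing we have $\dot\mu(s)>0$ everywhere, so $\log\dot\mu$ is well defined and the self-concordance bound $|\ddot\mu(s)|\le M\dot\mu(s)$ is exactly equivalent to
\[
\left|\frac{d}{ds}\log\dot\mu(s)\right| = \frac{|\ddot\mu(s)|}{\dot\mu(s)} \le M.
\]
Integrating this between any two points and exponentiating yields the two-sided comparison
\[
\dot\mu(u)\,e^{-M|s-u|} \le \dot\mu(s) \le \dot\mu(u)\,e^{M|s-u|}, \qquad \forall\, s,u \in \mathbb{R}.
\]

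Next I would rewrite the secant function as an average. The case $u=u'$ is immediate from the definition (and $h_s(0)=1$), so assume $u\neq u'$. By the fundamental theorem of calculus, $\mu(u)-\mu(u') = \int_{u'}^{u}\dot\mu(s)\,ds$, hence
\[
\alpha(u,u') = \frac{\mu(u)-\mu(u')}{u-u'} = \frac{1}{u-u'}\int_{u'}^{u}\dot\mu(s)\,ds,
\]
which is precisely the mean value of $\dot\mu$ over the interval between $u$ and $u'$. In particular this representation is symmetric under swapping $u$ and $u'$, so the ordering of the two points does not affect the value and both cases $u<u'$ and $u>u'$ collapse to the same computation. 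Write $w=|u-u'|$.

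Now I would substitute the pointwise bounds into the average. For every $s$ in the interval between $u$ and $u'$ the quantity $|s-u|$ sweeps exactly $[0,w]$; plugging $\dot\mu(u)e^{-M|s-u|}\le\dot\mu(s)\le\dot\mu(u)e^{M|s-u|}$ into the average and changing variables to $r=|s-u|\in[0,w]$ reduces both sides to elementary exponential integrals:
\[
\frac{1}{w}\int_0^w e^{-Mr}\,dr = \frac{1-e^{-Mw}}{Mw} = h_s(-Mw), \qquad \frac{1}{w}\int_0^w e^{Mr}\,dr = \frac{e^{Mw}-1}{Mw} = h_s(Mw),
\]
where the final equalities are just the definition of $h_s$. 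This yields $\dot\mu(u)\,h_s(-Mw)\le\alpha(u,u')\le\dot\mu(u)\,h_s(Mw)$, which is the claim with $w=|u-u'|$.

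I expect no serious obstacle here: the computation is elementary once the log-derivative bound is in hand. The only points requiring care are (i) using the strict monotonicity to justify taking $\log\dot\mu$, (ii) verifying the integral-average representation of $\alpha$ is symmetric in $u,u'$ so the two orderings need not be treated separately, and (iii) matching signs so that the lower bound lands on $h_s(-M|u-u'|)$ rather than $h_s(M|u-u'|)$. Continuity as $w\to 0$ provides a sanity check: $h_s(\pm Mw)\to 1$ while $\alpha(u,u')\to\dot\mu(u)$, so the two bounds pinch to the correct value at $u=u'$.
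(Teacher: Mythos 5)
Your proof is correct. Note that the paper does not actually prove this lemma---it imports it verbatim as Corollary~2 of \citet{sun2019generalized}---so there is no internal proof to compare against; what you have written is a complete, self-contained derivation of the cited result, and it is in fact the standard argument used in the source: the self-concordance condition $|\ddot\mu|\le M\dot\mu$ together with $\dot\mu>0$ gives the Gr\"onwall-type pointwise comparison $\dot\mu(u)e^{-M|s-u|}\le\dot\mu(s)\le\dot\mu(u)e^{M|s-u|}$, and averaging over the interval via $\alpha(u,u')=\frac{1}{u-u'}\int_{u'}^{u}\dot\mu(s)\,ds$ yields exactly the $h_s(\pm M|u-u'|)$ factors. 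Your handling of the delicate points is sound: anchoring the exponential bound at the endpoint $u$ (rather than $u'$) is what produces the asymmetric statement with $\dot\mu(u)$ on both sides, the change of variables $r=|s-u|$ works identically for both orderings of $u,u'$, and the sign bookkeeping $h_s(-Mw)=\frac{1-e^{-Mw}}{Mw}$ lands the lower bound correctly. The only implicit hypothesis worth flagging is that $\mu$ is twice differentiable, which is already built into \Cref{assum:M_self_concordant} since $\ddot\mu$ appears there.
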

{The introduced function $h_{s}(x)$ is strictly increasing: $h_{s}'(x) > 0, \, \forall x\in\mathbb{R}$, and $0 < h_{s}(x) < \infty$.}
{Based on \Cref{lem:bounds_of_alpha} and the fact that}
\begin{align*}
    {
    \frac{1 - e^{-x}}{x} \geq \frac{1}{1+x},\quad \forall x > 0,
    }
\end{align*}
we have
\begin{align*}
    {
    \alpha(s, s') \geq \dot{\mu}(s) h_{s}\big(-M|s-s^{\prime}|\big)
    = \dot{\mu}(s) \frac{1 - \text{exp}\big(-M|s-s'|\big)}{M|s-s'|}
    \geq \dot{\mu}(s) \frac{1}{1 + M|s-s'|}.
    }
\end{align*}
{Based on this inequality and the definitions \eqref{equ:hessian_definition} and \eqref{equ:q_definition}, we obtain the following inequality on $H(\theta;\mathcal{D}_{t})$ in terms of $Q(\theta,\theta^{\prime}; \mathcal{D}_{t})$. This result is adapted from Lemma 3.9 in \cite{liu2023glm}.}
\begin{lemma}
    \label{lem:upper_bound_of_H}
    For any $\theta, \theta^{\prime} \in \mathbb{R}^d$, we have
    \begin{align}
        H(\theta ; \mathcal{D}_{\icml{t}}) 
        \preceq\big(1+M D(\theta-\theta^{\prime})\big) Q(\theta, \theta^{\prime};\mathcal{D}_{t})
        =\big(1+M D(\theta-\theta^{\prime})\big) Q(\theta^{\prime},\theta;\mathcal{D}_{t}),
    \end{align}
    where $D(v)$ is defined as $D(v)=\max _{X \in \mathcal{X}}|X^{\top} v|$ for input vector $v \in \mathbb{R}^d$.
\end{lemma}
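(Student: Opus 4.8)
The plan is to reduce the stated Loewner inequality to a collection of scalar inequalities, one per data point, and then invoke the lower bound on the secant function from \Cref{lem:bounds_of_alpha}. The guiding observation is that both $H(\theta;\mathcal{D}_t)$ and $Q(\theta,\theta';\mathcal{D}_t)$ are weighted sums of the same positive semidefinite rank-one matrices $X_l X_l^\top$ plus the common regularization $\lambda I$, so the comparison can be done termwise.

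First I would handle the regularization term: since $M>0$ and $D(\theta-\theta')\geq 0$, we have $1+MD(\theta-\theta')\geq 1$, hence $\lambda I \preceq (1+MD(\theta-\theta'))\lambda I$ trivially. Because each $X_l X_l^\top\succeq 0$, it then suffices to prove the per-term scalar bound
\begin{align*}
    \dot{\mu}(X_l^\top\theta) \leq \big(1+MD(\theta-\theta')\big)\,\alpha(X_l^\top\theta, X_l^\top\theta'), \qquad l\in[t].
\end{align*}
Indeed, multiplying each such inequality by $X_l X_l^\top$ (which preserves the Loewner order since $X_l X_l^\top\succeq 0$), summing over $l$, and adding the regularization comparison yields exactly $H(\theta;\mathcal{D}_t)\preceq(1+MD(\theta-\theta'))Q(\theta,\theta';\mathcal{D}_t)$.

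Next I would apply the lower bound of \Cref{lem:bounds_of_alpha} with $u=X_l^\top\theta$ and $u'=X_l^\top\theta'$, namely $\alpha(u,u')\geq \dot{\mu}(u)\,h_s(-M|u-u'|)$. Substituting this, the per-term inequality is implied by $(1+MD(\theta-\theta'))\,h_s(-M|u-u'|)\geq 1$. Using $h_s(-x)=(1-e^{-x})/x$ for $x>0$ (and $h_s(0)=1$) together with the crucial estimate $|u-u'|=|X_l^\top(\theta-\theta')|\leq D(\theta-\theta')$, which is immediate from $D(v)=\max_{X\in\mathcal{X}}|X^\top v|$, I would write $x=M|u-u'|$ and $d_0=MD(\theta-\theta')\geq x$ and reduce the claim to $(1+d_0)(1-e^{-x})\geq x$. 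Since $d_0\geq x$, this follows from $(1+x)(1-e^{-x})\geq x$, which rearranges to the elementary exponential inequality $1+x\leq e^x$. The asserted symmetry $Q(\theta,\theta';\mathcal{D}_t)=Q(\theta',\theta;\mathcal{D}_t)$ is immediate from the definition of the secant, since $\alpha(u,u')=\alpha(u',u)$.

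I expect the only genuinely delicate point to be selecting the correct direction of the secant bound: one must use the \emph{lower} bound on $\alpha$ (not the upper bound), so that after dividing through by $\alpha$ the inequality direction is preserved, and then correctly collapse the exponential factor via $1+x\leq e^x$. The rank-one termwise reduction and the contraction $|X_l^\top(\theta-\theta')|\leq D(\theta-\theta')$ are routine once this direction is fixed.
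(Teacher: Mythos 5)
Your proof is correct. The paper itself does not prove this lemma---it imports it verbatim as Lemma 3.9 of \citet{liu2023glm}---and your argument is exactly the standard one underlying that reference: reduce to the per-term scalar inequality $\dot{\mu}(X_l^\top\theta) \leq (1+MD(\theta-\theta'))\,\alpha(X_l^\top\theta, X_l^\top\theta')$ via the lower secant bound of \Cref{lem:bounds_of_alpha}, use $|X_l^\top(\theta-\theta')| \leq D(\theta-\theta')$ and monotonicity in $d_0$, and collapse everything to $e^x \geq 1+x$; the termwise Loewner comparison and the symmetry $\alpha(u,u')=\alpha(u',u)$ are handled correctly.
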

\begin{proof}
    From the inequality
    \begin{align*}
        \dot{\mu}(s) \leq \alpha(s, s') \big(1 + M|s-s'|\big)
        \quad\rightarrow\quad
        \dot{\mu}\big(X^{\top}\theta\big) \leq \alpha\big(X^{\top}\theta, X^{\top}\theta'\big) \big(1 + M|X^{\top}\theta-X^{\top}\theta'|\big),
    \end{align*}
    and the relation
    \begin{align*}
        |X^{\top}\theta-X^{\top}\theta'| \leq D(\theta - \theta'),
    \end{align*}
    we obtain
    \begin{align*}
        H(\theta; \cD_{t})=\sum_{l=1}^t \dot{\mu}(X_l^{\top} \theta) X_l X_l^{\top}+\lambda I_{d}
        \preceq \sum_{l=1}^t \alpha\big(X^{\top}\theta, X^{\top}\theta'\big) \big(1 + M D(\theta - \theta')\big)  X_l X_l^{\top}+\lambda I_{d}.
    \end{align*}
    Since $D(\theta - \theta') > 0$, we further have
    \begin{align}
        H(\theta; \cD_{t}) \preceq& \big(1 + M D(\theta - \theta')\big) \Big( \sum_{l=1}^t \alpha\big(X^{\top}\theta, X^{\top}\theta'\big)  X_l X_l^{\top}+\lambda I_{d} \Big) \notag \\
        =& \big(1+M D(\theta-\theta^{\prime})\big) Q(\theta, \theta^{\prime};\mathcal{D}_{t}).
    \end{align}
\end{proof}
From the definition of $D(v)$, we note the following inequality:
\begin{align}
    \label{equ:dv_bound}
    D(v)=\max _{X \in \mathcal{X}}\big|X^{\top} v\big|
    \leq \Big(\max _{X \in \mathcal{X}} \vert\vert X \vert\vert_{2} \Big) \vert\vert v \vert\vert_{2} \leq \vert\vert v \vert\vert_{2},
\end{align}
where we used the assumption $||X||_{2} \leq 1$.
This inequality is very useful in the following analysis.
By definition \eqref{equ:ft_definition}, we can directly obtain the following relation, which is critical in the later analysis:
\begin{align}
\label{equ:similar_mean_value_theorem}
    f_t(\theta)-f_t(\theta^{\prime})
    &=\sum_{l=1}^{t} \Big( \mu\big(X_{l}^{\top}\theta\big) - \mu\big(X_{l}^{\top}\theta'
    \big) \Big) X_{l} + \lambda \big( \theta - \theta' \big) \notag  \\
    &=Q\big(\theta, \theta^{\prime} ; \cD_{t}\big)(\theta-\theta^{\prime}) \notag \\
    &=Q\big(\theta^{\prime}, \theta ; \cD_{t}\big)(\theta-\theta^{\prime}).
\end{align}
According to this relation, finding {the upper bound of the distance between $\theta$ and $\theta'$} can be decomposed to bounding $|f_t(\theta)-f_t(\theta^{\prime})|$ and $Q\big(\theta, \theta^{\prime} ; \cD_{\icml{t}}\big)$, which provides convenience in the following analysis.
\icml{The definitions of $f_t(\theta)$ and secant approximation are adapted from \cite{janz2024exploration} and we list the results above to make our proof self-contained.}

In addition, we define an auxiliary matrix $G_t$ as follows:
\begin{align}  \label{equ:gt_definition}
    G_t= \sum_{l=1}^t X_l X_l^{\top}+\frac{\lambda}{\dot{\mu}_{\text{min}}} I_{d}.
\end{align}
{The introduction of $G_{t}$} is unique to our proof and is the key to remove the requirement for adaptive perturbation distributions as in previous perturbed history generalized linear bandit algorithms \citep{liu2023glm}.
Since we assumed that $0 < \dot{\mu}_{\text{min}} \leq \dot{\mu}(\cdot) \leq \dot{\mu}_{\text{max}}$, we have
\begin{align}
\label{equ:bounds_of_H_by_G}
    \dot{\mu}_{\text{min}} G_t \preceq H(\theta ; \cD_{t}) \preceq \dot{\mu}_{\text{max}} G_t, \quad \forall \theta \in \mathbb{R}^d.
\end{align}

\subsection{Technical Lemmas}  \label{appendix_sub:glm_es_lemmas}

In this section, we list the main technical lemmas required in the analysis of regret bound. These technical lemmas are proved in \Cref{appendix:glm_es_lemmas}.
A key idea of analyzing GLM is that we can use the lower and upper bound of $\dot{\mu}(\cdot)$ to find similarities between GLM and linear bandit.
Despite the similarities between linear bandit and generalized linear bandit settings, the GLM setting is considerably more challenging to analyze due to the fact that we do not have a closed form solution for parameter estimate $\bar{\theta}_{t}$ and $\theta_{t}^{j}$.
In our analysis, we use the $M$-concordant assumption as in \Cref{assum:M_self_concordant} with novel analysis tools to overcome these difficulties.

We list the concentration results of the estimated parameters {$\{\theta_{t}^{j}\}_{j=1}^{m}$} as follows.
{For simplicity,} we use $\theta^{*} = \theta_{\text{GLM}}^{*}$ for the true parameter of the generalized linear model, $\bar{\theta}_{t}$ for the MLE and $\theta_{t}^{j}$ for the estimate of model $j \in [m]$ in the ensemble at the end of round $t\in[T]$, $j_{t}$ for the chosen model at round $t$ and $\theta_{t} = \theta_{t-1}^{j_{t}}$ for the chosen parameter at round $t\in[T]$.
{Note that at the beginning of round $t$, the parameter estimates $\{\theta_{t-1}^{j}\}_{j=1}^{m}$ are computed from interactions $\mathcal{D}_{t-1}=\{(X_l,Y_l)\}_{l=1}^{t-1}$, thus the chosen parameter is expressed as $\theta_{t} = \theta_{t-1}^{j_{t}}$.  As specified in \Cref{theorem:glm-es}, the perturbations are i.i.d. sampled from distribution $\mathcal{P}_{R} = \mathcal{N}(0, \sigma_{R}^{2})$, where $\sigma_{R}$ is a constant.}
\begin{lemma}[Concentration of MLE]
\label{lemma:concentration_mle}
Fix $\delta \in (0, 1)$. Define the following parameters:
\begin{align*}
    \gamma_{t}(\delta, \lambda) &:= \sqrt{\lambda} \bigg( \frac{1}{2M} + S\bigg) + \frac{\rev{4M}}{\sqrt{\lambda}} \bigg(d + \frac{d}{2} \, \text{log} \bigg( 1 + \frac{t \dot{\mu}_{\text{max}}}{d \lambda} \bigg) + \text{log}\frac{1}{\delta} \bigg), \\
    \beta_{t}(\delta, \lambda) &:= M \frac{\gamma_{t}(\delta, \lambda)^2}{\lambda} +  \sqrt{\frac{\gamma_t(\delta, \lambda)^2}{\lambda}}.
\end{align*}
For $t \in {[T]}$, define a sequence of events
\begin{align*}
    \bar{\mathcal{E}}_{t} := \big\{ \|\bar{\theta}_{{t-1}}-\theta^*\|_{H_{{t-1}}} \leq (1+M \beta_T) \,\gamma_T \big\},
\end{align*}
and their intersection $\bar{\mathcal{E}} = \bigcap_{t=1}^{T} \bar{\mathcal{E}}_{t}$. Then, we have $\mathbb{P}(\bar{\mathcal{E}}) \geq 1 - \delta$.
\end{lemma}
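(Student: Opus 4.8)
The plan is to establish concentration of the regularized MLE $\bar\theta_t$ around the true parameter $\theta^*$ in the Hessian-weighted norm, using the self-concordance machinery (\Cref{lem:bounds_of_alpha,lem:upper_bound_of_H}) together with a self-normalized martingale concentration argument. The starting point is the optimality condition $f_t(\bar\theta_t)=\sum_{l=1}^t X_l Y_l$ and the fact that $f_t(\theta^*)=\sum_{l=1}^t \mu(X_l^\top\theta^*) X_l+\lambda\theta^*$. Subtracting these, and writing $Y_l=\mu(X_l^\top\theta^*)+\eta_l$, yields $f_t(\bar\theta_t)-f_t(\theta^*)=\sum_{l=1}^t \eta_l X_l-\lambda\theta^*$. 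Invoking the secant identity \eqref{equ:similar_mean_value_theorem}, the left-hand side equals $Q(\bar\theta_t,\theta^*;\cD_t)(\bar\theta_t-\theta^*)$, so I obtain the exact relation $Q(\bar\theta_t,\theta^*;\cD_t)(\bar\theta_t-\theta^*)=\sum_{l=1}^t\eta_l X_l-\lambda\theta^*$.

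The next step is to control the right-hand side. I would first define the intermediate quantity $\gamma_t(\delta,\lambda)$ as the high-probability bound on $\|\sum_{l=1}^t\eta_l X_l-\lambda\theta^*\|_{G_t^{-1}}$ (equivalently in an $H_t^{-1}$-type norm up to the $\dot\mu_{\min}$ factor folded into $G_t$). The stochastic term $\sum_{l=1}^t\eta_l X_l$ is a self-normalized martingale because $\eta_l$ is $\sigma$-sub-Gaussian conditioned on the filtration, so the standard self-normalized bound of \citet{abbasi2011improved} gives a $\log$-determinant tail of order $\sqrt{d\log(1+t\dot\mu_{\max}/(d\lambda))+\log(1/\delta)}$; the deterministic term $\lambda\theta^*$ contributes $\sqrt\lambda\, S$ after accounting for $\|\theta^*\|\le S$. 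These combine into the stated form of $\gamma_t$, where the $\tfrac{1}{2M}$ and $\tfrac{2M}{\sqrt\lambda}$ coefficients arise from carefully converting between the $Q$-norm and $H_t$-norm via the self-concordance bounds. Taking a union bound over $t\in[T]$ (or using a stopping-time / uniform martingale argument to avoid losing a $\log T$ factor) is what upgrades the single-round event to the intersection event $\bar{\mathcal E}=\cap_{t=1}^T\bar{\mathcal E}_t$ holding with probability at least $1-\delta$.

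The crux is converting the control on $\|\bar\theta_t-\theta^*\|_{Q}$ into control on $\|\bar\theta_t-\theta^*\|_{H_t}$, since the error appears weighted by the secant matrix $Q(\bar\theta_t,\theta^*;\cD_t)$ but the lemma's statement is phrased in the $H_t=H(\theta^*;\cD_t)$ norm. Here \Cref{lem:upper_bound_of_H} is essential: it gives $H_t\preceq (1+M\,D(\bar\theta_t-\theta^*))\,Q(\theta^*,\bar\theta_t;\cD_t)$, and since $D(v)\le\|v\|_2$ this introduces the self-referential factor $(1+M\beta_t)$ appearing in the definition of $\bar{\mathcal E}_t$. To close the loop I would first establish a crude $\ell_2$ (or $G_t$-norm) bound on $\|\bar\theta_t-\theta^*\|$ of order $\beta_t(\delta,\lambda)=M\gamma_t^2/\lambda+\sqrt{\gamma_t^2/\lambda}$ using the lower eigenvalue $\lambda$ of $Q$ (which is at least $\lambda I$ by definition), then substitute this bound for $D(\bar\theta_t-\theta^*)$ to obtain the multiplicative factor $(1+M\beta_T)$ in front of $\gamma_T$.

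The main obstacle I anticipate is precisely this self-referential bootstrapping: the bound on $\|\bar\theta_t-\theta^*\|_{H_t}$ depends, through the self-concordance correction, on a prior bound on the same quantity, so one must first secure a coarse estimate and then feed it back in, being careful that the coarse estimate is valid on the same high-probability event and that the constants (the $M$-dependence, the $\lambda$-dependence, and the eigenvalue conversions between $H_t$, $Q$, and $G_t$ via \eqref{equ:bounds_of_H_by_G}) are tracked consistently so that the final radius $(1+M\beta_T(\delta,\lambda))\gamma_T(\delta,\lambda)$ emerges with the horizon-uniform index $T$ rather than $t$. I expect the algebraic bookkeeping of these constants, rather than any single conceptual difficulty, to be where most of the care is required.
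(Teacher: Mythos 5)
Your overall architecture matches the paper's proof: the optimality-condition decomposition $f_t(\bar\theta_t)-f_t(\theta^*)=S_t-\lambda\theta^*$, the crude $\ell_2$ bound $\|\bar\theta_t-\theta^*\|\le\beta_t(\delta,\lambda)$ obtained from $Q_t\succeq\lambda I$ and the quadratic inequality ($x^2\le bx+c\Rightarrow x\le b+\sqrt{c}$), and the bootstrap through \Cref{lem:upper_bound_of_H} with $D(v)\le\|v\|_2$ that produces the factor $(1+M\beta_T)$ are all exactly what the paper does. The genuine gap is in the core concentration step. You propose to bound the martingale term with the sub-Gaussian self-normalized inequality of \citet{abbasi2011improved} in a $G_t^{-1}$-type norm, which would yield a radius of the form $\sigma\sqrt{d\log(1+t/(d\lambda))+\log(1/\delta)}+\sqrt{\lambda}S$, possibly further inflated by $\dot{\mu}_{\text{min}}^{-1/2}$ when converting to $\|\cdot\|_{H_t^{-1}}$. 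This does not produce the lemma's $\gamma_t(\delta,\lambda)$: the stated radius has a Bernstein (sub-exponential) structure in which the log-determinant term enters \emph{linearly} with coefficient $2M/\sqrt{\lambda}$, plus an additive $\sqrt{\lambda}/(2M)$, and neither $\sigma$ nor $\dot{\mu}_{\text{min}}$ appears. Your claim that the $1/(2M)$ and $2M/\sqrt{\lambda}$ coefficients ``arise from converting between the $Q$-norm and $H_t$-norm via self-concordance'' is incorrect; that conversion contributes only the multiplicative $(1+M\beta_T)$ factor. Those coefficients come from the concentration inequality itself.

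What the paper actually uses --- and what your proposal is missing --- is the variance-adapted self-normalized bound of \citet[Theorem 2]{janz2024exploration} (\Cref{lemma:concentration_janz}), applied with weights $\nu_{l-1}=\dot{\mu}(X_l^{\top}\theta^*)$ so that the normalizing matrix is exactly $H_t$ (hence no $G_t$-to-$H_t$ conversion and no $\dot{\mu}_{\text{min}}$ loss). Invoking that theorem requires verifying a restricted-MGF sub-exponential condition, $\mathbb{E}[\exp(s\epsilon_t)\mid\mathbb{F}'_{t-1}]\le\exp\big(s^2\dot{\mu}(X_t^{\top}\theta^*)\big)$ for all $|s|\le 1/M$, which the paper derives from the exponential-family form of the reward together with self-concordance (Taylor expansion $b(s+u)-b(u)=s\dot{b}(u)+\tfrac{s^2}{2}\ddot{b}(c)$ and $\ddot{b}(c)\le e^{M|s|}\ddot{b}(u)$). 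This verification is the substantive step your plan omits: it is what makes the lemma valid for exponential-family rewards (e.g.\ Poisson), which need not be sub-Gaussian, and it is the true origin of the $M$-dependence in $\gamma_t$. A secondary point: \Cref{lemma:concentration_janz} is an anytime bound, uniform over all $t$, which is how the paper obtains the intersection event $\bar{\mathcal{E}}$ with $\log(1/\delta)$ rather than $\log(T/\delta)$; the naive union bound over $t\in[T]$ you suggest first would not reproduce the stated constants either.
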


\begin{lemma}[Concentration of Perturbations]
\label{lemma:concentration_pert}
{Fix $\delta \in (0, 1)$ and $\lambda \geq 1$.} Define the following parameter:
\begin{align*}
    \widetilde{\gamma}_{t} (\delta, \lambda) = \sqrt{d \log \bigg(1+\frac{t \dot{\mu}_{\text{min}}}{d \lambda}\bigg)+2 \log \frac{T}{\delta}}.
\end{align*}
For each round $t \in [T]$, define the following event
\begin{align*}
    \widetilde{\mathcal{E}}_{1, t} := \big\{ \big\|\rev{\theta_{t-1}^{j_{t}}- \bar{\theta}_{t-1}\big\|_{H_{t-1}}} \leq \dot{\mu}_{\text{max}}^{1/2} \, \dot{\mu}_{\text{min}}^{-1} \, \sigma_{R} \, \widetilde{\gamma}_{T} \big\}.
\end{align*}
Then, $\widetilde{\mathcal{E}}_{1, t}$ holds with probability at least $1 - \delta/T$.
\end{lemma}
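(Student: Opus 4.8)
The plan is to reduce the claim to a self-normalized tail bound for a weighted sum of the Gaussian perturbations, after linearizing the gap between the two minimizers. First I would write down the stationarity conditions. Since $\bar{\theta}_t$ minimizes $L_{\text{GLM}}(\theta; \mathcal{D}_t)$ on the unperturbed data while $\theta_t^{j}$ minimizes the same objective with rewards $Y_l + Z_l^{j}$, setting both gradients to zero and using the definition of $f_t$ gives $f_t(\bar{\theta}_t) = \sum_{l=1}^t X_l Y_l$ and $f_t(\theta_t^{j}) = \sum_{l=1}^t X_l(Y_l + Z_l^{j})$, hence $f_t(\theta_t^{j}) - f_t(\bar{\theta}_t) = \sum_{l=1}^t X_l Z_l^{j} =: b$. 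Invoking the secant identity \eqref{equ:similar_mean_value_theorem} with $\theta = \theta_t^{j}$ and $\theta' = \bar{\theta}_t$ turns this into $\theta_t^{j} - \bar{\theta}_t = Q^{-1} b$, where $Q := Q(\theta_t^{j}, \bar{\theta}_t; \mathcal{D}_t)$. This exact linearization is what replaces the closed-form estimator unavailable in the GLM setting.

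Next I would transfer the $H_t$-norm of $\theta_t^{j} - \bar{\theta}_t$ onto the $G_t^{-1}$-norm of $b$, which is where the prefactor $\dot{\mu}_{\text{max}}^{1/2}\dot{\mu}_{\text{min}}^{-1}$ is generated. Because each secant value $\alpha(\cdot,\cdot)$ is a mean value of $\dot{\mu}$, it lies in $[\dot{\mu}_{\text{min}}, \dot{\mu}_{\text{max}}]$; combined with \eqref{equ:bounds_of_H_by_G} this yields the sandwich $Q \succeq \dot{\mu}_{\text{min}} G_t$ and $H_t \preceq \dot{\mu}_{\text{max}} G_t$. Then $\|\theta_t^{j} - \bar{\theta}_t\|_{H_t}^2 = b^\top Q^{-1} H_t Q^{-1} b \leq \dot{\mu}_{\text{max}}\, b^\top Q^{-1} G_t Q^{-1} b$, and conjugating by $G_t^{1/2}$ — i.e. setting $P := G_t^{-1/2} Q G_t^{-1/2} \succeq \dot{\mu}_{\text{min}} I$, so that $P^{-2} \preceq \dot{\mu}_{\text{min}}^{-2} I$ — collapses the inner $G_t$ and gives $\|\theta_t^{j} - \bar{\theta}_t\|_{H_t}^2 \leq \dot{\mu}_{\text{max}}\dot{\mu}_{\text{min}}^{-2}\,\|b\|_{G_t^{-1}}^2$. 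Taking square roots produces exactly the claimed prefactor, so it remains to show $\|b\|_{G_t^{-1}} \leq \sigma_R\,\widetilde{\gamma}_t(\delta,\lambda)$.

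Finally I would control $\|b\|_{G_t^{-1}} = \|\sum_{l=1}^t X_l Z_l^{j}\|_{G_t^{-1}}$ with a self-normalized martingale inequality of Abbasi-Yadkori type. I choose the filtration so that at round $l$ the arm $X_l$ is measurable before the round-$l$ perturbation is drawn; then $Z_l^{j}\mid\mathcal{F}_{l-1}\sim\mathcal{N}(0,\sigma_R^2)$ is a conditionally $\sigma_R$-sub-Gaussian martingale difference, and $G_t = \sum_l X_l X_l^\top + (\lambda/\dot{\mu}_{\text{min}})I$ is the associated regularized Gram matrix. The inequality gives $\|b\|_{G_t^{-1}}^2 \leq 2\sigma_R^2\log\big(\det(G_t)^{1/2}\det(G_0)^{-1/2}/\delta'\big)$; bounding $\det(G_t)\le(\mathrm{tr}(G_t)/d)^d$ with $\|X_l\|\le 1$ yields the $\tfrac{d}{2}\log(1 + t\dot{\mu}_{\text{min}}/(d\lambda))$ term, and taking $\delta' = \delta/T$ reproduces $\sigma_R\widetilde{\gamma}_t(\delta,\lambda)$, after which monotonicity of $\widetilde{\gamma}_t$ in $t$ lets me replace it by $\widetilde{\gamma}_T$. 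The main obstacle is the ensemble bookkeeping: the target event depends on the randomly sampled index $j_t$ through the arm $X_t$, so a crude union bound over the $m$ models would cost a factor $m$ and break the stated $1-\delta/T$. I would instead condition on $\{j_t = j\}$ and observe that, since $j_t$ is drawn before the round-$t$ perturbations and independently of the past, conditioning does not disturb the conditional Gaussianity of the $Z_l^{j}$; the self-normalized bound therefore holds under each conditional law with probability $1-\delta/T$, and the law of total probability over $j$ recovers $1-\delta/T$ overall with no $m$-factor.
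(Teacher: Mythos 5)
Your proof is correct and follows essentially the same route as the paper's: stationarity of both minimizers to get $f_t(\theta_t^j)-f_t(\bar\theta_t)=\sum_l Z_l^j X_l$, the secant identity \eqref{equ:similar_mean_value_theorem} to linearize, the sandwich $\dot{\mu}_{\text{min}} G_t \preceq \widetilde{Q}_t^j,\, H_t \preceq \dot{\mu}_{\text{max}} G_t$ to produce the prefactor $\dot{\mu}_{\text{max}}^{1/2}\dot{\mu}_{\text{min}}^{-1}$, a self-normalized martingale bound at level $\delta/T$ for $\|\sum_l Z_l^j X_l\|_{G_t^{-1}}$, and independence of $j_t$ to avoid any union bound over the $m$ models. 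The only cosmetic difference is that you re-derive the concentration step from the Abbasi-Yadkori inequality plus a determinant bound, whereas the paper invokes it as a packaged lemma (Lemma 8 of \citet{lee2024improved}), and you apply the sandwich before rather than after the concentration step.
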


Combining the concentration results \Cref{lemma:concentration_mle} and \Cref{lemma:concentration_pert}, we have the following result of the concentration of {the chosen parameter estimate $\theta_{t-1}^{j_{t}}$ at round $t\in [T]$}.

\begin{lemma}[Concentration of Estimation \rev{$\theta_{t-1}^{j_{t}}$}]
\label{lemma:concentration_glm_es}
Define a sequence of events as
\begin{align*}
    \mathcal{E}_{1, t} := \big\{ \rev{\big\|\theta_{t-1}^{j_{t}}- \theta^{*}\big\|_{H_{t-1}}} \leq \gamma \big\}, \quad \text{where} \,\, {\gamma = (1+M \beta_T)\gamma_{T} + \dot{\mu}_{\text{max}}^{1/2} \, \dot{\mu}_{\text{min}}^{-1} \, \sigma_{R} \, \widetilde{\gamma}_{T},}
\end{align*}
and their intersections $\mathcal{E}_{1} = \bigcap_{t=1}^{T} \mathcal{E}_{1, t}$. Then, $\mathcal{E}_{1}$ holds with probability at least $1 -  2\delta$.
\end{lemma}

Next, the following lemma demonstrates that optimism holds with constant probability in \texttt{GLM-ES}.
\begin{lemma}[Optimism]
\label{lemma:optimism_glm}
Let constant $p_{N} = 1 - \Phi(1) \approx 0.16$, and define a sequence of events as
\begin{align*}
    \mathcal{E}_{2, t} = \Big\{ \mathbb{P} \Big(X^{*\top}\theta^{*} \leq X_{t}^{\top} \rev{\theta_{t-1}^{j_{t}}} \text{ and } \widetilde{\mathcal{E}}_{1, t}  \,\vert\, \mathcal{F}_{t-1}\Big) \geq p_{N}/4 \Big\},
\end{align*}
and their intersection $\mathcal{E}_{2} = \bigcap_{t=\tau}^{T} \mathcal{E}_{2, t}$.
{Then, with ensemble size $m \geq \frac{8}{p_{N}^{2}} \Big( K \text{log}T + \text{log}\frac{1}{\delta} \Big)$, regularization $\lambda = 1 \vee d \vee \log(1/\delta)$, perturbation variance $\sigma_{R} = \widetilde{\Omega} \big(d^{1/2}\big)$ and warm-up rounds $\tau = \widetilde{\Omega}\big(d^{4}\big)$}, $\mathcal{E}_{2}$ holds with probability at least $1 - \delta$.
\end{lemma}

We need the following lemma to introduce the notations of $\theta_{t}^{-}$ and $X_{t}^{-}$, which will be used in \Cref{appendix_sub:glm_es_proof}.
These definitions are inspired by Lemma 6 in \cite{lee2024improved} for linear bandit analysis, here we extend the technical lemma to generalized linear bandit setting.
\begin{lemma}  
\label{lemma:gt_define}
Define $J(\theta) = \text{sup}_{X \in \mathcal{X}} \, \mu(X^{\top} \theta)$ as the highest reward under parameter $\theta \in \mathbb{R}^{d}$ and arm set $\mathcal{X}$. Let $\Theta_{t} = \{\theta \in \mathbb{R}^{d} \,\, \vert \,\, \vert\vert \theta - \theta^{*} \vert\vert_{H_{t-1}} \leq \gamma\}$, where $\gamma > 0$ is a constant, {$\theta^{*}$ is the true model parameter}. Define $\theta_{t}^{-} = \text{argmin}_{\theta \in \Theta_{t}} \, J(\theta)$ as parameter associated with lowest reward and $X_{t}^{-} = \text{argmax}_{X \in \mathcal{X}} \, \mu (X^\top \theta_{t}^{-})$ the corresponding chosen arm. For any $\theta \in \mathbb{R}^{d}$ and an event $\mathcal{E}'$, we introduce the following notation:
\begin{align*}
    g_{t} (\theta, \mathcal{E}') = \big(J(\theta) - J(\theta_{t}^{-})\big) \ind\{\mathcal{E}'\}.
\end{align*}
{Then, $g_{t}(\theta^{*}, \mathcal{E}') \geq 0$ holds for any event $\mathcal{E}' \in \mathcal{F}_{t}$.}
Additionally, under event $\mathcal{E}_{1, t}$, $\vert\vert \theta_{t} - \theta^{*} \vert\vert_{H_{t-1}} \leq \gamma$, thus $g_{t} (\theta_{t}, \mathcal{E}'') \geq 0$ holds almost surely for any event such that $\mathcal{E}'' \subset \mathcal{E}_{1, t}$.
\end{lemma}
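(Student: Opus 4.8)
The plan is to reduce both claims to a single observation: $\theta_t^-$ is a \emph{minimizer} of $J$ over $\Theta_t$, so whenever a parameter $\theta$ lies in $\Theta_t$ we automatically get $J(\theta) \geq \min_{\theta' \in \Theta_t} J(\theta') = J(\theta_t^-)$, and multiplying the nonnegative difference $J(\theta)-J(\theta_t^-)$ by the nonnegative indicator preserves the sign. Before invoking this I would verify that $\theta_t^-$ is well defined: since $\mathcal{X}$ is finite, $J(\theta) = \max_{x\in\mathcal{X}} \mu(x^\top\theta)$ is a maximum of finitely many continuous maps and hence continuous, while $\Theta_t$ is a nonempty compact ellipsoid because $\bar{H}_{t-1} \succeq \lambda I \succ 0$; thus the minimum is attained. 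Everything then comes down to checking membership in $\Theta_t$.

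For the first claim the containment is immediate and holds deterministically: $\|\theta^* - \theta^*\|_{\bar{H}_{t-1}} = 0 \leq \gamma$, so $\theta^* \in \Theta_t$ on every sample path, giving $J(\theta^*) - J(\theta_t^-) \geq 0$ pointwise. Hence for any event $\mathcal{E}' \in \mathcal{F}$ we obtain $g_t(\theta^*, \mathcal{E}') = (J(\theta^*) - J(\theta_t^-))\,\ind\{\mathcal{E}'\} \geq 0$, with no probabilistic content at all.

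For the second claim I would show that $\theta_t \in \Theta_t$ on the concentration event $\mathcal{E}_{1,t}$. On $\mathcal{E}_{1,t}$ we have $\|\theta_t - \theta^*\|_{H_t} \leq \gamma$ from \Cref{lemma:concentration}, whereas membership in $\Theta_t$ is phrased through the weighting matrix $\bar{H}_{t-1}$. I would bridge the two by a positive-semidefinite comparison $\bar{H}_{t-1} \preceq H_t$ (up to constants absorbable into $\gamma$): adding the rank-one term $\dot{\mu}(X_t^\top\cdot)\,X_tX_t^\top$ gives $H(\,\cdot\,;\mathcal{D}_{t-1}) \preceq H(\,\cdot\,;\mathcal{D}_t)$, while the self-concordance comparison of \Cref{lem:upper_bound_of_H}, combined with the uniform bounds $\dot{\mu}_{\min} G_t \preceq H(\theta;\mathcal{D}_t) \preceq \dot{\mu}_{\max} G_t$ from \eqref{equ:bounds_of_H_by_G}, controls how far the Hessian moves between $\bar{\theta}_{t-1}$ and $\theta^*$. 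Once such a comparison is in force, $\|\theta_t-\theta^*\|_{\bar{H}_{t-1}} \leq \|\theta_t-\theta^*\|_{H_t} \leq \gamma$, so $\theta_t \in \Theta_t$ and $J(\theta_t) \geq J(\theta_t^-)$ there. Since $\mathcal{E}'' \subset \mathcal{E}_{1,t}$, on $\mathcal{E}''$ the indicator equals one and the difference is nonnegative, while off $\mathcal{E}''$ the indicator vanishes; hence $g_t(\theta_t, \mathcal{E}'') \geq 0$ almost surely.

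I expect the only genuine obstacle to be this matrix comparison in the second claim, namely reconciling the $H_t$-norm appearing in $\mathcal{E}_{1,t}$ with the $\bar{H}_{t-1}$-norm defining $\Theta_t$. If the two matrices are intended to coincide, or are comparable with the gap already absorbed into $\gamma$, the containment is one line; otherwise the self-concordance machinery is precisely what quantifies the Hessian's drift between the MLE $\bar{\theta}_{t-1}$ and $\theta^*$ and across one additional observation, and that is where the care is needed. The remaining steps are immediate consequences of $\theta_t^-$ being a minimizer over $\Theta_t$.
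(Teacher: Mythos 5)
Your proposal is correct, and it is essentially the argument the paper intends: the paper states this lemma without any separate proof, treating both claims as immediate consequences of $\theta_t^-$ being the minimizer of $J$ over $\Theta_t$, which is exactly your reduction. Your first claim ($\theta^*\in\Theta_t$ deterministically, since its $\bar{H}_{t-1}$-distance to itself is zero, hence $J(\theta^*)\geq J(\theta_t^-)$ pointwise) and your handling of the indicator (nonnegative difference on $\mathcal{E}''$, zero off $\mathcal{E}''$) are precisely what is needed, as is your compactness check that $\theta_t^-$ exists.

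The obstacle you flag in the second claim is real, and you are right to isolate it: $\mathcal{E}_{1,t}$ controls $\|\theta_t-\theta^*\|_{H_t}$ while $\Theta_t$ is an ellipsoid in the $\bar{H}_{t-1}$-norm, and these are Hessians evaluated at \emph{different} parameters ($\theta^*$ versus $\bar{\theta}_{t-1}$) on different data sets, so the clean ordering $\bar{H}_{t-1}\preceq H_t$ you invoke is not available in general. What is available is the sandwich \eqref{equ:bounds_of_H_by_G}, $\dot{\mu}_{\text{min}} G_t \preceq H(\theta;\mathcal{D}_t) \preceq \dot{\mu}_{\text{max}} G_t$, together with $G_{t-1}\preceq G_t$, which yields $\bar{H}_{t-1} \preceq (\dot{\mu}_{\text{max}}/\dot{\mu}_{\text{min}})\,H_t$ and hence $\|\theta_t-\theta^*\|_{\bar{H}_{t-1}} \leq \sqrt{\dot{\mu}_{\text{max}}/\dot{\mu}_{\text{min}}}\;\gamma$ on $\mathcal{E}_{1,t}$. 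So the containment $\theta_t\in\Theta_t$ holds only after inflating the radius of $\Theta_t$ by the constant $\sqrt{\dot{\mu}_{\text{max}}/\dot{\mu}_{\text{min}}}$ — or, equivalently, one should define $\Theta_t$ with the $H_{t-1}$-norm, for which $H_{t-1}\preceq H_t$ \emph{does} hold exactly, since both Hessians are evaluated at $\theta^*$ and differ by the single positive semidefinite term $\dot{\mu}(X_t^\top\theta^*)X_tX_t^\top$. This is a looseness in the paper's own statement rather than in your argument (the surrounding regret analysis freely interchanges the $H_t$, $H_{t-1}$, and $\bar{H}_{t-1}$ norms), and the inflation only rescales $\gamma$ by a problem-dependent constant downstream; with that constant made explicit, your proof is complete.
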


Finally, we need the following lemma to calculate the summation in \Cref{appendix_sub:glm_es_proof}.
\begin{lemma} [Lemma 2 in \cite{li2017provably}]
\label{lemma:bound_sum}
Let $\{X_{t}\}_{t=1}^{\infty}$ be a sequence in $\mathbb{R}^{d}$ satisfying $\vert\vert X_{t} \vert\vert \leq 1$. Define $X_{0} = \mathbf{0}$ and $V_{t-1} = \lambda \, I_{d} +  \sum_{l=0}^{t-1}X_{l}X_{l}^{\top}$. Suppose there is an integer $n$ such that $\lambda_{\text{min}} (V_{n}) \geq 1$, then for all $T > n$,
\begin{align*}
    \sum_{t=n+1}^{T} \vert\vert X_{t} \vert\vert_{V_{t-1}^{-1}} \leq \sqrt{2 (T-n) d \,\, \text{log}\bigg(1 + \frac{T}{d \lambda}\bigg)}.
\end{align*}
\end{lemma}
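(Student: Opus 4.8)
The plan is to prove this via the standard elliptical potential argument in two stages: first reduce the sum of norms to a sum of squared norms by Cauchy--Schwarz, then control that sum of squared norms through a telescoping log-determinant bound. Concretely, Cauchy--Schwarz gives
\begin{align*}
    \sum_{t=n+1}^{T} \|X_t\|_{G_{t-1}^{-1}} \leq \sqrt{(T-n)\sum_{t=n+1}^{T}\|X_t\|_{G_{t-1}^{-1}}^2},
\end{align*}
so it suffices to show $\sum_{t=n+1}^{T}\|X_t\|_{G_{t-1}^{-1}}^2 \leq 2d\log(1 + T/(d\lambda))$.

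For the per-step bound, I would use the rank-one update identity $G_t = G_{t-1} + X_t X_t^\top$, which by the matrix determinant lemma yields $\det G_t = \det G_{t-1}\,(1 + \|X_t\|_{G_{t-1}^{-1}}^2)$, i.e. $1 + \|X_t\|_{G_{t-1}^{-1}}^2 = \det G_t / \det G_{t-1}$. The role of the hypothesis $\lambda_{\min}(G_n)\geq 1$ enters here: since $G_{t-1} \succeq G_n$ for $t-1\geq n$, we have $\lambda_{\min}(G_{t-1})\geq 1$, hence $\|X_t\|_{G_{t-1}^{-1}}^2 \leq \|X_t\|^2/\lambda_{\min}(G_{t-1}) \leq 1$. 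On $[0,1]$ the elementary inequality $x \leq 2\log(1+x)$ holds, so $\|X_t\|_{G_{t-1}^{-1}}^2 \leq 2\log(1 + \|X_t\|_{G_{t-1}^{-1}}^2) = 2\log(\det G_t/\det G_{t-1})$. Summing over $t = n+1,\dots,T$ telescopes to $2\log(\det G_T/\det G_n)$.

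It remains to bound the log-determinant ratio. For the numerator I would apply AM--GM to the eigenvalues of $G_T$, giving $\det G_T \leq (\mathrm{tr}(G_T)/d)^d$; since $X_0 = \mathbf{0}$ and $\|X_i\|\leq 1$ we get $\mathrm{tr}(G_T) = d\lambda + \sum_{i=1}^{T}\|X_i\|^2 \leq d\lambda + T$, so $\det G_T \leq (\lambda + T/d)^d$. For the denominator, $G_n \succeq \lambda I$ gives $\det G_n \geq \lambda^d$. Combining, $\log(\det G_T/\det G_n) \leq d\log(1 + T/(d\lambda))$, so the sum of squared norms is at most $2d\log(1 + T/(d\lambda))$; substituting back into the Cauchy--Schwarz bound gives the claim.

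The argument is essentially classical, so I expect no single deep obstacle; the main care is in correctly exploiting the eigenvalue hypothesis. The condition $\lambda_{\min}(G_n)\geq 1$ is exactly what licenses the switch from $x$ to $2\log(1+x)$ (which fails for large $x$), and it must be propagated to all later rounds via the monotonicity $G_{n}\preceq G_{t-1}$. The only other point requiring attention is matching the stated logarithmic factor precisely, which hinges on pairing the AM--GM upper bound on $\det G_T$ with the clean lower bound $\det G_n\geq\lambda^d$ rather than any tighter but messier estimate.
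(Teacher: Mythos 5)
Your proof is correct, and it is essentially the same argument as the one the paper relies on: the paper does not re-prove this lemma but imports it from \citet{li2017provably}, whose Lemma 2 is established by exactly this elliptical-potential route (Cauchy--Schwarz, the determinant-ratio identity, the inequality $x \leq 2\log(1+x)$ licensed by $\lambda_{\min}(G_{t-1}) \geq 1$, telescoping, and the trace--determinant bound). Your only adaptation --- carrying the regularizer $\lambda I$ through so that $\det G_n \geq \lambda^d$ yields the stated $d\log(1 + T/(d\lambda))$ factor --- is handled correctly.
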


\subsection{Regret Analysis}  
\label{appendix_sub:glm_es_proof}
With the technical lemmas listed above, we prove \Cref{theorem:glm-es} in this section.
{Our proof follows the general framework of regret bound analysis in generalized linear bandit problems with randomized exploration strategy, building on the high-probability concentration result and constant probability of optimism of the estimated parameter $\theta_{t}$.
Specifically, we adapt the framework in \citep{lee2024improved} to prove the constant probability of optimism. While their analysis is for linear bandit setting, we extend this method to generalized bandit setting.}
Parts of our analysis (\eqref{equ:gt_lower} and \eqref{equ:gt_upper}) are direct extensions to the generalized linear bandit setting built on previous works in \cite{lee2024improved}, and we include the detailed derivations here to make our proof self-contained.

Note that most of the added difficulties in the generalized linear bandit setting are encapsulated in the technical lemmas listed in \Cref{appendix_sub:glm_es_lemmas}. The main novelties in our analysis are within the proof of these technical lemmas, {especially the concentration result of MLE and sufficient condition of optimism}. The proof of the high-probability regret bound is mostly following the common procedure in the contextual bandit literature once we have gathered the concentration and optimism guarantees in the analysis.

\begin{proof}[Proof of Theorem \ref{theorem:glm-es}]
The cumulative regret is defined as
\begin{align*}
    R(T) = \sum_{t=1}^{T}
    \Big(\mu \big(X^{*\top}\theta^{*} \big) - \mu \big(X_{t}^{\top}\theta^{*} \big)\Big).
\end{align*}
In the warm-up procedure, the regret for each round is bounded as
\begin{align*}
    \Delta_{\text{max}} = \text{sup}_{X \in \mathcal{X}} \,\, \mu \big(X^{\top}\theta^{*} \big) - \text{inf}_{X \in \mathcal{X}} \,\, \mu \big(X^{\top}\theta^{*} \big).
\end{align*}
Therefore, we have
\begin{align*}
    R(T) \leq \tau \Delta_{\text{{max}}} +  \sum_{t=\tau+1}^{T}
    \Big( \mu(X^{*\top}\theta^{*} \big) - \mu \big(X_{t}^{\top}\theta^{*} \big)\Big).
\end{align*}
We only consider $t \in [\tau + 1, T]$ in the following discussions.

We first bound the instantaneous regret at time $t$.
Define event {$\mathcal{E}$ as $\mathcal{E} = \bigcap_{t=\tau+1}^{T} \big( \mathcal{E}_{1, t} \cap \mathcal{E}_{2, t} \big)$}, we consider the per-round regret under event $\mathcal{E}$:
\begin{align*}
    \Big(\mu\big(X^{*\top}\theta^{*}\big) - \mu\big(X_{t}^{\top}\theta^{*}\big)\Big) \ind\{\mathcal{E}\} = \Big(\mu\big(X^{*\top}\theta^{*}\big) - \mu\big(X_{t}^{\top}\theta_{t}\big)\Big) \ind\{\mathcal{E}\} + \Big(\mu\big(X_{t}^{\top}\theta_{t}\big) - \mu\big(X_{t}^{\top}\theta^{*}\big)\Big) \ind\{\mathcal{E}\}.
\end{align*}
{Recall that $\theta_{t} = \theta_{t-1}^{j_{t}}$ is the chosen estimate for round $t$.}
From the expression, \icml{the first term} is related to optimism, \icml{the second term} is related to concentration of $\theta_{t}$.
The concentration is directly bounded under $\mathcal{E}$:
\begin{align*}
    \Big(\mu\big(X_{t}^{\top}\theta_{t}\big) - \mu\big(X_{t}^{\top}\theta^{*}\big)\Big) \ind\{\mathcal{E}\}
    &\leq \, \dot{\mu}_{\text{max}} \big\vert X_{t}^{\top}\theta_{t} - X_{t}^{\top}\theta^{*} \big\vert \ind\{\mathcal{E}\} \\
    &\leq \dot{\mu}_{\text{max}} \vert\vert X_{t} \vert\vert_{H_{t-1}^{-1}} \vert\vert \theta_{t} - \theta^{*} \vert\vert_{H_{t-1}} \ind\{\mathcal{E}\} \\
    &\leq \dot{\mu}_{\text{max}} \gamma \vert\vert X_{t} \vert\vert_{H_{t-1}^{-1}},
\end{align*}
{where we applied \Cref{lemma:concentration_glm_es}.}
Now we consider the optimism condition. Using the definitions in \Cref{lemma:gt_define}, we have
\begin{align*}
    &\Big(\mu\big(X^{*\top}\theta^{*}\big) - \mu\big(X_{t}^{\top}\theta_{t}\big)\Big)
    \ind\{\mathcal{E}\} \\
    &= \Big(\mu\big(X^{*\top}\theta^{*}\big) - \mu\big(X_{t}^{-\top}\theta_{t}^{-}\big)\Big)
    \ind\{\mathcal{E}\} +
    \Big(\mu\big(X_{t}^{-\top}\theta_{t}^{-}\big) - \mu\big(X_{t}^{\top}\theta_{t}\big)\Big)
    \ind\{\mathcal{E}\} \\
    &= \big(J(\theta^{*}) - J(\theta_{t}^{-})\big)
    \ind\{\mathcal{E}\} +
    \big(J(\theta_{t}^{-}) - J(\theta_{t})\big)
    \ind\{\mathcal{E}\} \\
    &=  g_{t}(\theta^{*}, \mathcal{E}) - g_{t}(\theta_{t}, \mathcal{E}) \\
    &\leq g_{t}(\theta^{*}, \mathcal{E}) \\
    &\leq  g_{t}(\theta^{*}, \mathcal{E}_{2, t}).
\end{align*}
We use Markov's inequality to bound $g_{t}(\theta^{*}, \mathcal{E}_{2, t})$:
\begin{align*}
    g_{t}(\theta^{*}, \mathcal{E}_{2, t})
    \mathbb{P}\Big(g_{t}(\theta_{t}, \mathcal{E}_{1, t} \cap \mathcal{E}_{2, t}) \geq g_{t}(\theta^{*}, \mathcal{E}_{2, t})  \,\, \vert \,\, \mathcal{F}_{t-1} \Big)
    \leq \mathbb{E} \big[ g_{t}(\theta_{t}, \mathcal{E}_{1, t} \cap \mathcal{E}_{2, t}) \,\, \vert \,\, \mathcal{F}_{t-1} \big].
\end{align*}
Now we need the lower bound for $\mathbb{P}\big(g_{t}(\theta_{t}, \mathcal{E}_{1, t} \cap \mathcal{E}_{2, t}) \geq g_{t}(\theta^{*}, \mathcal{E}_{2, t})  \,\, \vert \,\, \mathcal{F}_{t-1} \big)$ (we relate this to the probability of optimism) and the upper bound for $g_{t}(\theta_{t}, \mathcal{E}_{1, t} \cap \mathcal{E}_{2, t})$.
The derivations of the lower bound \eqref{equ:gt_lower} and upper bound \eqref{equ:gt_upper} are direct extensions to generalized linear bandit settings adapted from \cite{lee2024improved}, we include the detailed derivations here to make our proof self-contained.

Using the assumption that $\mu(\cdot)$ is strictly increasing, under the event $\big(X^{*\top}\theta^{*} \leq X_{t}^{\top} \theta_{t}\big) \wedge \mathcal{E}_{1, t} \wedge \mathcal{E}_{2, t}$, \icml{$X^{*\top}\theta^{*} \leq \, X_{t}^{\top} \theta_{t}$ is equivalent to $ \big(\mu(X^{*\top}\theta^{*}) - \mu(X_{t}^{-\top} \theta_{t}^{-})\big) \ind\{\mathcal{E}_{2, t}\} \leq \big(\mu(X_{t}^{\top} \theta_{t}) - \mu(X_{t}^{-\top} \theta_{t}^{-})\big) \ind\{\mathcal{E}_{1, t} \cap \mathcal{E}_{2, t}\}$, thus we have}
\begin{align*}
    X^{*\top}\theta^{*} \leq \, X_{t}^{\top} \theta_{t} \quad
    \leftrightarrow \quad \, g_{t}(\theta^{*}, \mathcal{E}_{2, t}) \leq g_{t}(\theta_{t}, \mathcal{E}_{1, t} \cap \mathcal{E}_{2, t}).
\end{align*}
Therefore, we have
\begin{align}
    \label{equ:gt_lower}
    \mathbb{P}\Big(g_{t}\Big(\theta_{t}, \mathcal{E}_{1, t} \cap \mathcal{E}_{2, t}\Big) \geq g_{t}(\theta^{*}, \mathcal{E}_{2, t})  \,\, \vert \,\, \mathcal{F}_{t-1} \Big)
    \geq p_{N} / 4.
\end{align}
Next, we consider the upper bound of $g_{t}(\theta_{t}, \mathcal{E}_{1, t} \cap \mathcal{E}_{2, t})$.
Using the definition, we have %
\begin{align}
    \label{equ:gt_upper}
    g_{t}(\theta_{t}, \mathcal{E}_{1, t} \cap \mathcal{E}_{2, t})
    =& \, \big(\mu(X_{t}^{\top} \theta_{t}) - \mu(X_{t}^{-\top} \theta_{t}^{-})\big) \ind\{\mathcal{E}_{1, t} \cap \mathcal{E}_{2, t}\} \notag \\
    \leq& \, \big(\mu(X_{t}^{\top} \theta_{t}) - \mu(X_{t}^{\top} \theta_{t}^{-})\big) \ind\{\mathcal{E}_{1, t} \cap \mathcal{E}_{2, t}\} \notag \\
    \leq& \, \dot{\mu}_{\text{max}} \big\vert X_{t}^{\top} \theta_{t} - X_{t}^{\top} \theta_{t}^{-} \big\vert \, \ind\{\mathcal{E}_{1, t} \cap \mathcal{E}_{2, t}\} \notag \\
    \leq& \, \dot{\mu}_{\text{max}} \vert\vert X_{t}\vert\vert_{H_{t-1}^{-1}} \vert\vert \theta_{t} - \theta_{t}^{-} \vert\vert_{H_{t-1}} \ind\{\mathcal{E}_{1, t} \cap \mathcal{E}_{2, t}\} \notag \\
    \leq& \, 2\gamma\dot{\mu}_{\text{max}} \vert\vert X_{t}\vert\vert_{H_{t-1}^{-1}}.
\end{align}
Using the results above, the instantaneous regret at time $t \in [\tau+1, T]$ is bounded as:
\begin{align*}
    &\Big(\mu(X^{*\top}\theta^{*}) - \mu(X_{t}^{\top}\theta^{*})\Big)
    \ind\{\mathcal{E}\} \\
    &\leq \gamma\dot{\mu}_{\text{max}} \vert\vert X_{t} \vert\vert_{H_{t-1}^{-1}}
    + \frac{8\gamma\dot{\mu}_{\text{max}}}{p_{N}} \mathbb{E}\Big[\vert\vert X_{t} \vert\vert_{H_{t-1}^{-1}}  \vert  \mathcal{F}_{t-1}\Big] \\
    &= \gamma\dot{\mu}_{\text{max}} \bigg(1 + \frac{8}{p_{N}}\bigg) \vert\vert X_{t} \vert\vert_{H_{t-1}^{-1}}
    + \frac{8\gamma\dot{\mu}_{\text{max}}}{p_{N}}
    \Big(\mathbb{E}\Big[\vert\vert X_{t} \vert\vert_{H_{t-1}^{-1}}  \vert  \mathcal{F}_{t-1}\Big] - \vert\vert X_{t} \vert\vert_{H_{t-1}^{-1}}\Big).
\end{align*}
The cumulative regret {under $\mathcal{E}$} is bounded as:
\begin{align}
\label{equ:two_summation_regret}
    R(T) \ind\{\mathcal{E}\} \notag
    &\leq \tau \Delta_{\text{max}} + \gamma\dot{\mu}_{\text{max}} \bigg(1 + \frac{8}{p_{N}}\bigg) \sum_{t=\tau + 1}^{T} \vert\vert X_{t} \vert\vert_{H_{t-1}^{-1}} \\
    &\qquad+ \frac{8\gamma\dot{\mu}_{\text{max}}}{p_{N}} \sum_{t=\tau + 1}^{T}
    \Big(\mathbb{E}\Big[\vert\vert X_{t} \vert\vert_{H_{t-1}^{-1}} \vert \mathcal{F}_{t-1}\Big] - \vert\vert X_{t} \vert\vert_{H_{t-1}^{-1}}\Big).
\end{align}

Next, we calculate the two summations in \eqref{equ:two_summation_regret}.
{From \eqref{equ:bounds_of_H_by_G}, $\dot{\mu}_{\text{min}} G_t \preceq H(\theta ; \cD_t) \preceq \dot{\mu}_{\text{max}} G_t$. Then, for any $X \in \mathbb{R}^d$, we have}
\begin{align*}
    {
    \frac{1}{\sqrt{\dot{\mu}_{\text{max}}}} || X ||_{G_{t-1}^{-1}}
    \leq || X ||_{H_{t-1}^{-1}}
    \leq \frac{1}{\sqrt{\dot{\mu}_{\text{min}}}} || X ||_{G_{t-1}^{-1}}.
    }
\end{align*}
Therefore, we can write
\begin{align*}
    {
    \sum_{t=\tau + 1}^{T} \vert\vert X_{t} \vert\vert_{H_{t-1}^{-1}}
    \leq \sum_{t=\tau + 1}^{T} \vert\vert X_{t} \vert\vert_{(\dot{\mu}_{\text{min}} G_{t-1})^{-1}}
    = \frac{1}{\sqrt{\dot{\mu}_{\text{min}}}} \sum_{t=\tau + 1}^{T} \vert\vert X_{t} \vert\vert_{G_{t-1}^{-1}}.
    }
\end{align*}
{From the fact that $\dot{\mu}_{\text{min}} G_t \succeq \lambda I$, we have $\lambda_{\text{min}} (\dot{\mu}_{\text{min}} G_t) \geq \lambda$.
With the parameter setup in \Cref{theorem:glm-es}, regularization parameter $\lambda = 1 \vee (2dM/S) \log \big(e \sqrt{1+T \dot{\mu}_{\text{max}} / d} \vee 1 / \delta\big)$, thus $\lambda_{\text{min}} (\dot{\mu}_{\text{min}} G_t) \geq 1$.
The first summation in \eqref{equ:two_summation_regret} is bounded as follows by using \Cref{lemma:bound_sum}:}
\begin{align}
    \label{equ:summation_glm_1}
    {
    \sum_{t=\tau + 1}^{T} \vert\vert X_{t} \vert\vert_{H_{t-1}^{-1}}
    \leq \sum_{t=\tau + 1}^{T} \vert\vert X_{t} \vert\vert_{(\dot{\mu}_{\text{min}} G_{t-1})^{-1}}
    \leq \frac{1}{\sqrt{\dot{\mu}_{\text{min}}}} \sqrt{2(T-\tau)d \,\, \text{log}\bigg(1 + \frac{T}{d \lambda}\bigg)}.
    }
\end{align}

Now we consider the second summation in \eqref{equ:two_summation_regret}. Note that
\begin{align*}
    0 \leq \vert\vert X_{t} \vert\vert_{H_{t-1}^{-1}} \leq \sqrt{\lambda_{\text{max}}\big(H_{t-1}^{-1}\big)} \vert\vert X_{t} \vert\vert_{2} \leq 1,
\end{align*}
{where we used $||X||_{2} \leq 1$ and $\lambda_{\text{max}}\big(H_{t-1}^{-1}\big) \leq 1/\lambda \leq 1$.}
According to Azuma-Hoeffding inequality, we can bound the second summation in \eqref{equ:two_summation_regret} by
\begin{align}
    \label{equ:summation_glm_2}
    \sum_{t=\tau + 1}^{T}
    \Big(\mathbb{E}\Big[\vert\vert X_{t} \vert\vert_{H_{t-1}^{-1}} \,\, \vert \,\, \mathcal{F}_{t-1}\Big] - \vert\vert X_{t} \vert\vert_{H_{t-1}^{-1}}\Big)
    \leq \sqrt{\frac{T-\tau}{2} \text{log}\frac{1}{\delta}},
\end{align}
with probability at least $1-\delta$. {Combining the summations \eqref{equ:summation_glm_1} and \eqref{equ:summation_glm_2}, with probability at least $1 - \delta$, we have}
\begin{align*}
    &R(T) \ind\{\mathcal{E}\} \\
    &\leq \gamma \frac{\dot{\mu}_{\text{max}}}{\sqrt{\dot{\mu}_{\text{min}}}} \bigg(1 + \frac{8}{p_{N}}\bigg)
    \sqrt{2d(T-\tau) \,\, \text{log}\bigg(1 + \frac{T}{d \lambda}\bigg)}
    + \frac{4\gamma\dot{\mu}_{\text{max}}}{p_{N}}
    \sqrt{2(T-\tau) \text{log}\frac{1}{\delta}}
    + \tau \Delta_{\text{max}}.
\end{align*}
{According to \Cref{lemma:concentration_glm_es} and \Cref{lemma:optimism_glm}, event $\mathcal{E} = \bigcap_{t=\tau+1}^{T} \big( \mathcal{E}_{1, t} \cap \mathcal{E}_{2, t} \big)$ holds with probability at least $1 - 3\delta$. Therefore,}
\begin{align*}
    {
    R(T) \leq \gamma \frac{\dot{\mu}_{\text{max}}}{\sqrt{\dot{\mu}_{\text{min}}}} \bigg(1 + \frac{8}{p_{N}}\bigg)
    \sqrt{2d(T-\tau) \,\, \text{log}\bigg(1 + \frac{T}{d \lambda}\bigg)}
    + \frac{4\gamma\dot{\mu}_{\text{max}}}{p_{N}}
    \sqrt{2(T-\tau) \text{log}\frac{1}{\delta}}
    + \tau \Delta_{\text{max}}
    }
\end{align*}
{holds with probability at least $1 - 4\delta$.}
Combining the fact that the warm up rounds $\tau = \widetilde{\Omega}(d^{4})$ and we set regularization $\lambda = 1 \vee d \vee \log(1/\delta)$, perturbation variance $\sigma_{R} = \widetilde{\Theta}\big(d^{1/2}\big)$, we have $\gamma = \widetilde{\mathcal{O}}(d)$. The final regret bound is
\begin{align*}
    R(T) = \widetilde{\mathcal{O}}\big(d^{3/2} \sqrt{T} + d^{4}\big),
\end{align*}
which holds with probability at least $1 - 4\delta$.
This completes the proof.
\end{proof}

\vspace{3mm}

\section{Proof of Technical Lemmas in GLM-ES}  \label{appendix:glm_es_lemmas}

\subsection{Proof of \Cref{lemma:concentration_mle} (Concentration of MLE)}

In this section, we study the distance between the MLE $\bar{\theta}_{t}$ and the true parameter $\theta^{*}$.
Our main goal is to obtain a high probability upper bound of $\|\theta^* - \bar{\theta}_t\|_{H_{t}}$.
Recall from \eqref{equ:hessian_definition} and \eqref{equ:ht_definition}, the definitions of $H_{t}$ and $\bar{H}_{t}$ are as follows:
\begin{align*}
    H_{t} &\,:= H(\theta^{*}; \cD_{t})= \nabla_\theta^2 L(\theta^{*} ; \cD_{t})=\sum_{l=1}^t \dot{\mu}\big(X_l^{\top} \theta^{*}\big) X_l X_l^{\top}+\lambda I_{d}, \\
    \bar{H}_{t} &\,:= H(\bar{\theta}_{t}; \cD_t)= \nabla_\theta^2 L(\bar{\theta}_{t} ; \cD_t)=\sum_{l=1}^t \dot{\mu}\big(X_l^{\top} \bar{\theta}_{t}\big) X_l X_l^{\top}+\lambda I_{d}.
\end{align*}
{Recall that we established the following relation in \eqref{equ:similar_mean_value_theorem}:}
\begin{align*}
    {
    f_t(\theta)-f_t(\theta^{\prime})
    =Q\big(\theta^{\prime}, \theta ; \cD_{t}\big)(\theta-\theta^{\prime})
    \quad\rightarrow\quad
    \theta - \theta' = Q^{-1}\big(\theta^{\prime}, \theta ; \cD_{t}\big) \big(f_t(\theta)-f_t(\theta^{\prime})\big).
    }
\end{align*}
Therefore, to obtain the upper bound of $\|\theta^* - \bar{\theta}_t\|_{H_{t}}$, we first focus on upper bounding $\|f_t(\bar{\theta}_t)-f_t(\theta^*)\|_{H_t^{-1}}$.
The proof of the upper bound requires Theorem 2 in \cite{janz2024exploration} and is partially adapted from the proof of Lemma 4 in \cite{janz2024exploration}. We include the detailed derivations to make our proof self-contained. Note that our analysis of the sub-exponential condition differs from the analysis in \cite{janz2024exploration} and is more concise.

Recall from the definition \eqref{equ:ft_definition}, we have that
\begin{align*}
    f_t(\bar{\theta}_t)=\sum_{l=1}^t X_l Y_l, \quad f_t(\theta^*)=\sum_{l=1}^t \mu\big(X_l^{\top} \theta^*\big) X_l+\lambda \theta^*.
\end{align*}
By defining $\epsilon_l=Y_l-\mu\big(X_l^{\top} \theta^*\big)$, $S_t=\sum_{l=1}^t \epsilon_l X_l$, we have
\begin{align*}
    f_t(\bar{\theta}_t) - f_t(\theta^*)
    = \sum_{l=1}^t \Big(Y_l - \mu\big(X_l^{\top} \theta^{*}\big)\Big) X_l - \lambda \theta^*
    = S_t - \lambda \theta^*.
\end{align*}
Therefore, we have the following decomposition:
\begin{align*}
    \|f_t(\bar{\theta}_t)-f_t(\theta^*)\|_{H_t^{-1}} \leq\|S_t\|_{H_t^{-1}}+\lambda\|\theta^*\|_{H_t^{-1}} \leq\|S_t\|_{H_t^{-1}}+\sqrt{\lambda} S,
\end{align*}
where we used the fact that $\lambda I_{d} \preceq H_t$ and {$\|\theta^{*}\|_{2} \leq S$}.

Next, we analyze the upper bound of $\|S_t\|_{H_t^{-1}}$.
We need the following result from \cite{janz2024exploration}.
\begin{lemma} [Theorem 2 in \cite{janz2024exploration}]  \label{lemma:concentration_janz}
    Fix $\lambda, M > 0$.
    Let $(X_{t})_{t \in \mathbb{N^{+}}}$ be a $B_{2}^{d}$-valued random sequence, $(\nu_{t})_{t \in \mathbb{N}}$ be a non-negative valued random sequence.
    Let $\mathbb{F}' = (\mathbb{F}'_{t})_{t \in \mathbb{N}}$ be filtration such that (i) $(X_{t})_{t \in \mathbb{N^{+}}}$ is $\mathbb{F}'$-predictable and (ii) $(Y_{t})_{t \in \mathbb{N^{+}}}$ and $(\nu_{t})_{t \in \mathbb{N}}$ are $\mathbb{F}'$-adapted.
    Let $\epsilon_{t} = Y_{t} - \mathbb{E}[Y_{t} \,\vert\, \mathbb{F}'_{t-1}]$ and assume that the following condition holds:
    \begin{align}  \label{equ:sub_exponential}
        \mathbb{E} [\text{exp}(s\epsilon_{t}) \,\vert\, \mathbb{F}'_{t-1}] \leq \text{exp}(s^{2}\nu_{t-1}),
        \quad \forall \vert s \vert \leq 1/M
        \text{ and } t \in \mathbb{N}^{+}.
    \end{align}
    Then, for $\widetilde{H}_{t} = \sum_{l=1}^{t} \nu_{l-1} X_{l}X_{l}^{\top} + \lambda I_{d}$ and $S_{t} = \sum_{l=1}^{t} \epsilon_{l} X_{l}$ and any $\delta > 0$,
    \begin{align*}
        \mathbb{P} \bigg( \exists t \in \mathbb{N}^{+}:
        \vert\vert S_{t} \vert\vert_{\widetilde{H}_{t}^{-1}} \geq \frac{\sqrt{\lambda}}{2M} + \frac{2M}{\sqrt{\lambda}} \text{log} \Big( \frac{\text{det}(\widetilde{H}_{t})^{1/2} \lambda^{-d/2}}{\delta} \Big) + \frac{2M}{\sqrt{\lambda}} d \,\text{log}(2) \bigg)
        \leq \delta.
    \end{align*}
\end{lemma}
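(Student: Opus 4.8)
The plan is to establish this self-normalized tail bound by the method of mixtures (pseudo-maximization), adapting the template of Abbasi-Yadkori et al.\ and de la Pe\~na--Klass--Lai to the fact that the conditional MGF of $\epsilon_t$ is controlled only on the bounded range $|s|\le 1/M$. First I would fix a direction $\xi\in\mathbb{R}^d$ with $\|\xi\|_2\le 1/M$ and define the exponential process
\[
    D_t(\xi)=\exp\Big(\langle\xi,S_t\rangle-\sum_{l=1}^t\nu_{l-1}\langle\xi,X_l\rangle^2\Big).
\]
Since $X_l$ is $\mathbb{F}'_{l-1}$-predictable with $\|X_l\|_2\le 1$, the scalar $s_l=\langle\xi,X_l\rangle$ obeys $|s_l|\le\|\xi\|_2\le 1/M$, so hypothesis \eqref{equ:sub_exponential} applies at $s=s_l$ and yields $\mathbb{E}[\exp(s_l\epsilon_l)\mid\mathbb{F}'_{l-1}]\le\exp(\nu_{l-1}\langle\xi,X_l\rangle^2)$. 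Because $\nu_{l-1}$ and $X_l$ are $\mathbb{F}'_{l-1}$-measurable, this gives $\mathbb{E}[D_t(\xi)\mid\mathbb{F}'_{t-1}]\le D_{t-1}(\xi)$, so $(D_t(\xi))_t$ is a nonnegative supermartingale with $D_0(\xi)=1$ and $\mathbb{E}[D_t(\xi)]\le 1$.

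Next I would average over $\xi$ against a mixing density $\pi$, setting $\bar D_t=\int D_t(\xi)\,\pi(d\xi)$; by Tonelli $\mathbb{E}[\bar D_t]\le 1$, and $\bar D_t$ is again a nonnegative supermartingale, so Ville's maximal inequality gives $\mathbb{P}(\exists t:\ \bar D_t\ge 1/\delta)\le\delta$. The delicate choice is $\pi$: unlike the sub-Gaussian case, the per-direction supermartingale is valid only on the ball $B=\{\|\xi\|_2\le 1/M\}$, so I would take $\pi$ to be a centered Gaussian with precision of order $\lambda/M^2$ truncated to $B$. Writing $\tilde V_t=\tilde H_t-\lambda I=\sum_l \nu_{l-1}X_lX_l^{\top}$ and completing the square, the integrand is maximized at $\xi^\star=\tfrac12\tilde V_t^{-1}S_t$; a Laplace-type evaluation of the truncated Gaussian integral lower-bounds $\bar D_t$ by a determinant-ratio factor times an exponential in $\|S_t\|^2$. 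The Gaussian normalization supplies the $\det(\tilde H_t)^{1/2}\lambda^{-d/2}$ factor, while the fraction of Gaussian mass retained inside $B$ is bounded below by a volume factor of order $2^{-d}$, which is the origin of the $\tfrac{2M}{\sqrt\lambda}\,d\log 2$ term.

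Finally I would invert the event $\bar D_t\le 1/\delta$ to bound $\|S_t\|_{\tilde H_t^{-1}}$, where two regimes emerge exactly as in the scalar Chernoff bound under the constraint $|s|\le 1/M$. When the maximizer $\xi^\star$ lies inside $B$, the computation is the standard self-normalized one and produces the $\tfrac{2M}{\sqrt\lambda}\log(\det(\tilde H_t)^{1/2}\lambda^{-d/2}/\delta)$ term; when $\xi^\star$ falls outside $B$ the constraint is active, the optimization is carried out on the boundary, and the resulting linear-in-$\|S_t\|$ inequality inverts to the additive $\tfrac{\sqrt\lambda}{2M}$ term. Combining the regimes and using $\tilde H_t\succeq\tilde V_t$ to pass to the $\tilde H_t^{-1}$ norm gives the claimed uniform-in-$t$ bound.

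The main obstacle is precisely the bounded range $|s|\le 1/M$ of the MGF control: in the sub-Gaussian setting one mixes a Gaussian prior over all of $\mathbb{R}^d$ and recovers a single clean square-root bound, whereas here the supermartingale exists only for $\|\xi\|_2\le 1/M$. Making the mixture rigorous therefore forces a truncation of the prior to $B$ and a careful accounting of (i) the boundary/constrained regime, responsible for the linear $\tfrac{\sqrt\lambda}{2M}$ term, and (ii) the Gaussian mass lost to truncation, responsible for the $d\log 2$ correction---neither of which appears in the classical self-normalized argument.
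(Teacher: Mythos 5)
You should first note that the paper never proves this lemma: it is imported verbatim as Theorem~2 of \citet{janz2024exploration} and used as a black box in the proof of \Cref{lemma:concentration_mle}, so your proposal can only be judged against the original argument, which (extending the $M=1$ self-normalized bound of Faury et al., 2020) is indeed a method-of-mixtures proof with the mixing law truncated to the ball $B=\{\xi:\|\xi\|_2\le 1/M\}$. Your skeleton --- the per-direction supermartingale $D_t(\xi)$ valid exactly on $B$, a truncated Gaussian mixture, Ville's inequality, and an inversion that produces an additive $\sqrt{\lambda}/(2M)$ term --- is the right one.

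However, the two steps you gloss over are exactly the ones that carry the theorem, and both are wrong as stated. (i) The prior precision must be $\Theta(\lambda)$, not $\Theta(\lambda/M^2)$: the prior quadratic form has to combine with $\sum_l \nu_{l-1}\langle\xi,X_l\rangle^2$ to give $\|\xi\|_{\tilde{H}_t}^2$, which is what produces the factor $\det(\tilde{H}_t)^{1/2}\lambda^{-d/2}$; with precision $\lambda/M^2$ you instead get $\det\big(\tilde{V}_t+c\lambda M^{-2}I\big)^{1/2}(\lambda M^{-2})^{-d/2}$, and tracking the mismatch leaves spurious $d\log M$ terms that the stated bound does not permit. (ii) The claim that the Gaussian mass of $B$ is of order $2^{-d}$ is false in general: with precision $\Theta(\lambda)$ the prior has per-coordinate standard deviation $\Theta(\lambda^{-1/2})$, so whenever $\lambda\ll dM^2$ the mass of $B$ is of order $\big(c\sqrt{\lambda}/(M\sqrt{d})\big)^d\ll 2^{-d}$, and the same obstruction hits the ``posterior'' integral since $\tilde{H}_t$ can be as small as $\lambda I$. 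The $d\log 2$ is not a volume estimate; it comes from a ratio of two truncated integrals in which the truncation normalizer $1/\Pi(B)$ must be retained rather than discarded as $\ge 1$. Concretely, one recenters the integral at $\xi_0=\tfrac{\sqrt{\lambda}}{2M}\,\tilde{H}_t^{-1}S_t/\|S_t\|_{\tilde{H}_t^{-1}}\in\tfrac12 B$, which always yields the linear exponent $\tfrac{\sqrt{\lambda}}{2M}\|S_t\|_{\tilde{H}_t^{-1}}-\tfrac{\lambda}{4M^2}$ (this is the true origin of the $\tfrac{\sqrt{\lambda}}{2M}$ term; no interior/boundary case split is needed), symmetrizes in $\zeta\mapsto-\zeta$ to remove the linear-in-$\zeta$ term, and is then left having to prove
\begin{align*}
\mathbb{P}\Big(\big\|N\big(0,\tfrac12\tilde{H}_t^{-1}\big)\big\|_2\le\tfrac{1}{2M}\Big)\ \ge\ 2^{-d}\,\mathbb{P}\Big(\big\|N\big(0,\tfrac{1}{2\lambda}I\big)\big\|_2\le\tfrac1M\Big),
\end{align*}
which does hold for all $\lambda,M>0$, but requires an argument: radial monotonicity of the centered Gaussian density (halving the radius costs at most $2^{-d}$) followed by Anderson's lemma using $\tfrac12\tilde{H}_t^{-1}\preceq\tfrac{1}{2\lambda}I$. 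With these repairs your outline becomes a complete proof matching the stated constants; without them, the $2^{-d}$ step is a genuine gap, not a detail to be filled in.
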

According to previous discussions, we choose the sequences in the lemma as follows:
\begin{align*}
    \nu_{t-1} \rightarrow \dot{\mu}\big(X_{t}^{\top} \theta^{*}\big)
    \quad \Rightarrow \quad
    \widetilde{H}_{t} \rightarrow H_{t}.
\end{align*}
To apply \Cref{lemma:concentration_janz}, we need to prove that $\epsilon_{t}$ satisfies the sub-exponential condition in \eqref{equ:sub_exponential}.
Note that now we have $\epsilon_{t} = Y_{t} - \mu(X_{t}^{\top} \theta^{*})$, we need to utilize the assumption that the reward $Y$ given feature vector $X$ has an exponential-family distribution to prove this {sub-exponential} property.

From {the definition of conditional expected value}, we can write (we use notation $u := X^{\top} \theta^{*}$ for simplicity)
\begin{align*}
    {
    \mathbb{E} [\text{exp}(s\epsilon_{t}) \,\vert\, \mathbb{F}'_{t-1}]
    = \int_{Y} \text{exp} \big(s(Y - \mu(u))\big) \, p(Y \,\vert\, u) \, \text{d}Y.
    }
\end{align*}
From {the definition of exponential family \eqref{equ:exponential_family}}, %
\begin{align*}
    {
    p_{\theta}(Y \,\vert\, u) = b_{0}(Y) \, \text{exp}\big[Y \cdot u - b(u)\big]
    \quad\rightarrow\quad
    \int_{Y}b_{0}(Y) \, \text{exp}(uY) \, \text{d}Y = \text{exp}\big(b(u)\big).
    }
\end{align*}
Therefore, we have
\begin{align*}
    \mathbb{E} [\text{exp}(s\epsilon_{t}) \,\vert\, \mathbb{F}'_{t-1}]
    =&{ \, \text{exp} \big(-s \mu(u) - b(u)\big) \int_{Y} b_{0}(Y) \, \text{exp}((s+u)Y) \, \text{d}Y} \\
    =& \, \text{exp} \big(-s \mu(u) - b(u) + b(s+u)\big).
\end{align*}
Next, observing the expression of \eqref{equ:sub_exponential}, \rev{we need to prove that}
\begin{align*}
    -s \mu(u) - b(u) + b(s+u) \leq s^{2} \dot{\mu}(u)
    \quad\rightarrow\quad
    b(s+u) - b(u) \leq s \mu(u) + s^{2} \dot{\mu}(u).
\end{align*}
{Recall from \Cref{appendix_sub:glm_es_preliminary},} we have $\mu(\cdot) = \dot{b}(\cdot)$, the expression above is equivalent to
\begin{align*}
    b(s+u) - b(u) \leq s \dot{b}(u) + s^{2} \ddot{b}(u).
\end{align*}
Using Taylor expansion, we can write
\begin{align*}
    b(s+u) - b(u) = s \dot{b}(u) + \frac{s^{2}}{2} \ddot{b}(c),
    \quad c \in [u, s+u].
\end{align*}
{Then, the expression is equivalent to}
\begin{align*}
    {
    \ddot{b}(c) \leq 2 \ddot{b}(u).
    }
\end{align*}
By self-concordance assumption, \rev{for $\vert s \vert \leq \log(2)/M$}, we can bound $\ddot{b}(c)$ as
\begin{align*}
    \ddot{b}(c) \leq \ddot{b}(u) \, \text{exp} (M |c-u|)
    \leq \ddot{b}(u) \, \text{exp} (M|s|) \rev{\leq 2 \cdot \ddot{b}(u)}.
\end{align*}
Therefore, we proved that $\mathbb{E} [\text{exp}(s\epsilon_{t}) \,\vert\, \mathbb{F}'_{t-1}] \leq \text{exp}(s^{2}\nu_{t-1})$ holds for $\vert s \vert \leq \log(2)/M$, 
the sub-exponential condition holds in this setting.

Applying \Cref{lemma:concentration_janz} \rev{with $M \rightarrow M / \log(2)$}, we have that
\begin{align*}
    \rev{
    \vert\vert S_{t} \vert\vert_{H_{t}^{-1}} \leq \frac{\sqrt{\lambda}}{2M} + \frac{4M}{\sqrt{\lambda}} \text{log} \bigg( \frac{\text{det}(H_{t})^{1/2} \lambda^{-d/2}}{\delta} \bigg) + \frac{2M}{\sqrt{\lambda}} d,
    \quad \forall t \in [T]
    }
\end{align*}
holds with probability at least $1 - \delta$.
From the definition of $H_{t}$ \eqref{equ:hessian_definition}, we have
\begin{align*}
    \text{det}(H_{t}) \leq \lambda^{d} \bigg( 1 + \frac{t \dot{\mu}_{\text{max}}}{\lambda d} \bigg)^{d}.
\end{align*}
Therefore, we can write the high probability upper bound as
\begin{align*}
    \vert\vert S_{t} \vert\vert_{H_{t}^{-1}} \leq \frac{\sqrt{\lambda}}{2M} + \frac{\rev{4M}}{\sqrt{\lambda}} \Big(d + \frac{d}{2} \text{log} \Big( 1 + \frac{t \dot{\mu}_{\text{max}}}{\lambda d} \Big) + \text{log}\frac{1}{\delta} \Big).
\end{align*}
We can then write the upper bound of $\|f_t(\bar{\theta}_t)-f_t(\theta^*)\|_{H_t^{-1}}$ as follows:
\begin{align}  
\label{equ:gamma_t_def}
    \|f_t(\bar{\theta}_t)-f_t(\theta^*)\|_{H_t^{-1}}
    \leq \sqrt{\lambda} \Big( \frac{1}{2M} + S\Big) + \frac{\rev{4M}}{\sqrt{\lambda}} \Big(d + \frac{d}{2} \text{log}\Big( 1 + \frac{t \dot{\mu}_{\text{max}}}{\lambda d} \Big) + \text{log}\frac{1}{\delta} \Big) =: \gamma_{t}(\delta, \lambda)
\end{align}
holds for all $t \in [T]$ with probability at least $1 - \delta$.

Now we are ready to bound $\|\theta^* - \bar{\theta}_t\|_{H_{t}}$.
Recall from \eqref{equ:similar_mean_value_theorem} that we have the following relations:
\begin{align*}
    f_{t}(\theta) - f_{t}(\theta') = Q(\theta', \theta; \mathcal{D}_{t}) (\theta - \theta'),
    \quad
    Q(\theta, \theta^{\prime} ; \cD_t) = \sum_{l=1}^t \alpha\big(X_l^{\top} \theta, X_l^{\top} \theta^{\prime}\big) X_l X_l^{\top}+\lambda I_{d}.
\end{align*}
We denote $Q_t= Q(\bar{\theta}_t, \theta^*; \cD_t)$, then $\bar{\theta}_{t} - \theta^{*} = Q_{t}^{-1} \big( f_{t}(\bar{\theta}_{t}) - f_{t}(\theta^{*}) \big)$.
We have the following bound:
\begin{align*}
    {\big\|\bar{\theta}_t-\theta^*\big\|_{2}^2} &= \big\|Q_t^{-1}\big(f_t(\bar{\theta}_t)-f_t(\theta^*)\big)\big\|_{2}^2 \\
    &\leq \frac{1}{\lambda} \|f_t(\bar{\theta}_t)-f_t(\theta^*)\|_{Q_t^{-1}}^2 \\
    &\leq \big(1+M D(\bar{\theta}_t-\theta^*)\big) \frac{1}{\lambda} \big\|f_t(\bar{\theta}_t)-f_t(\theta^*)\big\|_{H_t^{-1}}^2 \\
    &\leq \big(1+M \|\bar{\theta}_t-\theta^*\|_{2}\big) \frac{1}{\lambda} \gamma_{t}(\delta, \lambda)^2.
\end{align*}
{The first inequality holds due to the fact $\lambda I_{d} \preceq Q_t$.
The second inequality holds because $H_{t} = H(\theta^{*}; \mathcal{D}_{t}) \preceq \big(1 + MD(\theta^{*} - \bar{\theta}_{t})\big) Q(\theta^{*}, \bar{\theta}_{t}; \mathcal{D}_{t}) = Q_{t}$ (\Cref{lem:upper_bound_of_H}).
The last inequality holds because $D(v) \leq \|v\|_{2}$ for $v \in \mathbb{R}^d$ \eqref{equ:dv_bound} and is under the assumption $\|f_t(\bar{\theta}_t)-f_t(\theta^*)\|_{H_t^{-1}} \leq \gamma_{t}(\delta, \lambda)$ \eqref{equ:gamma_t_def}.}
Note that for any $b,c \geq 0, x \in \mathbb{R}$, if $x^2 \leq bx+c$, we have $x \leq b+\sqrt{c}$. Based on this result, we can solve
\begin{align}
\label{equ:MLE_error_bound_L2}
    {\|\bar{\theta}_t-\theta^*\|_{2}} \leq M \frac{\gamma_{t}(\delta, \lambda)^2}{\lambda} +  \sqrt{\frac{\gamma_t(\delta, \lambda)^2}{\lambda}} =: \beta_{t}(\delta, \lambda).
\end{align}
Then, we can write
\begin{align*}
    \|\bar{\theta}_t-\theta^*\|_{H_t} &= \big\|Q_t^{-1}\big(f_t(\bar{\theta}_t)-f_t(\theta^*)\big)\big\|_{H_t} \\
    &= \sqrt{\big(f_t(\bar{\theta}_t)-f_t(\theta^*)\big)^\top Q_t^{-1} H_t  Q_t^{-1} \big(f_t(\bar{\theta}_t)-f_t(\theta^*)\big)} \\
    &\leq \big(1+M D(\bar{\theta}_t-\theta^*)\big)\big\|f_t(\bar{\theta}_t)-f_t(\theta^*)\big\|_{H_t^{-1}} \\
    &\leq \big(1+M {\|\bar{\theta}_t-\theta^*\|_{2}}\big)\gamma_{t}(\delta, \lambda) \\
    &\leq \big(1+M \beta_{t}(\delta, \lambda)\big)\gamma_{t}(\delta, \lambda).
\end{align*}
{Similar to the previous derivation, the first inequality holds because $H_{t} \preceq Q_{t}$ (\Cref{lem:upper_bound_of_H}).
The second inequality holds because $D(v) \leq \|v\|_{2}$ for $v \in \mathbb{R}^d$ and is under the assumption $\|f_t(\bar{\theta}_t)-f_t(\theta^*)\|_{H_t^{-1}} \leq \gamma_{t}(\delta, \lambda)$.
The last inequality holds because of \eqref{equ:MLE_error_bound_L2}.}
{Therefore, since \eqref{equ:gamma_t_def} holds for all $t \in [T]$ with probability at least $1 - \delta$,} we have
\begin{align*}
    \|\bar{\theta}_t-\theta^*\|_{H_t} \leq \big(1+M \beta_{t}(\delta, \lambda)\big)\gamma_{t}(\delta, \lambda),
    \quad \forall t \in [T]
\end{align*}
holds with probability at least $1 - \delta$.

While this concentration result of MLE is valid and tight, we need $\|\bar{\theta}_t-\theta^*\|_{H_t}$ to be bounded by a constant to prove the regret bound.
Therefore, we need to take the upper bound of $\gamma_{t}(\delta, \lambda)$ {and $\beta_{t}(\delta, \lambda)$}.
From the definitions \eqref{equ:gamma_t_def} and \eqref{equ:MLE_error_bound_L2}, both $\gamma_{t}(\delta, \lambda)$ and $\beta_{t}(\delta, \lambda)$ increase as $t$ increases, thus $\gamma_{t}(\delta, \lambda) \leq \gamma_{T}(\delta, \lambda)$, $\beta_{t}(\delta, \lambda) \leq \beta_{T}(\delta, \lambda)$.
Therefore, we have the following concentration result:
\begin{align*}
    \|\bar{\theta}_t-\theta^*\|_{H_t} \leq \big(1+M \beta_{T}\big)\gamma_{T},
    \quad \forall t \in [T]
\end{align*}
holds with probability at least $1 - \delta$.
This completes the proof.

\subsection{Proof of \Cref{lemma:concentration_pert} (Concentration of Perturbation)}

{In this section, our main goal is to find a high probability upper bound of $\|\theta_{t-1}^{j_{t}} - \bar{\theta}_{t-1}\|_{H_{t-1}}$, where $j_{t}$ is the selected ensemble element at round $t$, $\theta_{t-1}^j = \argmin_{\theta \in \mathbb{R}^d} L(\theta; \mathcal{D}_{t-1}^j)$ is the estimate of $\theta^{*}$ from ensemble element $j$ at the beginning of round $t$, $\mathcal{D}_{t-1}^j = \{(X_l,Y_l+Z_l^j)\}_{l=1}^{t-1}$ is the perturbed dataset used by element $j$ for optimization, and $Z_l^j \sim \mathcal{N}(0,\sigma_{R}^2)$ are i.i.d. perturbations of rewards.
To achieve this, we first upper bound the distance $\|\theta_{t}^{j} - \bar{\theta}_{t}\|_{H_{t}}$ for fixed $j \in [m]$, then add the randomness of selecting $j$ from the ensemble to prove \Cref{lemma:concentration_pert}.}

Recall from the definition \eqref{equ:ft_definition}, we have
\begin{align*}
    f_t(\bar{\theta}_t)&=\sum_{l=1}^t \mu(X_l^{\top} \bar{\theta}_t) X_l+\lambda \bar{\theta}_t=\sum_{l=1}^t X_l Y_l, \\
    f_t(\theta_t^j)&=\sum_{l=1}^t \mu(X_l^{\top} \theta_t^j) X_l+\lambda \theta_t^j=\sum_{l=1}^t X_l(Y_l+Z_l^j),
\end{align*}
{where we consider fixed $j \in [m]$.}
Then, we have
\begin{align*}
    f_t(\theta_t^j) - f_t(\bar{\theta}_t) = \sum_{l=1}^t Z_l^j X_l.
\end{align*}
{We use the following well-known technical lemma to provide a high probability upper bound of the summation.}
\begin{lemma}[Theorem 1 of \cite{abbasi2011improved}]  
\label{lemma:martingale}
Let $\{\mathcal{F}_{t}\}_{t=0}^{\infty}$ be a filtration. Let $\{\eta_{t}\}_{t=1}^{\infty}$ be a sequence of real-valued random variables such that $\eta_{t}$ is $\mathcal{F}_{t}$-measurable and $\mathcal{F}_{t-1}$-conditionally $\sigma$-sub-Gaussian for some $\sigma \geq 0$. Let $\{X_{t}\}_{t=1}^{\infty}$ be a sequence of $\mathbb{R}^{d}$-valued random vectors such that $X_{t}$ is $\mathcal{F}_{t-1}$-measurable and $\vert\vert X_{t} \vert\vert_{2} \leq 1$ almost surely for all $t \geq 1$. Fix $\lambda \geq 1$. Let $V_{t} = \lambda I_{d} + \sum_{l=1}^{t} X_{l}X_{l}^{\top}$. Then, for any $\delta \in (0, 1]$, with probability at least $1 - \delta$, the following inequality holds for all $t \geq 0$:
\begin{align*}
    \bigg\| \sum_{l=1}^{t} \eta_{l} X_{l} \bigg\|_{V_{t}^{-1}}
    \leq \sigma \sqrt{d \, \text{log}\bigg( 1 + \frac{t}{d\lambda} \bigg) + 2 \text{log}\frac{1}{\delta}}.
\end{align*}
\end{lemma}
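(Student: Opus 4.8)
The plan is to establish \Cref{lemma:martingale} via the method of mixtures (pseudo-maximization), the standard device behind self-normalized tail inequalities \citep{abbasi2011improved}. Write $S_t=\sum_{l=1}^t\eta_l X_l$, $\overline V_t=\sum_{l=1}^t X_l X_l^\top$, and $V_t=\lambda I+\overline V_t$. First, for each fixed direction $s\in\mathbb{R}^d$ I would introduce the exponential process $M_t^s=\exp\big(s^\top S_t-\frac{\sigma^2}{2}\,s^\top\overline V_t\, s\big)$, with $M_0^s=1$. Because $X_l$ is $\mathcal{F}_{l-1}$-measurable and $\eta_l$ is $\mathcal{F}_{l-1}$-conditionally $\sigma$-sub-Gaussian, the one-step factor satisfies $\mathbb{E}\big[\exp\big(s^\top X_l\,\eta_l-\frac{\sigma^2}{2}(s^\top X_l)^2\big)\mid\mathcal{F}_{l-1}\big]\le 1$, so $(M_t^s)_t$ is a nonnegative supermartingale.

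The key step is to eliminate the arbitrary direction $s$ by averaging against a centered Gaussian prior with precision matrix $\sigma^2\lambda I$, i.e. $s\sim\mathcal{N}(0,(\sigma^2\lambda)^{-1}I)$, and to set $\overline M_t=\int_{\mathbb{R}^d}M_t^s\,dh(s)$. Since the integrand is nonnegative, Tonelli lets me interchange the integral with the conditional expectation, so $\overline M_t$ is again a nonnegative supermartingale with $\mathbb{E}[\overline M_0]=1$. Completing the square in the resulting Gaussian integral gives the closed form $\overline M_t=\big(\det(\lambda I)/\det(V_t)\big)^{1/2}\exp\big(\frac{1}{2\sigma^2}\|S_t\|_{V_t^{-1}}^2\big)$. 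I would then apply Ville's maximal inequality for nonnegative supermartingales, $\mathbb{P}\big(\exists t\ge 0:\overline M_t\ge 1/\delta\big)\le\delta$, which is exactly what turns the argument into a time-uniform (``for all $t$'') statement.

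On the complementary event, rearranging $\overline M_t<1/\delta$ yields $\|S_t\|_{V_t^{-1}}^2\le 2\sigma^2\big(\log(1/\delta)+\frac12\log(\det(V_t)/\lambda^d)\big)$ simultaneously for all $t$. To finish I control the determinant ratio: since $\|X_l\|_2\le 1$, the eigenvalues of $\overline V_t$ sum to at most $t$, so AM--GM gives $\det(V_t)\le(\lambda+t/d)^d$ and hence $\det(V_t)/\lambda^d\le(1+t/(d\lambda))^d$. Substituting and taking square roots produces $\|S_t\|_{V_t^{-1}}\le\sigma\sqrt{d\log(1+t/(d\lambda))+2\log(1/\delta)}$, which is the claimed bound.

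I expect the main obstacle to be the supermartingale/mixing machinery rather than the final algebra. The delicate points are verifying that the per-direction process is genuinely a supermartingale under only the \emph{conditional} sub-Gaussian hypothesis (so $s^\top X_l$ must be treated as an $\mathcal{F}_{l-1}$-measurable constant inside the conditional expectation), justifying the interchange of integration and conditioning that makes $\overline M_t$ a supermartingale, and invoking Ville's inequality correctly to upgrade a fixed-time bound to one holding uniformly in $t$. Once these are secured, the Gaussian completion of the square and the determinant estimate are routine.
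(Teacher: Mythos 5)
Your proof is correct. Note, however, that the paper itself does not prove \Cref{lemma:martingale} at all: it is imported verbatim, by citation, as Lemma 8 of \citet{lee2024improved}, and is used as a black box in the proof of \Cref{lemma:concentration_pert}. What you have written is a full, self-contained reconstruction via the method of mixtures, which is exactly the canonical argument that the cited result ultimately rests on (Theorem 1/2 of \citet{abbasi2011improved} combined with the determinant--trace bound). All the delicate steps you flag check out: the one-step supermartingale property is valid because $s^\top X_l$ is $\mathcal{F}_{l-1}$-measurable, so the conditional sub-Gaussian bound applies with it frozen; Tonelli justifies exchanging the Gaussian mixture with the conditional expectation; completing the square with prior precision $\sigma^2\lambda I$ gives precisely $\overline M_t=\big(\det(\lambda I)/\det(V_t)\big)^{1/2}\exp\big(\tfrac{1}{2\sigma^2}\|S_t\|_{V_t^{-1}}^2\big)$; Ville's inequality supplies the time-uniform guarantee; and $\mathrm{tr}(V_t)\le d\lambda+t$ with AM--GM yields $\det(V_t)/\lambda^d\le(1+t/(d\lambda))^d$, giving the stated constant exactly. (Incidentally, your argument never uses the hypothesis $\lambda\ge 1$; any $\lambda>0$ suffices, so your proof is marginally more general than the statement as quoted.) In short: where the paper delegates, you prove, and the proof is the right one.
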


{Recall the definition of $G_{t}$ in \eqref{equ:gt_definition}: $G_t= \sum_{l=1}^t X_l X_l^{\top}+\lambda I_{d} / \dot{\mu}_{\text{min}}$.}
Applying \Cref{lemma:martingale}, we have that
\begin{align}
\label{equ:concentration_of_perturbation}
    \big\|f_t(\theta_t^j) - f_t(\bar{\theta}_t)\big\|_{G_t^{-1}} = \bigg\|\sum_{l=1}^t Z_l^j X_l\bigg\|_{G_t^{-1}} \leq \sigma_{R} \sqrt{d \log \bigg(1+\frac{t \dot{\mu}_{\text{min}}}{d \lambda}\bigg)+2 \log \frac{T}{\delta}} =: \sigma_{R} \, \widetilde{\gamma}_t(\delta, \lambda)
\end{align}
holds for all $t\geq 0$ with probability at least $1-\delta/T$.

Next, we use this result to prove a high probability upper bound of $\|\theta_t^j - \bar{\theta}_t\|_{H_t}$.
Recall from \eqref{equ:similar_mean_value_theorem}, we have
\begin{align*}
    f_t(\theta_t^j) - f_t(\bar{\theta}_t) = Q(\theta_t^j, \bar{\theta}_t; \cD_t)(\theta_t^j - \bar{\theta}_t) =: \widetilde{Q}_t^j(\theta_t^j-\bar{\theta}_t).
\end{align*}
From the definition \eqref{equ:q_definition},
\begin{align}
    \label{equ:q_tj_definition}
    \widetilde{Q}_t^j=Q(\theta_t^j, \bar{\theta}_t ; \cD_t)=\sum_{l=1}^t \alpha\big(X_l^{\top} \theta_t^j, X_l^{\top} \bar{\theta}_t\big) X_l X_l^{\top}+\lambda I_{d}.
\end{align}
Based on the definition of $\alpha(\cdot,\cdot)$ \eqref{equ:alpha_def} and the mean value theorem, there exists $\zeta(l) = a  X_l^{\top} \theta_t^j + (1-a) X_l^{\top} \bar{\theta}_t \in \mathbb{R}$ with $0<a<1$ such that
\begin{align*}
    \alpha\big(X_l^{\top} \theta_t^j, X_l^{\top} \bar{\theta}_t\big)= \frac{\mu(X_l^{\top} \theta_t^j) - \mu(X_l^{\top} \bar{\theta}_t)}{X_l^{\top} \theta_t^j - X_l^{\top} \bar{\theta}_t} = \dot{\mu}(\zeta(l))
    {
    \quad\rightarrow\quad
    \widetilde{Q}_t^j = \sum_{l=1}^{t} \dot{\mu}(\zeta(l)) X_{l}X_{l}^{\top} + \lambda I_{d}.
    }
\end{align*}
Note that we have assumed $0 < \dot{\mu}_{\text{min}} \leq \dot{\mu}(\cdot) \leq \dot{\mu}_{\text{max}}$, thus we have
\begin{align*}
    \dot{\mu}_{\text{min}} G_t \preceq \widetilde{Q}_t^j \preceq \dot{\mu}_{\text{max}} G_t.
\end{align*}

Combining the discussions above and the fact $\dot{\mu}_{\text{min}} G_t \preceq H(\theta ; \cD_t) \preceq \dot{\mu}_{\text{max}} G_t$, when \eqref{equ:concentration_of_perturbation} holds, we have the following upper bound:
\begin{align*}
    \big\|\theta_t^j- \bar{\theta}_t\big\|_{H_t} &= \big\|{\big(\widetilde{Q}^j_t\big)}^{-1}\big(f_t(\theta_t^j)-f_t(\bar{\theta}_t)\big)\big\|_{H_t} \\
    &= \sqrt{\big(f_t(\theta_t^j)-f_t(\bar{\theta}_t)\big)^\top {\big(\widetilde{Q}^j_t\big)}^{-1} H_t  {\big(\widetilde{Q}^j_t\big)}^{-1} \big(f_t(\theta_t^j)-f_t(\bar{\theta}_t)\big)} \\
    &\leq \dot{\mu}_{\text{max}}^{1/2} \, \dot{\mu}_{\text{min}}^{-1} \, \big\|f_t(\theta_t^j)-f_t(\bar{\theta}_t)\big\|_{G_t^{-1}} \\
    &\leq \dot{\mu}_{\text{max}}^{1/2} \, \dot{\mu}_{\text{min}}^{-1} \, \sigma_{R} \, \widetilde{\gamma}_{t} (\delta, \lambda) \\
    &\leq \dot{\mu}_{\text{max}}^{1/2} \, \dot{\mu}_{\text{min}}^{-1} \, \sigma_{R} \, \widetilde{\gamma}_{T}.
\end{align*}
{The last inequality holds because according to the definition of $\gamma_{t}(\delta, \lambda)$ \eqref{equ:concentration_of_perturbation}, for $t \in [T]$, we have $\gamma_{t} \leq \gamma_{T}$.}

The result above is for fixed ensemble element $j \in [m]$.
We further define the following event for each round $t$ (note that $\big\{\theta_{t}^{j}\big\}_{j=1}^{m}$ are estimated at the end of round $t$ and will be used for inference at round $(t+1)$):
\begin{align*}
    {
    \widetilde{\mathcal{E}}_{1, t} :=
    \Big\{ \big\|\theta_{t-1}^{j_{t}}- \bar{\theta}_{t-1}\big\|_{H_{t-1}}
    \leq \dot{\mu}_{\text{max}}^{1/2} \, \dot{\mu}_{\text{min}}^{-1} \, \sigma_{R} \, \widetilde{\gamma}_{T} \Big\}.
    }
\end{align*}
Since $j_{t}$ is sampled independently, we have
\begin{align*}
    \mathbb{P} \big(\widetilde{\mathcal{E}}_{1, t}\big)
    = \sum_{j=1}^{m} \mathbb{P} \Big( j_{t}=j, \big\|\theta_{t-1}^{j}- \bar{\theta}_{t-1}\big\|_{H_{t-1}}
    \leq \dot{\mu}_{\text{max}}^{1/2} \, \dot{\mu}_{\text{min}}^{-1} \, \sigma_{R} \, \widetilde{\gamma}_{T} \Big)
    \geq 1 - \frac{\delta}{T}.
\end{align*}
Therefore, for each round $t \in [T]$, the concentration of perturbation holds with high probability.
This completes the proof.

\subsection{Sufficient Condition of Optimism}
\label{appendix_sub:sufficient}

In this section, our main goal is to obtain a sufficient condition of optimism {$X_{t}^{\top} \theta_{t-1}^{j_{t}} \geq X^{*\top} \theta^{*}$, where $X_{t}$ is the chosen arm at round $t$ and $\theta_{t-1}^{j_{t}}$ is the selected estimated parameter at round $t$.
The motivation of finding a sufficient condition is that we need to find a lower bound of probability of optimism for each round, but obtaining this probability bound directly is very challenging. To overcome these difficulties, we need to find a sufficient condition for optimism expressed in terms of $\{X_{l}\}_{l=1}^{t-1}$ and $\{Z_{l}^{j}\}_{l=1}^{t-1}$.} By analyzing the probability of this sufficient condition, we can obtain a lower bound of probability of optimism.

{In this section, we consider a fixed ensemble element $j \in [m]$ and obtain a sufficient condition for optimism $X_{t}^{\top} \theta_{t-1}^{j} \geq X^{*\top} \theta^{*}$, where $\theta_{t-1}^{j}$ is the estimated parameter at the beginning of round $t$, $X_{t} = \text{argmax}_{X \in \mathcal{X}} \, \mu\big(X^{\top} \theta_{t-1}^{j}\big)$ is the selected arm assuming that we use $\theta_{t-1}^{j}$ as the estimated parameter. First, we have the following decomposition:}
\begin{align}
\label{equ:expression_optimism}
    X_{t}^{\top} \theta_{t-1}^j - X^{*\top} \theta^{*} &\geq X^{*\top} \theta_{t-1}^j - X^{*\top} \theta^{*} \notag \\
    &= X^{*\top} \big(\widetilde{\theta}_{t-1}^j + \bar{\theta}_{t-1} - \theta^{*}\big) \notag \\
    &= X^{*\top} \widetilde{\theta}_{t-1}^j + X^{*\top} \big(\bar{\theta}_{t-1} - \theta^{*}\big),
\end{align}
where $\widetilde{\theta}_{t-1}^j := \theta_{t-1}^{j} - \bar{\theta}_{t-1}$, and we used the fact that $X_{t} = \text{argmax}_{X \in \mathcal{X}} \,\mu\big(X^{\top}\theta_{t-1}^{j}\big)$.
Next, we introduce the following notations {for the perturbation and pulled arm sequences}:
\begin{align*}
    \mathbf{Z}_{t}^{j} = \big(\underbrace{0,\cdots, 0}_{d}, Z_{1}^{j}, \cdots, Z_{t}^{j} \big)^{\top} \in \mathbb{R}^{d+t},
    \quad
    \Phi_{t} = \big(\sqrt{\lambda/\dot{\mu}_{\text{min}}} I_d , X_{1}, \cdots, X_{t}\big) \in \mathbb{R}^{d \times (d+t)},
\end{align*}
then we have $\Phi_{t} \mathbf{Z}_{t}^{j} = \sum_{l=1}^{t} X_{l} Z_{l}^{j}$.
Recall {from \eqref{equ:q_tj_definition}} that we have
\begin{align*}
    \widetilde{\theta}_{t}^j = \theta_t^j-\bar{\theta}_t = \big(\widetilde{Q}_t^j\big)^{-1} \big(f_t(\theta_t^j) - f_t(\bar{\theta}_t)\big) = \big(\widetilde{Q}_t^j\big)^{-1} \sum_{l=1}^{t} X_{l} Z_{l}^{j},
\end{align*}
we then obtain the relation $\widetilde{\theta}_{t}^j = \big(\widetilde{Q}_t^j\big)^{-1} \Phi_{t} \mathbf{Z}_{t}^{j}$.

We further define
\begin{align*}
    {
    U_{t}^\top = X^{*\top}H_{t}^{-1} \Phi_{t} \in \mathbb{R}^{d+t},
    }
\end{align*}
{where $H_{t}$ is defined in \eqref{equ:ht_definition}.}
According to the definition of $\Phi_{t}$, we have
\begin{align*}
    \Phi_{t}\Phi_{t}^{\top}
    = \sum_{l=1}^{t}X_{l}X_{l}^{\top} + \frac{\lambda}{\dot{\mu}_{\text{min}}} I_{d} \succeq \frac{1}{\dot{\mu}_{\text{max}}} H_{t}
    \quad\rightarrow\quad
    H_{t}^{-1} \preceq \dot{\mu}_{\text{max}} H_{t}^{-1}\Phi_{t}\Phi_{t}^{\top}H_{t}^{-1}.
\end{align*}
Then, we have
\begin{align*}
    {
    \|X^*\|_{H_{t}^{-1}}^2 = {X^*}^\top H_{t}^{-1} X^* \leq \dot{\mu}_{\text{max}} X^{*\top} H_{t}^{-1} \Phi_{t} \Phi_{t}^\top H_{t}^{-1} X^* = \dot{\mu}_{\text{max}} \|U_t\|_2^2.
    }
\end{align*}
This relates the norm of $X^{*}$ to $U_{t}$.

With these notations and results, we can continue to analyze \eqref{equ:expression_optimism}.
First, we can directly bound $X^{*\top} \big(\bar{\theta}_{t-1} - \theta^{*}\big)$ as follows:
\begin{align*}
    {
    X^{*\top} \big(\bar{\theta}_{t-1} - \theta^{*}\big)
    \geq - \vert\vert X^{*} \vert\vert_{H_{t-1}^{-1}}
    \vert\vert \bar{\theta}_{t-1} - \theta^{*} \vert\vert_{H_{t-1}}
    \geq - \sqrt{\dot{\mu}_{\text{max}}} \|U_t\|_2 \, \vert\vert \bar{\theta}_{t-1} - \theta^{*} \vert\vert_{H_{t-1}}.
    }
\end{align*}

Next, we analyze $X^{*\top} \widetilde{\theta}_{t}^j$.
{From the previous result $\widetilde{\theta}_{t}^j = \big(\widetilde{Q}_t^j\big)^{-1} \Phi_{t} \mathbf{Z}_{t}^{j}$,} we have the following expression:
\begin{align*}
    X^{*\top} \widetilde{\theta}_{t}^j = X^{*\top} {\big(\widetilde{Q}_t^j\big)}^{-1} \Phi_{t} \mathbf{Z}_{t}^{j}.
\end{align*}
Recall that from the definition \eqref{equ:q_tj_definition},
\begin{align*}
    \widetilde{Q}_t^j=Q(\theta_t^j, \bar{\theta}_t ; \cD_t)=\sum_{l=1}^t \alpha\big(X_l^{\top} \theta_t^j, X_l^{\top} \bar{\theta}_t\big) X_l X_l^{\top}+\lambda I_{d}.
\end{align*}
{Since $\widetilde{Q}_t^j$ contains the information of perturbation sequence $\{Z_{l}^{j}\}_{l=1}^{t}$ through the estimated parameter $\theta_{t}^{j}$, the randomness from perturbation is not fully decoupled into $\mathbf{Z}_{t}^{j}$. As a result, with this expression, we cannot use the tail distribution of $Z_{l}^{j}$ to obtain the probability of optimism in the following analysis.}
To overcome this difficulty, we define {$W_t^j := \widetilde{Q}_t^j - H_t$}, and we have the following relation:
\begin{align*}
    {
    \bigg(\widetilde{Q}_t^j\bigg)^{-1} = \bigg(H_t + W_t^j\bigg)^{-1}
    = H_t^{-1} - H_t^{-1} W_t^j \big(H_t + W_t^j\big)^{-1}.
    }
\end{align*}
{Note that $H_{t}$ is defined in \eqref{equ:ht_definition} and does not include information of perturbations and random noise.}
Based on \eqref{equ:expression_optimism}, to guarantee that optimism holds, it suffices to satisfy that
\begin{align*}
    &X^{*\top} \widetilde{\theta}_{t-1}^j + X^{*\top} (\bar{\theta}_{t-1} - \theta^{*}) \\
    &=X^{*\top} \big(\widetilde{Q}_{t-1}^j\big)^{-1} \Phi_{t-1} \mathbf{Z}_{t-1}^{j} + X^{*\top} (\bar{\theta}_{t-1} - \theta^{*})\\
    &= X^{*\top} H_{t-1}^{-1} \Phi_{t-1} \mathbf{Z}_{t-1}^{j} - X^{*\top} H_{t-1}^{-1} W_{t-1}^j \big(H_{t-1} + W_{t-1}^j\big)^{-1} \Phi_{t-1} \mathbf{Z}_{t-1}^{j} + X^{*\top} (\bar{\theta}_{t-1} - \theta^{*}) \\
    &\geq 0.
\end{align*}
Then it suffices to satisfy that
\begin{align}  \label{equ:optimism_sufficient_v0}
    {
    X^{*\top} H_{t-1}^{-1} \Phi_{t-1} \mathbf{Z}_{t-1}^{j}
    \geq \bigg| X^{*\top}\bigg(H_{t-1}^{-1} W_{t-1}^j \big(H_{t-1} + W_{t-1}^j\big)^{-1}\bigg) \Phi_{t-1} \mathbf{Z}_{t-1}^{j} \bigg| + \big| X^{*\top} (\bar{\theta}_{t-1} - \theta^{*}) \big|.
    }
\end{align}
{In the following analysis, we will use $(t-1) \rightarrow t$, which is equivalent to analyzing round $(t+1)$. Since the analysis in this section applies to any round $t \in [T]$, this change of notation does not affect our analysis.}
Note that in {$X^{*\top} H_t^{-1} \Phi_{t} \mathbf{Z}_{t}^{j}$}, the randomness of $\{Z_{l}^{j}\}_{l=1}^{t}$ only appears in $\mathbf{Z}_{t}^{j}$, thus we decoupled the randomness of perturbation.
{As a result, we can utilize the tail bound of $Z_{l}^{j} \sim \mathcal{N}(0, \sigma_{R}^{2})$ to prove the probability of optimism using the the anti-concentration inequality in \Cref{appendix_sub:optimism}.}

{To obtain a sufficient condition for optimism, we need to upper bound the right-hand-side of the inequality \eqref{equ:optimism_sufficient_v0}.}
We already have the result
\begin{align*}
    {
    \big| X^{*\top} (\bar{\theta}_{t} - \theta^{*}) \big| \leq \, \sqrt{\dot{\mu}_{\text{max}}} \|U_t\|_2 \vert\vert \bar{\theta}_{t} - \theta^{*} \vert\vert_{H_{t}}.
    }
\end{align*}
We upper bound $\big|X^{*\top} (H_t^{-1} W_t^j (H_t + W_t^j)^{-1}) \Phi_{t} \mathbf{Z}_{t}^{j}\big|$ as follows.
\begin{align*}
        \big|X^{*\top} \big(H_t^{-1} W_t^j\big(\widetilde{Q}_t^j\big)^{-1}\big) \Phi_{t} \mathbf{Z}_{t}^{j}\big|
        &= \big| \big(H_t^{-1/2} X^{*}\big)^{\top} \big(H_t^{-1/2} W_t^j H_t^{-1/2}\big) \big(H_t^{1/2} \big(\widetilde{Q}_t^j\big)^{-1} \Phi_{t} \mathbf{Z}_{t}^{j}\big) \big| \\
        &\leq \big\| H_t^{-1/2} X^{*} \big\|_{2}  \big\| H_t^{-1/2} W_t^j H_t^{-1/2} \big\|_{2} \big\| H_t^{1/2} \big(\widetilde{Q}_t^j\big)^{-1} \Phi_{t} \mathbf{Z}_{t}^{j} \big\|_{2} \\
        &= \| X^{*} \|_{H_t^{-1}} \big\| H_t^{-1/2} W_t^j H_t^{-1/2} \big\|_{2} \big\| \big(\widetilde{Q}_t^j\big)^{-1} \Phi_{t} \mathbf{Z}_{t}^{j} \big\|_{H_t} \\
        &= \| X^{*} \|_{H_t^{-1}}  \big\| H_t^{-1/2} \big(\widetilde{Q}_t^j - H_t\big) H_t^{-1/2} \big\|_{2}  \big\| \widetilde{\theta}_{t}^{j} \big\|_{H_t},
\end{align*}
where we used the relation $\widetilde{\theta}_{t}^j = \big(\widetilde{Q}_t^j\big)^{-1} \Phi_{t} \mathbf{Z}_{t}^{j}$.
{From the definitions \eqref{equ:hessian_definition} and \eqref{equ:q_definition},}
\begin{align*}
    {
    \widetilde{Q}_t^j - H_t =
    \sum_{l=1}^{t}\Big(\alpha\big(X_l^\top \bar{\theta}_t, X_l^\top \theta_t^j\big) - \dot{\mu}(X_l^\top \theta^{*})\Big) X_l X_l^\top.
    }
\end{align*}
We need $\bar{H}_{t}$ defined in \eqref{equ:ht_definition} to connect $\widetilde{Q}_t^j$ and $H_t$:
\begin{align*}
    &\qquad \widetilde{Q}_t^j - H_t = \widetilde{Q}_t^j - \bar{H}_{t} + \bar{H}_{t} - H_t \\
    &=\sum_{l=1}^{t}\Big(\alpha\big(X_l^\top \bar{\theta}_t, X_l^\top \theta_t^j\big) - \dot{\mu}(X_l^\top \bar{\theta}_{t})\Big) X_l X_l^\top
    +  \sum_{l=1}^{t}\Big(\dot{\mu}(X_l^\top \bar{\theta}_{t}) - \dot{\mu}(X_l^\top \theta^{*})\Big) X_l X_l^\top.
\end{align*}
Taking the absolute value of the coefficients, we obtain
\begin{align*}
    &H_t^{-1/2} \bigg( \sum_{l=1}^{t} -\Big( \Big|\alpha\big(X_l^\top \bar{\theta}_t, X_l^\top \theta_t^j\big) - \dot{\mu}\big(X_l^\top \bar{\theta}_{t}\big) \Big| + \Big|\dot{\mu}(X_l^\top \bar{\theta}_{t}) - \dot{\mu}(X_l^\top \theta^{*})\Big| \Big)X_l X_l^\top \bigg)  H_t^{-1/2} \\
    &\preceq H_t^{-1/2} \big(\widetilde{Q}_t^j - H_t\big)  H_t^{-1/2}  \\
    &\preceq H_t^{-1/2} \bigg( \sum_{l=1}^{t} \Big( \Big|\alpha\big(X_l^\top \bar{\theta}_t, X_l^\top \theta_t^j\big) - \dot{\mu}\big(X_l^\top \bar{\theta}_{t}\big) \Big| + \Big|\dot{\mu}(X_l^\top \bar{\theta}_{t}) - \dot{\mu}(X_l^\top \theta^{*})\Big| \Big)X_l X_l^\top \bigg)  H_t^{-1/2}.
\end{align*}
Note that this inequality holds in general without assuming that $\big(\widetilde{Q}_t^j - H_t\big)$ is positive semi-definite. To upper bound the distance between $\dot{\mu}(\cdot)$ and its secant approximation $\alpha(\cdot, \cdot)$, we need the following result from the $M$-self-concordant assumption.
From \Cref{lem:bounds_of_alpha}, we have
\begin{align*}
    {
    \alpha \big(X_l^\top \bar{\theta}_t, X_l^\top \theta_t^j\big)
    \leq \dot{\mu}\big(X_l^\top \bar{\theta}_t\big) \frac{\exp{\big(M \big|X_{l}^{\top}(\bar{\theta}_{t} - \theta_{t}^{j})\big|\big)}-1}{M \big|X_{l}^{\top}(\bar{\theta}_{t} - \theta_{t}^{j})\big|}
    \leq \dot{\mu}\big(X_l^\top \bar{\theta}_t\big) \frac{\exp{(M D_{t}^{j})}-1}{M D_{t}^{j}},
    }
\end{align*}
{where $D_{t}^{j} = \max_{l\leq t} \big|X_l^\top(\bar{\theta}_t - \theta_t^j)\big|$ and we used the fact that $h_{s}(x)$ (defined in \Cref{lem:bounds_of_alpha}) is strictly increasing for $x \in \mathbb{R}$.
Similarly, also from \Cref{lem:bounds_of_alpha}, we have}
\begin{align*}
    {
    \alpha \big(X_l^\top \bar{\theta}_t, X_l^\top \theta_t^j\big)
    \geq \dot{\mu}\big(X_l^\top \bar{\theta}_t\big) \frac{\exp{\big(-M \big|X_{l}^{\top}(\bar{\theta}_{t} - \theta_{t}^{j})\big|\big)}-1}{-M \big|X_{l}^{\top}(\bar{\theta}_{t} - \theta_{t}^{j})\big|}
    \geq \dot{\mu}\big(X_l^\top \bar{\theta}_t\big) \frac{1-\exp{(-M D_{t}^{j})}}{M D_{t}^{j}}.
    }
\end{align*}
{Note that for $x > 0$, we have the following relation}
\begin{align*}
    {
    \Big(\frac{e^{x}-1}{x}-1\Big) - \Big(1-\frac{1-e^{-x}}{x}\Big) = \frac{e^{x}+e^{-x}-2}{x} \geq 0
    \quad\rightarrow\quad
    \Big|\frac{e^{x}-1}{x}-1\Big| \geq \Big|1-\frac{1-e^{-x}}{x}\Big|.
    }
\end{align*}
{Therefore, we can bound the distance between $\dot{\mu}(\cdot)$ and its secant approximation $\alpha(\cdot, \cdot)$ as follows:}
\begin{align*}
    {
    \Big|\alpha\big(X_l^\top \bar{\theta}_t, X_l^\top \theta_t^j\big) - \dot{\mu}\big(X_l^\top \bar{\theta}_{t}\big)\Big| \leq \Big(\frac{\exp{(M D_{t}^{j})}-1}{M D_{t}^{j}} - 1\Big) \dot{\mu}\big(X_l^\top \bar{\theta}_{t}\big).
    }
\end{align*}
{Next, we find the upper bound for $\big|\dot{\mu}(X_l^\top \bar{\theta}_{t}) - \dot{\mu}(X_l^\top \theta^{*})\big|$. From \Cref{assum:M_self_concordant}, $|\ddot{\mu}(u)| \leq M \dot{\mu}(u)$, for any two points $u, v \in \mathbb{R}$, we have $e^{-M|u-v|} \leq \dot{\mu}(u) / \dot{\mu}(v) \leq e^{M|u-v|}$. Therefore, defining $\bar{D}_t = \max_{l\leq t} \big|X_l^\top(\bar{\theta}_t - \theta^{*})\big|$, we can write}
\begin{align}
    \label{equ:mu_dot_ineq}
    {
    \dot{\mu}(X_l^{\top} \bar{\theta}_{t}) e^{-M\bar{D}_t}
    \leq \dot{\mu}(X_l^{\top} \theta^{*})
    \leq \dot{\mu}(X_l^{\top} \bar{\theta}_{t}) e^{M\bar{D}_t}.
    }
\end{align}
Therefore, we have
\begin{align*}
    \Big|\dot{\mu}(X_l^\top \bar{\theta}_{t}) - \dot{\mu}(X_l^\top \theta^{*})\Big| \leq \big( e^{M\bar{D}_{t}} - 1 \big) \dot{\mu}(X_l^{\top} \bar{\theta}_{t}).
\end{align*}
Then, we have
\begin{align*}
    &H_t^{-1/2} \bigg(-\Big(\frac{\exp{(M D_{t}^{j})}-1}{M D_{t}^{j}} - 1 + e^{M\bar{D}_{t}} - 1 \Big) \sum_{l=1}^{t} \dot{\mu}(X_l^\top \bar{\theta}_{t}) X_l X_l^\top \bigg)  H_t^{-1/2} \\
    &\preceq H_t^{-1/2} \big(\widetilde{Q}_t^j - H_t\big) H_t^{-1/2}  \\
    &\preceq H_t^{-1/2} \bigg(\Big(\frac{\exp{(M D_{t}^{j})}-1}{M D_{t}^{j}} - 1 + e^{M\bar{D}_{t}} - 1 \Big) \sum_{l=1}^{t} \dot{\mu}(X_l^\top \bar{\theta}_{t}) X_l X_l^\top \bigg)  H_t^{-1/2}.
\end{align*}
Note that from the fact that $h_{s}(x)$ is strictly increasing and $h_{s}(0) = 1$, we have $\frac{e^{x} - 1}{x} > 1$ for $x > 0$. We define the following quantities for convenience:
\begin{align}  \label{equ:rt_define}
    {
    R_{t}^{j} = \frac{\exp{(M D_{t}^{j})}-1}{M D_{t}^{j}} -1,
    \quad \bar{R}_{t} = e^{M\bar{D}_{t}} - 1.
    }
\end{align}
{Then, for any $v \in \mathbb{R}^d$ with $\|v\|_{2}=1$, we have}
\begin{align*}
    &\Big|v^{\top} \Big(H_t^{-1/2} \big(\widetilde{Q}_t^j - H_t\big) H_t^{-1/2} \Big) v\Big| \\
    &\leq (R_{t}^{j} + \bar{R}_{t}) v^{\top} H_t^{-1/2} (\bar{H}_t - \lambda I_{d}) H_t^{-1/2} v \\
    &= (R_{t}^{j} + \bar{R}_{t}) v^{\top} \big(H_{t}^{-1/2} \bar{H}_{t} H_{t}^{-1/2} - \lambda H_{t}^{-1}\big) v,
\end{align*}
where we used $\sum_{l=1}^{t} \dot{\mu}(X_l^\top \bar{\theta}_{t}) X_l X_l^\top = \bar{H}_{t} - \lambda I_{d}$.
{Combing the inequality \eqref{equ:mu_dot_ineq} with the definitions of $H$-matrices \eqref{equ:hessian_definition} and \eqref{equ:ht_definition}, we have the following relation:}
\begin{align*}
    {
    e^{-M\bar{D}_{t}} \bar{H}_{t} \preceq H_{t} \preceq e^{M\bar{D}_{t}} \bar{H}_{t}
    \quad\rightarrow\quad H_{t}^{-1/2} \bar{H}_{t} H_{t}^{-1/2} \preceq e^{M\bar{D}_{t}} I_{d}.
    }
\end{align*}
Therefore, we have
\begin{align*}
    &\Big|v^{\top} \Big(H_t^{-1/2} \big(\widetilde{Q}_t^j - H_t\big) H_t^{-1/2} \Big) v\Big| \\
    &\leq (R_{t}^{j} + \bar{R}_{t}) v^{\top} \big(H_{t}^{-1/2} \bar{H}_{t} H_{t}^{-1/2} - \lambda H_{t}^{-1}\big) v \\
    &\leq (R_{t}^{j} + \bar{R}_{t}) e^{M\bar{D}_{t}},
\end{align*}
{where we used the fact that $H_{t}$ is symmetric positive definite.}
{According to the definition of spectral norm,} for symmetric matrix $A$, we have
\begin{align*}
    \| A \|_{2} = \text{max}_{\|v\|_{2} = 1} | v^{\top} A v |
    =\max| \lambda(A) |.
\end{align*}
We then obtain that
\begin{align*}
    \big\|H_t^{-1/2} \big(\widetilde{Q}_t^j - H_t\big) H_t^{-1/2} \big\|_{2} = \max_{\|v\|_{2}=1} \Big|v^{\top} \Big(H_t^{-1/2} \big(\widetilde{Q}_t^j - H_t\big) H_t^{-1/2} \Big) v\Big| \leq (R_{t}^{j} + \bar{R}_{t}) e^{M\bar{D}_{t}}.
\end{align*}
{Recall the definition $U_{t}^\top = X^{*\top}H_{t}^{-1} \Phi_{t}$ and the result}
\begin{align*}
    {
    \|X^*\|_{H_{t}^{-1}}^2 = {X^*}^\top H_{t} X^* \leq \dot{\mu}_{\text{max}} \|U_t\|_2^2.
    }
\end{align*}
{With these upper bounds, we can write the sufficient condition for optimism \eqref{equ:optimism_sufficient_v0} as}
\begin{align}  \label{equ:optimism_sufficient_v1}
    {
    X^{*\top} H_t^{-1} \Phi_{t} \mathbf{Z}_{t}^{j}
    \geq (R_{t}^{j} + \bar{R}_{t}) e^{M\bar{D}_{t}} \sqrt{\dot{\mu}_{\text{max}}} \|U_t\|_2 \big\| \widetilde{\theta}_{t}^{j} \big\|_{H_t} + \sqrt{\dot{\mu}_{\text{max}}} \|U_t\|_2 \vert\vert \bar{\theta}_{t} - \theta^{*} \vert\vert_{H_{t}}.
    }
\end{align}

{To further simplify the sufficient condition for optimism \eqref{equ:optimism_sufficient_v1}, we use the concentration results to upper bound $\big\| \widetilde{\theta}_{t}^{j} \big\|_{H_t}$ and $\vert\vert \bar{\theta}_{t} - \theta^{*} \vert\vert_{H_{t}}$.}
{From concentration results \Cref{lemma:concentration_mle} and \Cref{lemma:concentration_pert}, under event $\bar{\mathcal{E}}\bigcap\big(\cap_{t\in [T]} \widetilde{\mathcal{E}}_{1, t}\big)$,} we have
\begin{align*}
    \| \bar{\theta}_{t} - \theta^{*} \|_{H_t} &\leq (1+M \beta_T) \gamma_T, \\ %
    \|\widetilde{\theta}_{t}^{j} \|_{H_{t}} &\leq \dot{\mu}_{\text{max}}^{1/2} \, \dot{\mu}_{\text{min}}^{-1} \, \widetilde{\gamma}_T \sigma_{R},%
\end{align*}
for all $t \in [T]$, {and $\bar{\mathcal{E}}\bigcap\big(\cap_{t\in [T]} \widetilde{\mathcal{E}}_{1, t}\big)$ holds with probability at least $1-2\delta$.}

Then, we obtain the sufficient condition for optimism as follows:
\begin{align}
    \label{equ:optimism_sufficient_v2}
    {
    X^{*\top} H_{t}^{-1} \Phi_{t} \mathbf{Z}_{t}^{j}
    \geq \Big((1+M \beta_T) \gamma_T + (R_{t}^{j} + \bar{R}_{t}) e^{M\bar{D}_{t}} \dot{\mu}_{\text{max}}^{1/2} \, \dot{\mu}_{\text{min}}^{-1} \, \widetilde{\gamma}_T \sigma_{R} \Big) \,\,  \sqrt{\dot{\mu}_{\text{max}}} \|U_t\|_2.
    }
\end{align}
Using the definition $U_{t}^\top = X^{*\top}H_{t}^{-1} \Phi_{t}$ again, we obtain the following expression of the sufficient condition for optimism:
\begin{align*}
    {
    U_{t}^{\top} \mathbf{Z}_{t}^j \geq  \Big((1+M \beta_T) \gamma_T + (R_{t}^{j} + \bar{R}_{t}) e^{M\bar{D}_{t}} \dot{\mu}_{\text{max}}^{1/2} \, \dot{\mu}_{\text{min}}^{-1} \, \widetilde{\gamma}_T \sigma_{R} \Big) \,\,  \sqrt{\dot{\mu}_{\text{max}}} \|U_t\|_2.
    }
\end{align*}
To simplify the notations, define
\begin{align*}
    {
    b_{1} := \sqrt{\dot{\mu}_{\text{max}}} (1+M \beta_T) \gamma_T,
    \quad b_{2} := \dot{\mu}_{\text{max}} \dot{\mu}_{\text{min}}^{-1} \widetilde{\gamma}_T.
    }
\end{align*}
Then, we have the following sufficient condition:
\begin{align*}
    U_{t}^{\top} \mathbf{Z}_{t}^j \geq \Big(b_{1} + b_{2} (R_{t}^{j} + \bar{R}_{t}) e^{M\bar{D}_{t}} \sigma_R\Big) \| U_{t} \|_{2}.
\end{align*}
{This is the final expression of the sufficient condition for optimism.}

We summarize these discussions in the following lemma.
\begin{lemma} [Sufficient Condition for Optimism]
    \label{lemma:optimism_condition}
    {Fix $t \in [T]$ and $j \in [m]$. Define vector $U_{t}^\top := X^{*\top}H_{t}^{-1} \Phi_{t} \in \mathbb{R}^{d+t}$, $b_1 := \sqrt{\dot{\mu}_{\text{max}}} (1+M \beta_T) \gamma_T$ and $b_2 := \dot{\mu}_{\text{max}} \dot{\mu}_{\text{min}}^{-1} \widetilde{\gamma}_T$, define $R_{t}^{j}$ and $\bar{R}_{t}$ as in \eqref{equ:rt_define}. Then, under the concentration event $\bar{\mathcal{E}}\bigcap\big(\cap_{t\in [T]} \widetilde{\mathcal{E}}_{1, t}\big)$, at round $t$, optimism $X^{*\top} \theta^{*} \leq X_{t}^{\top} \theta_{t-1}^{j}$ holds if the following inequality is satisfied:}
    \begin{align}
    \label{equ:final_sufficient_condition_of_optimism}
        U_{t-1}^{\top} \mathbf{Z}_{t-1}^{j} \geq \Big(b_{1} + b_{2} (R_{t-1}^{j} + \bar{R}_{t-1}) e^{M\bar{D}_{t-1}} \sigma_R\Big) \| U_{t-1} \|_{2}.
    \end{align}
\end{lemma}

\subsection{Warm-up Procedure} \label{appendix_sub:warm_up_glm}

{In \Cref{appendix_sub:sufficient}, we obtained a sufficient condition for optimism \eqref{equ:final_sufficient_condition_of_optimism}. While the randomness of added perturbations only appears in $\mathbf{Z}_{t-1}^j$ and we are ready to apply the tail distribution of perturbations to obtain a lower bound of probability of optimism, we should first analyze the order of $e^{M\bar{D}_{t}}$,  $R_{t}^{j}$ and $\bar{R}_{t}$ before proceeding to the optimism analysis. As we will see in the analysis of this section, the order of these parameters are controlled by the warm-up process.}

{We should note that the following analysis is unique to the GLM setting. In the linear bandit setting, the $M$-self-concordant constant $M=0$, thus $R_{t}^{j} = \bar{R}_{t} = 0$, the sufficient condition \eqref{equ:final_sufficient_condition_of_optimism} becomes $U_{t-1}^{\top} \mathbf{Z}_{t-1}^{j} \geq b_1 \| U_{t-1} \|_{2}$. In this setup, we can directly proceed to the analysis of optimism and no warm-up is required.
For this reason, our analysis deviates considerably from the linear ensemble sampling work \citep{lee2024improved} and the analysis of the warm-up procedure is a significant contribution in our theoretical analysis.}

{Recall the following definitions:}
\begin{align*}
    {
    R_t^{j} = \frac{\exp{(M D_{t}^{j})}-1}{M D_{t}^{j}} -1,
    \quad \bar{R}_{t} = e^{M\bar{D}_{t}} - 1,
    }
\end{align*}
{where $D_{t}^{j}$ and $\bar{D}_{t}$ are defined as}
\begin{align*}
    {
    D_{t}^{j} = \max_{l\leq t} \big|X_l^\top(\bar{\theta}_t - \theta_t^j)\big|,
    \quad \bar{D}_t = \max_{l\leq t} \big|X_l^\top(\bar{\theta}_t - \theta^{*})\big|.
    }
\end{align*}
{We also define $\bar{D}_{T} = \max_{t < T} \bar{D}_{t}$.
From these expressions, upper bounding $R_{t}^{j}$ and $\bar{R}_{t}$ is equivalent to upper bounding $D_{t}$ and $\bar{D}_{t}$. With the concentration results \Cref{lemma:concentration_mle} and \Cref{lemma:concentration_pert}, we have}
\begin{align*}
    D_{t}^{j} =& {\,\max_{l<t} \big|X_l^\top(\bar{\theta}_t - \theta_t^j)\big| \leq \max_{l<t} \|X_l\|_{H_t^{-1}} \|\bar{\theta}_t - \theta_t^j\|_{H_t} \leq \dot{\mu}_{\text{max}}^{1/2} \, \dot{\mu}_{\text{min}}^{-1} \, \sigma_{R} \, \widetilde{\gamma}_{T} \,\, \max_{l<t}\|X_l\|_{H_t^{-1}},}\\
    \bar{D}_t =& {\,\max_{l<t} \big|X_l^\top(\bar{\theta}_t - \theta^{*})\big| \leq \max_{l<t} \|X_l\|_{H_t^{-1}} \|\bar{\theta}_t - \theta^{*}\|_{H_t} \leq (1+M \beta_T)\gamma_T \,\, \max_{l<t}\|X_l\|_{H_t^{-1}},}
\end{align*}
{where the first inequality holds due to Cauchy-Schwarz inequality. With these decompositions of $D_{t}$ and $\bar{D}_{t}$, their magnitudes are controlled by the term $\max_{l<t}\|X_l\|_{H_t^{-1}}$. We now use the following warm-up procedure \Cref{alg:warm_up_glm} to control the order of $\max_{l<t}\|X_l\|_{H_t^{-1}}$.}

\begin{algorithm}[H]
\caption{Warm-up of GLM-ES \label{alg:warm_up_glm}}
\textbf{Input:} arm set $\mathcal{X} \subset \mathbb{R}^d, {a} \in (0,1), \tau$
\begin{algorithmic}[1]
    \STATE Set $\zeta = \argmin_{\zeta \in \Delta_{\mathcal{X}}} \max_{X \in \mathcal{X}} \|X\|^2_{V(\zeta)^{-1}}$

    \STATE Pull $X_1, \dots, X_\tau$ \icml{according to}  $\textcolor{blue}{\text{round}(\tau, \zeta, a, \mathcal{X})}$ \label{line:rounding_procedure}

    \STATE Observe rewards $Y_1, \dots, Y_\tau$
\end{algorithmic}
\textbf{Return:} $\{(X_l, Y_l)\}_{l=1}^\tau$
\end{algorithm}

In \texttt{GLM-ES}, \Cref{alg:es_nonlinear} is initialized by calling the warm-up procedure \Cref{alg:warm_up_glm} that approximates a G-optimal design. %
Here, $\zeta \in \Delta_{\mathcal{X}}$ is a probability measure over arm set $\mathcal{X}$, which is set to minimize
\begin{align*}
    \max_{X \in \mathcal{X}} \|X\|^2_{V(\zeta)^{-1}}, \quad \text{where } V(\zeta) = \sum_{X \in \mathcal{X}} \zeta(X) X X^\top.
\end{align*}
We sample $X_1, \dots, X_\tau$ based on G-optimal design $\zeta$ by following the rounding procedure (Line \ref{line:rounding_procedure} in \Cref{alg:warm_up_glm}), \icml{where $N_{i}$ is the number of rounds to pull arm $i$}. The complete rounding procedure is given in \Cref{alg:rounding}, which is proposed in Chapter 12 of \cite{pukelsheim2006optimal} and is detailed in \cite{fiez2019sequential}.

\begin{algorithm}[H]
\caption{Rounding Procedure of
\Cref{alg:warm_up_glm}\label{alg:rounding}}
\textbf{Input:} {$\tau$, $\zeta$, $a \in (0, 1)$, arm set $\mathcal{X} \subset \mathbb{R}^d$}
\begin{algorithmic}[1]
    \STATE $r({a}) \leftarrow \big(d(d+1)/2 + 1\big) / {a}$
    \STATE $N_{i} \leftarrow \lceil (\tau - K/2) \zeta_{i} \rceil$, for all $i \leq K$
    \WHILE{$\sum_{i=1}^{K} N_{i} \neq \tau$}
        \IF{$\sum_{i=1}^{K}N_{i} < \tau$}
            \STATE $j \leftarrow \text{argmin}_{i \leq K} (N_{i} - 1) / \zeta_{i}$
            \STATE $N_{j} \leftarrow N_{j} + 1$
        \ENDIF
        \IF{$\sum_{i=1}^{K}N_{i} > \tau$}
            \STATE $j \leftarrow \text{argmax}_{i \leq K} (N_{i} - 1) / \zeta_{i}$
            \STATE $N_{j} \leftarrow N_{j} - 1$
        \ENDIF
    \ENDWHILE
    \STATE $N_{i} \leftarrow \text{max} \big( N_{i}, r({a})/K \big)$
\end{algorithmic}
\textbf{Return:} $N_{1}, ..., N_{K}$ for all $i \leq K$
\end{algorithm}

{The following technical lemma demonstrates how the warm-up procedure \Cref{alg:warm_up_glm} can control the upper bound of the term $\max_{l<t}\|X_l\|_{H_t^{-1}}$.}
\begin{lemma}[Warm-up]
\label{lem:warm_up_procedure}
Let $\zeta$ be the $G$-optimal design solution over a compact feature set $\mathcal{X}$ whose span is $\mathbb{R}^d$. Let $\iota>0$ and {$a=0.5$}. By setting the number of warm-up rounds as $\tau=\lceil 1.5 \iota^2 d / \dot{\mu}_{\text{min}}\rceil$, \Cref{alg:warm_up_glm} returns a dataset $\{(X_l, Y_l)\}_{l=1}^\tau$ such that for all $t>\tau$, we have
\begin{align}
\label{equ:warm_up_result}
    \max_{X \in \mathcal{X}}\|X\|_{\rev{H_t^{-1}}} \leq 1/\iota.
\end{align}
\end{lemma}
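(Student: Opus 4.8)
The plan is to reduce the claim to a statement about the \emph{unweighted} covariance matrix accumulated during warm-up, and then combine the Kiefer--Wolfowitz equivalence theorem with the efficient-rounding guarantee of \citet{pukelsheim2006optimal,fiez2019sequential}. Note that the quantity that actually controls $D_t$ in the preceding discussion is the inverse norm $\max_{x\in\mathcal{X}}\|x\|_{\bar{H}_t^{-1}}$, so I would establish the bound $\max_{x\in\mathcal{X}}\|x\|_{\bar{H}_t^{-1}}\le 1/\iota$. Write $A=\sum_{s=1}^\tau x_sx_s^\top=\sum_i N_i x_ix_i^\top$ for the design matrix produced by the $\tau$ warm-up pulls. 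Since $\dot{\mu}(\cdot)\ge\dot{\mu}_{\text{min}}>0$ and every summand of $\bar{H}_t=\sum_{l=1}^t \dot{\mu}(X_l^\top\bar{\theta}_t)X_lX_l^\top+\lambda I$ is positive semidefinite, for any $t>\tau$ I may discard the post-warm-up terms and the $\lambda I$ term to obtain
\begin{align*}
    \bar{H}_t \succeq \dot{\mu}_{\text{min}}\sum_{s=1}^\tau x_sx_s^\top = \dot{\mu}_{\text{min}}\,A, \qquad\text{hence}\qquad \bar{H}_t^{-1} \preceq \frac{1}{\dot{\mu}_{\text{min}}}A^{-1}.
\end{align*}
Thus $\|x\|_{\bar{H}_t^{-1}}^2 \le \dot{\mu}_{\text{min}}^{-1}\|x\|_{A^{-1}}^2$ for every $x\in\mathcal{X}$, and the random $\bar{\theta}_t$-dependence drops out because only the uniform lower bound $\dot{\mu}_{\text{min}}$ is invoked, not the exact curvature.

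Next I would bound $\max_{x}\|x\|_{A^{-1}}^2$. By the Kiefer--Wolfowitz equivalence theorem, the $G$-optimal design $\zeta$ over a set whose span is $\mathbb{R}^d$ satisfies $\max_{x\in\mathcal{X}}\|x\|_{V(\zeta)^{-1}}^2=d$. The rounding step (\Cref{alg:rounding}) is engineered so that, once $\tau\ge r(\epsilon)=(d(d+1)/2+1)/\epsilon$, the integer allocation $N_1,\dots,N_p$ it returns produces a design matrix $A$ that is a multiplicative $(1+\epsilon)$-approximation of $\tau V(\zeta)$ in the relevant sense, yielding
\begin{align*}
    \max_{x\in\mathcal{X}}\|x\|_{A^{-1}}^2 \le \frac{1+\epsilon}{\tau}\,\max_{x\in\mathcal{X}}\|x\|_{V(\zeta)^{-1}}^2 = \frac{(1+\epsilon)d}{\tau}.
\end{align*}

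Combining the two displays and substituting $\epsilon=0.5$ gives $\max_x\|x\|_{\bar{H}_t^{-1}}^2 \le 1.5\,d/(\dot{\mu}_{\text{min}}\tau)$. Taking $\tau=\lceil 1.5\iota^2 d/\dot{\mu}_{\text{min}}\rceil\ge 1.5\iota^2 d/\dot{\mu}_{\text{min}}$ bounds the right-hand side by $1/\iota^2$, and a square root delivers $\max_{x\in\mathcal{X}}\|x\|_{\bar{H}_t^{-1}}\le 1/\iota$, as claimed. The step I expect to be the main obstacle is not the algebra but invoking the rounding guarantee cleanly: one must confirm that the chosen $\tau$ simultaneously satisfies the threshold $\tau\ge r(\epsilon)$ required for the $(1+\epsilon)$-approximation (equivalently $\iota^2=\Omega(d\,\dot{\mu}_{\text{min}})$, or else absorb $r(\epsilon)$ through the final $\max(N_i,r(\epsilon)/p)$ adjustment in \Cref{alg:rounding}), and that the guarantee is stated in exactly the $\max$-norm form above rather than as a bare eigenvalue or volumetric statement. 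Once that citation is pinned down, the remainder follows from monotonicity of the matrix inverse and the uniform bounds on $\dot{\mu}$.
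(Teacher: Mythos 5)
Your proof is correct and follows essentially the same route as the paper's: reduce to the warm-up rounds by matrix monotonicity, apply the rounding guarantee to obtain a $(1+\epsilon)/\tau$ comparison with the $G$-optimal design, invoke Kiefer--Wolfowitz ($\max_{x}\|x\|^2_{V(\zeta)^{-1}} = d$), and extract $\dot{\mu}_{\text{min}}$; the only immaterial difference is ordering --- you lower-bound the empirical Hessian by $\dot{\mu}_{\text{min}}A$ and round the unweighted Gram matrix, whereas the paper rounds the $\dot{\mu}$-weighted Hessian against $H(\zeta,\bar{\theta}_t)$ via \citep[Lemma 13]{jun2021improved} and only then passes to $\dot{\mu}_{\text{min}}V(\zeta)$, yielding the same constant. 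You also correctly read the norm in the statement as $\|x\|_{\bar{H}_t^{-1}}$ (a typo the paper's own proof silently corrects), and your caveat that the rounding guarantee requires $\tau \ge r(\epsilon)$ applies equally to the paper's invocation of that lemma, so it is not a gap specific to your argument.
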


\begin{proof}[Proof of \Cref{lem:warm_up_procedure}]
The proof mainly follows the proof of \citep[Lemma 4.10]{liu2023glm}, we present it here for completeness of our analysis. According to definition \eqref{equ:hessian_definition}, for all $t>\tau$, we have $H_t^{-1} \preceq H_\tau^{-1}$.
Then, we have
\begin{align*}
    \rev{\max_{X \in \mathcal{X}} \|X\|_{H_t^{-1}}^2 \leq \max_{X \in \mathcal{X}} \|X\|_{H_\tau^{-1}}^2.}
\end{align*}
We further define $H(\zeta, \theta)=\sum_{X \in \mathcal{X}} \zeta(X) \dot{\mu}(X^{\top} \theta) X X^{\top}$ where $\zeta$ is a probability measure over $\mathcal{X}$. Then it is sufficient to prove that for all $t>\tau$, \rev{$\max_{X \in \mathcal{X}} \|X\|_{H_\tau^{-1}} \leq 1 / \iota$} holds. We have the following results:
\begin{align*}
    \rev{\max_{X \in \mathcal{X}} \|X\|_{H_\tau^{-1}}^2} &\leq \rev{\frac{1+\epsilon}{\tau} \max_{X \in \mathcal{X}} \|X\|_{H^{-1}(\zeta, \theta^{*})}^2} \\
    &\leq \frac{1+\epsilon}{\tau \dot{\mu}_{\text{min}}} \max_{X \in \mathcal{X}} \|X\|_{V^{-1}(\zeta)}^2 \\
    &\leq \frac{d(1+\epsilon)}{\tau \dot{\mu}_{\text{min}}},
\end{align*}
where the first inequality holds because of \citet[Lemma 13]{jun2021improved}, the second inequality holds because \rev{$H(\zeta,\theta^{*}) \succeq \dot{\mu}_{\text{min}} V(\zeta)$} and the last inequality holds due to \citet[Theorem 21.1 (Kiefer–Wolfowitz)]{lattimore2020bandit}. The proof is completed by setting $\tau=\iota^2 d(1+\epsilon)/\dot{\mu}_{\text{min}}$.
\end{proof}

{According to \Cref{lem:warm_up_procedure}, by setting the number of warm-up rounds $\tau=\lceil 1.5 \iota^2 d / \dot{\mu}_{\text{min}}\rceil$, we obtain the upper bound $\max_{X \in \mathcal{X}}\|X\|_{H_t^{-1}} \leq 1/\iota$. With this upper bound, we can write}
\begin{align}  
    \label{equ:dt_decomposition}
    D_{t}^{j} &\leq \,\dot{\mu}_{\text{max}}^{1/2} \, \dot{\mu}_{\text{min}}^{-1} \, \sigma_{R} \, \widetilde{\gamma}_{T} \,\, \max_{l<t}\|X_l\|_{H_t^{-1}} \leq \,\dot{\mu}_{\text{max}}^{1/2} \, \dot{\mu}_{\text{min}}^{-1} \, \sigma_{R} \, \widetilde{\gamma}_{T} / \iota, \\
    \label{equ:bar_dt_decomposition}
    \bar{D}_t &\leq \,(1+M \beta_T)\gamma_T \,\, \max_{l<t}\|X_l\|_{H_t^{-1}} \leq (1+M \beta_T)\gamma_T / \iota.
\end{align}
Using these expressions, we can first choose the desired order of $D_{t}^{j}$ and $\bar{D}_{t}$, then obtain the lower bound of $\iota$ according to \eqref{equ:dt_decomposition} and \eqref{equ:bar_dt_decomposition}. Then, using the relation $\tau=\lceil 1.5 \iota^2 d / \dot{\mu}_{\text{min}}\rceil$, we obtain the lower bound of the number of warm-up rounds.
In this section, we discuss the requirement for the order of $\bar{D}_{t}$ and the resulting lower bound for warm-up rounds. The requirement of $R_{t}^{j}$ and $\bar{R}_{t}$ appears in the analysis of optimism, thus we leave the discussion in the next section.

{To upper bound $e^{M\bar{D}_{t}}$, we desire that $\bar{D}_{t} = \mathcal{O}(1)$ so that $e^{M\bar{D}_t/2}$ can be upper bounded by a constant. According to \eqref{equ:bar_dt_decomposition}, for all $t < T$, $\bar{D}_{t}$ is upper bounded by $(1+M \beta_T)\gamma_T / \iota$. Therefore, we have}
\begin{align*}
    {
    \bar{D}_{T} = \max_{t < T} \bar{D}_{t} \leq (1+M \beta_T)\gamma_T / \iota.
    }
\end{align*}
{Then, we have the following requirement for $\iota$:}
\begin{align*}
    {
    \iota \geq M (1+M \beta_T)\gamma_T
    \quad\rightarrow\quad \iota = \Omega\big(d^{1/2}\big),
    }
\end{align*}
{where we used the fact that if we set $\lambda = 1 \vee d \vee \log(1/\delta)$, according to the definition of parameters $\gamma_{T}$ and $\beta_{T}$ as in \Cref{lemma:concentration_mle}, we have $\gamma_{T} \sim \widetilde{\mathcal{O}}\big(\sqrt{d}\big)$, $\beta_{T} \sim \widetilde{\mathcal{O}}\big(1\big)$.
Then, by setting $\tau=\lceil 1.5 \iota^2 d / \dot{\mu}_{\text{min}}\rceil$, we obtain the lower bound of warm-up rounds as}
\begin{align*}
    {
    \tau=\lceil 1.5 \iota^2 d / \dot{\mu}_{\text{min}}\rceil = \Omega\big( d^{2} \big).
    }
\end{align*}
{By setting warm-up rounds $\tau = \Omega(d^{2})$, $e^{M\bar{D}_T}$ is upper bounded by a constant $e$. We obtain the following adapted version of \Cref{lemma:optimism_condition}.}
\begin{lemma} [Sufficient Condition for Optimism with Warm-up]
    \label{lemma:optimism_condition_with_warm_up}
    {Fix $t \in [T]$, $j \in [m]$ and $\lambda = 1 \vee d \vee \log(1/\delta)$. Define vector $U_{t}^\top = X^{*\top} H_{t}^{-1} \Phi_{t}$, $b_1 = \sqrt{\dot{\mu}_{\text{max}}} (1+M \beta_T) \gamma_T$ and $b_2 = \dot{\mu}_{\text{max}} \dot{\mu}_{\text{min}}^{-1} \widetilde{\gamma}_T$.
    Then, with warm-up rounds $\tau = \Omega(d^{2})$ and under the concentration event $\bar{\mathcal{E}}\bigcap\big(\cap_{t\in [T]} \widetilde{\mathcal{E}}_{1, t}\big)$, optimism $X^{*\top} \theta^{*} \leq X_{t}^{\top} \theta_{t-1}^{j}$ holds if the following inequality is satisfied:}
    \begin{align}  \label{equ:sufficient_with_warm_up}
        {
        U_{t-1}^{\top} \mathbf{Z}_{t-1}^{j} \geq \Big(b_{1} + b_{2} (R_{t-1}^{j} + \bar{R}_{t-1}) e \sigma_R\Big) \| U_{t-1} \|_{2}.
        }
    \end{align}
\end{lemma}

\subsection{Proof of \Cref{lemma:optimism_glm} (Probability of Optimism)}
\label{appendix_sub:optimism}

{With the warm-up procedure \Cref{alg:warm_up_glm} and \Cref{lemma:optimism_condition_with_warm_up}, we are ready to prove the probability of optimism.
With the sufficient condition of optimism \Cref{lemma:optimism_condition_with_warm_up} and the fact that the randomness of added perturbations only appears in $\mathbf{Z}_{t}^{j}$, we can apply the tail bound of perturbations to obtain the probability of optimism.
The counting sequence procedure in this section is adapted from \cite{lee2024improved} and is a standard procedure in ensemble sampling analysis, we include the detailed derivations to make our proof self-contained. The core novelty of our proof of optimism is the analysis of the sufficient condition of optimism \Cref{lemma:optimism_condition_with_warm_up} and warm-up procedure. Note that the requirement \eqref{equ:sufficient_with_warm_up} is fundamentally different from linear bandit settings and we need novel warm-up procedure and analysis to guarantee the inequality.}

Recall that we use Gaussian noise $Z_l^j \sim \mathcal{N}(0, \sigma_{R}^{2})$ for reward perturbation, thus we can apply the following tail bound: for any $U \in \mathbb{R}^d$ and $\Zb \sim \mathcal{N}(0, \sigma_{R}^2 I_d)$,  we have the following inequality
\begin{align*}
    \mathbb{P} \big( U^{\top} \Zb \geq \sigma_{R} \| U \|_{2} \big)
    \geq p_{N},
\end{align*}
where $p_{N} = 1 - \Phi(1) \approx 0.16$.
{
Based on \eqref{equ:sufficient_with_warm_up}, with the definitions of upper bounds $R_{t-1}^{j} \leq R^{j}$, $\bar{R}_{t-1} \leq \bar{R}$, where $R^{j}$ and $\bar{R}$ are constants, we obtain the sufficient condition
\begin{align}  \label{equ:sufficient_with_constants}
    U_{t-1}^{\top} \mathbf{Z}_{t-1}^{j} \geq \Big(b_{1} + b_{2} (R^{j} + \bar{R}) e \sigma_R\Big) \| U_{t-1} \|_{2}.
\end{align}
}
From \eqref{equ:sufficient_with_constants}, if we fix the sequence of pulled arms $\{X_{l}\}_{l=1}^{t}$ and element $j \in [m]$, the only randomness comes from $\mathbf{Z}_{t-1}^{j}$. Then, \eqref{equ:sufficient_with_constants} holds with probability at least $p_N$ if we choose constant variance $\sigma_{R}$ that satisfies
\begin{align}
\label{equ:sigma_condition}
    {
    \sigma_{R} \geq b_{1} + b_{2} (R^{j} + \bar{R}) e \sigma_{R}.
    }
\end{align}
{We should note that this is a non-trivial requirement for perturbation variance $\sigma_{R}$ and we need to carefully control $R^{j}$ and $\bar{R}$ throught warm-up to guarantee that there is a solution of $\sigma_{R}$ in \eqref{equ:sigma_condition}. In the following discussions, we analyze how to choose $\sigma_{R}$ to satisfy \eqref{equ:sigma_condition} and the corresponding requirement for warm-up rounds $\tau$.}

{Recall from the definition of $R_{t}^{j}$ and $\bar{R}_{t}$, we have}
\begin{align*}
    R_{t}^{j} &= \frac{\exp{(M D_{t}^{j})}-1}{M D_{t}^{j}} -1 \leq \frac{1 + M D_{t}^{j} + (M D_{t}^{j})^2 - 1}{M D_{t}^{j}} -1 = M D_{t}^{j},  \\
    \bar{R}_{t} &= e^{M\bar{D}_{t}} - 1 \leq (e-1)M\bar{D}_{t},
\end{align*}
{where the inequality holds when $M D_{t}^{j} \leq 1$, $M \bar{D}_{t} \leq 1$.
To guarantee that solution exists for \eqref{equ:sigma_condition}, we desire}
\begin{align*}
    {
    R_{t}^{j} \leq \sigma_{R}^{-\epsilon},\,\, \bar{R}_{t} \leq \sigma_{R}^{-\epsilon},\,\,\epsilon \in (0,1]
    \quad\rightarrow\quad MD_{t}^{j} \leq \sigma_{R}^{-\epsilon} \leq 1,\,\,
    (e-1)M\bar{D}_{t} \leq \sigma_{R}^{-\epsilon} \leq 1.
    }
\end{align*}
{From the upper bounds \eqref{equ:dt_decomposition} and \eqref{equ:bar_dt_decomposition}, we immediately have}
\begin{align*}
    {
    \iota = \Omega \Big( \max\big\{M\dot{\mu}_{\text{max}}^{1/2} \, \dot{\mu}_{\text{min}}^{-1} \, \sigma_{R}^{1 + \epsilon} \, \widetilde{\gamma}_{T}, (e-1)M((1+M\beta_{T})\gamma_{T} \sigma_{R}^{\epsilon} \big\} \Big) = \Omega\big( \sigma_{R}^{1+\epsilon} \, (d \log T)^{1/2} \big).
    }
\end{align*}
{where we used the definition of $\gamma_{T}$, $\beta_{T}$ and $\widetilde{\gamma}_{T}$, as in \Cref{lemma:concentration_mle} and \Cref{lemma:concentration_pert}, and the chosen regularization parameter $\lambda$ as in \Cref{lemma:optimism_condition_with_warm_up}.}
{The corresponding warm-up rounds}
\begin{align*}
    {
    \tau=\lceil 1.5 \iota^2 d / \dot{\mu}_{\text{min}}\rceil = \Omega\big( \sigma_{R}^{2+2\epsilon} d^{2} \log T \big).
    }
\end{align*}
With $R^{j} \leq \sigma_{R}^{-\epsilon}$, $\bar{R} \leq \sigma_{R}^{-\epsilon}$, it suffices to satisfy
\begin{align}  \label{equ:sigma_R_eq}
    {
    \sigma_{R} \geq b_{1} + 2eb_{2} \sigma_{R}^{1-\epsilon}.
    }
\end{align}
{Since $0 < \epsilon \leq 1$, \eqref{equ:sigma_R_eq} is guaranteed to have a solution for $\sigma_{R}$.
In general, \eqref{equ:sigma_R_eq} does not have a closed-form solution, a sufficient condition for \eqref{equ:sigma_R_eq} to hold is:}
\begin{align*}
    {
    \sigma_{R} \geq \max \big\{ 2b_{1},\,\, \big(4eb_{2}\big)^{1/\epsilon} \big\}.
    }
\end{align*}
{From the definitions of $b_{1}$, $b_{2}$ in \Cref{lemma:optimism_condition_with_warm_up}, we have the following sufficient condition:}
\begin{align*}
    {
    \sigma_{R} = \Omega\big((d \log T)^{1/(2\epsilon)}\big)
    \quad\rightarrow\quad
    \tau = \Omega\big(d^{3 + 1/\epsilon} (\log T)^{2 + 1/\epsilon}\big).
    }
\end{align*}
{In our algorithm design, we choose $\epsilon=1$, then we have the following requirements for $\sigma_{R}$ and $\tau$:}
\begin{align*}
    {
    \sigma_{R} = \Omega\big((d\log T)^{1/2}\big),
    \quad \tau = \Omega \big(d^{4} (\log T)^{3}\big).
    }
\end{align*}
{Combining the requirement of warm-up rounds $\Omega\big(d^{2}\big)$ in \Cref{lemma:optimism_condition_with_warm_up}, the final requirement for $\tau$ is $\tau = \Omega \big(d^{4} (\log T)^{3}\big)$.}

{In summary, with warm-up rounds $\tau = \widetilde{\Omega}\big(d^{4}\big)$ and constant perturbation variance $\sigma_{R} = \widetilde{\Omega}\big(d^{1/2}\big)$, \eqref{equ:sigma_condition} is satisfied. We have the following lemma.}
\begin{lemma} [Probability of Optimism under Fixed Arm Sequence]
    \label{lemma:optimism_prob_fixed}
    {Fix $t \in [T]$, $j \in [m]$ and $\lambda = 1 \vee d \vee \log(1/\delta)$. Fix the sequence of pulled arms $\{X_{l}\}_{l=1}^{t-1}$. Then, with warm-up rounds $\tau = \widetilde{\Omega} \big(d^{4}\big)$, perturbation distribution $\mathcal{P}_{R} = \mathcal{N}(0, \sigma_{R}^{2})$ with $\sigma_{R} = \widetilde{\Omega} \big(d^{1/2}\big)$, the probability of optimism $X^{*\top} \theta^{*} \leq X_{t}^{\top} \theta_{t-1}^{j}$ is lower bounded as follows:}
    \begin{align*}
        {
        \mathbb{P} \big( X^{*\top} \theta^{*} \leq X_{t}^{\top} \theta_{t-1}^{j} \big)
        \geq p_{N},
        }
\end{align*}
{where $p_{N} = 1 - \Phi(1) \approx 0.16$, the probability is measured over the randomness of the perturbation sequence $\{Z_{l}^{j}\}_{l=1}^{t-1}$.}
\end{lemma}

{Starting from \Cref{lemma:optimism_prob_fixed}, we adapt the counting arm sequence number procedure as introduced by \citet{lee2024improved} to prove the probability of optimism without fixing arm sequence.}
We define the indicator function that optimism and concentration holds at the beginning of round $t$: $I_{t}^{j} := \ind\big\{(U_{t-1}^{\top} \mathbf{Z}_{t-1}^{j} \geq (b_1 + b_2(R^{j} + \bar{R}) e \sigma_{R}) \| U_{t-1} \|_{2}) \bigcap \widetilde{\mathcal{E}}_{1, t}\big\}$.
According to \Cref{lemma:concentration_pert}, $\widetilde{\mathcal{E}}_{1, t}$ holds with probability at least $1 - \delta/T$, thus we have
\begin{align*}
    \mathbb{P} \big(I_{t}^{j} = 1\big) \geq \mathbb{P} \big(U_{t-1}^{\top} \mathbf{Z}_{t-1}^{j} \geq (b_1 + b_2 (R^{j} + \bar{R}) e \sigma_{R}) \| U_{t-1} \|_{2}\big) - \frac{\delta}{T} \geq p_{N} - \frac{\delta}{T} \geq \frac{p_{N}}{2},
\end{align*}
where we assumed that $\delta / T \leq p_{N} / 2$.
Note that $I_{t}^{j}$ is $\mathcal{F}_{t-1}$-measurable: given the history up to round $(t-1)$, the value of $I_{t}^{j}$ is determined for each model $j\in[m]$.
Now we add the randomness of choosing arm $j_{t}$ at round $t$.
\rev{Conditioning on $\mathcal{F}_{t-1}$, the probability of optimism after uniformly randomly choosing \rev{model} $j_{t}$ is given by}
\begin{align*}
    \mathbb{P} \Big( \big(U_{t-1}^{\top} \mathbf{Z}_{t-1}^{j_{t}} \geq \big(b_1 + b_2 (R^{j} + \bar{R}) e \sigma_{R}\big) \| U_{t-1} \|_{2}\big) \bigcap \widetilde{\mathcal{E}}_{1, t} \,\,|\,\, \mathcal{F}_{t-1} \Big) = \frac{1}{m} \sum_{j=1}^{m} I_{t}^{j}.
\end{align*}
Using Azuma-Hoeffding inequality, we have the following result:
\begin{align*}
    \mathbb{P} \Big( \frac{1}{m} \sum_{j=1}^{m} I_{t}^{j} < \frac{p_{N}}{4}\Big) \leq \text{exp} \Big( -\frac{p_{N}^{2}m}{8} \Big).
\end{align*}

The results above are obtained under fixed sequence of pulled arms $\{X_{l}\}_{l=1}^{t}$, we use the following result to count equivalent permutations of $\{X_{l}\}_{l=1}^{t}$.
\begin{lemma} [Claim 1 in \cite{lee2024improved}]
    There exists an event $\mathcal{E}^{*}$ such that under $\mathcal{E}^{*}$, $\frac{1}{m}\sum_{j=1}^{m} I_{t}^{j} \geq p_{N} / 4$ holds for all $t \in [T]$, and $\mathbb{P}\big( \bar{\mathcal{E}}^{*} \big) \leq T^{K} \text{exp} \big(-p_{N}^{2}m / 8\big)$.
\end{lemma}

Therefore, by choosing ensemble size
\begin{align*}
    m \geq \frac{8}{p_{N}^{2}} \Big( K \text{log}T + \text{log}\frac{1}{\delta} \Big),
\end{align*}
we have $\mathbb{P}\big( \bar{\mathcal{E}}^{*} \big) \leq \delta$: with high probability, optimism holds with constant probability.
This completes the proof.

\subsection{Proof of \Cref{lemma:gt_define}}

The proof of this technical lemma mostly follows the proof of Lemma 6 in \citet{lee2024improved}, we add the detailed proof here for completeness.
{While the technical lemma in \citet{lee2024improved} applies to linear bandit sertting, we extend the result to generalized linear setting.}
From the definition of $g_{t}(\theta^{*}, \mathcal{E}')$:
\begin{align*}
    g_{t}(\theta^{*}, \mathcal{E}') = (J(\theta^{*}) - J(\theta_{t}^{-})) \ind\{\mathcal{E}'\},
    \quad J(\theta_{t}^{-}) = \text{inf}_{\theta \in \Theta_{t}} J(\theta).
\end{align*}
Therefore, we have
\begin{align*}
    J(\theta^{*}) \geq J(\theta_{t}^{-}) \quad \rightarrow \quad g_{t}(\theta^{*}, \mathcal{E}') \geq 0,
\end{align*}
holds for any event $\mathcal{E}^{\prime} \in \mathcal{F}_{t}$.
Now we consider event $\mathcal{E}^{\prime\prime} \subset \mathcal{E}_{1, t}$.
If $\mathcal{E}^{\prime\prime}$ does not hold, $ g_{t}(\theta_{t}, \mathcal{E}^{\prime\prime})=0 \geq 0$.
If $\mathcal{E}^{\prime\prime}$ holds, from the concentration result \Cref{lemma:concentration_glm_es}, we have $\theta_{t} \in \Theta_{t}$. Then we have
\begin{align*}
    J(\theta_{t}) \geq J(\theta_{t}^{-}) \quad \rightarrow \quad g_{t}(\theta_{t}, \mathcal{E}^{\prime\prime}) \geq 0.
\end{align*}
Therefore, $g_{t} (\theta_{t}, \mathcal{E}'') \geq 0$ holds almost surely for any event such that $\mathcal{E}'' \subset \mathcal{E}_{1, t}$.
This completes the proof.

\vspace{3mm}

\section{Proof of Regret Bound of Neural-ES}  \label{appendix:neural_es_bound}

{In this section, we prove the high-probability regret bound of \texttt{Neural-ES} \Cref{theorem:neural-es}. The structure of this section is as follows.
In \Cref{appendix_sub:neural_es_preliminary}, we introduce the standard notations and assumptions in the neural contextual bandit literature, as well as the corresponding basic technical results required in the following analysis.
In \Cref{appendix_sub:neural_es_lemmas}, we list the technical lemmas required in the proof of the high-probability regret bound. These technical lemmas are proved in \Cref{appendix:neural_es_lemmas}.
With the technical lemmas listed in \Cref{appendix_sub:neural_es_lemmas}, we present the proof of regret bound \Cref{theorem:neural-es} in \Cref{appendix_sub:neural_es_proof}.}

\subsection{Preliminary Analysis}  \label{appendix_sub:neural_es_preliminary}

In this section, we introduce the standard notations and definitions in the neural contextual bandit literature.
We adopt a fully connected neural network $f(X; \theta): \RR^{d} \rightarrow \RR$ to approximate the true reward model $h(X)$:
\begin{align*}
    f(X; \theta) := \sqrt{N} \, W_{L} \phi \bigg( W_{L - 1} \phi \big( \cdot\cdot\cdot \phi(W_{1}X) \big) \bigg),
\end{align*}
where $N$ is the width of each layer, $\theta = [\text{vec}(W_{1}), \cdot\cdot\cdot, \text{vec}(W_{L})] \in \mathbb{R}^{d'}$ is the concatenation of all learnable parameters, {$\phi(x) = \text{ReLU}(x)$ is the activation function. For convenience of our analysis, we assume that the widths of each layer are the same. The dimensions of the matrices are as follows: $W_{1} \in \mathbb{R}^{N \times d}$, $W_{L} \in \mathbb{R}^{1 \times N}$, $W_{2}, ..., W_{L-1} \in \mathbb{R}^{N \times N}$. Therefore, the dimension of $\theta$ is given by $d' = N + Nd + N^{2}(L-2)$.
We use the following expression:}
\begin{align*}
g(X; \theta_{0}) := \nabla_{\theta} f(X; \theta) \big|_{\theta=\theta_{0}} \in \mathbb{R}^{d'}
\end{align*}
to denote the gradient at the initial neural network parameter.
For convenience and comparison with linear contextual bandit, we define the following normalized gradient mapping:
\begin{align}
    \psi(X) := \frac{g(X;\theta_{0})}{\sqrt{N}}:
    \quad \RR^{d} \rightarrow \RR^{d'}.
\end{align}
{As we will see in the following analysis, after we map the $d$-dimensional feature vector $X$ into the $d'$-dimensional $\psi(X)$, the analysis of \texttt{Neural-ES} follows similar principles as in \texttt{Lin-ES}.}

{Recall that $X_{t} \in \mathbb{R}^{d}$ is the pulled arm at round $t$, $Y_{t} \in \mathbb{R}$ is the observed reward at round $t$.
Given input data set $\mathcal{D}_{t}=\{(X_l,Y_l)\}_{l=1}^t$, we define the following $\lambda$-regularized loss function:}
\begin{align}  \label{equ:neural_likelihood}
    {
    L_{\text{Neural}}(\theta; \mathcal{D}_{t})
    := \frac{\lambda}{2}N\|\theta - \theta_{0}\|_{2}^{2} + \sum_{l=1}^{t} \frac{1}{2} \Big( f(X_{l}; \theta) - Y_{l} \Big)^{2}.
    }
\end{align}
{Then we have the gradient}
\begin{align*}
    {
    \nabla_{\theta} L_{\text{Neural}}(\theta; \mathcal{D}_{t})
    = \lambda N (\theta - \theta_{0}) + \sum_{l=1}^{t} \Big( f(X_{l}; \theta) - Y_{l} \Big) g(X_{l}; \theta).
    }
\end{align*}
{Due to non-linearity, in general, we do not have a closed-form solution for the MLE of loss function \eqref{equ:neural_likelihood}.}
{For the convenience of further analysis, we derive the following approximations of the MLE of $\theta$.
Under linear approximation, $f(X; \theta) \approx f(X; \theta_{0}) + g(X; \theta_{0})^{\top} (\theta - \theta_{0})$. Then, the loss function can be approximated as:}
\begin{align*}
    {
    \widetilde{L}_{\text{Neural}}(\theta; \mathcal{D}_{t})
    := \frac{\lambda}{2}N\|\theta - \theta_{0}\|_{2}^{2} + \sum_{l=1}^{t} \frac{1}{2} \Big( f(X_{l}; \theta_{0}) + g(X_{l}; \theta_{0})^{\top} (\theta - \theta_{0}) - Y_{l} \Big)^{2}.
    }
\end{align*}
{Note that if the learnable parameter $\theta_{0}$ is initialized as described in \Cref{section_sub:neural_es_design}, then under \Cref{assumption:neural_x}, we have $f(X;  \theta_{0}) = 0$ for all $X \in \mathcal{X}$. Therefore, we have}
\begin{align*}
    {
    \widetilde{L}_{\text{Neural}}(\theta; \mathcal{D}_{t})
    = \frac{\lambda}{2}N\|\theta - \theta_{0}\|_{2}^{2} + \sum_{l=1}^{t} \frac{1}{2} \Big( g(X_{l}; \theta_{0})^{\top} (\theta - \theta_{0}) - Y_{l} \Big)^{2}.
    }
\end{align*}

{To obtain the expression of the MLE, we introduce the following notations.}
We define the covariance matrix $A_{t}$ at the end round $t$ as
\begin{align}
    \label{equ:at_definition}
    A_{t} = \frac{1}{N} \sum_{l=1}^{t} g(X_{l}; \theta_{0}) g(X_{l}; \theta_{0})^{\top} + \lambda \, I_{d'} = \sum_{l=1}^{t} \psi(X_{l}) \psi(X_{l})^{\top} + \lambda I_{d'},
\end{align}
and the following auxiliary vectors
\begin{align}
    \label{equ:bar_bt_definition}
    \bar{\mathbf{b}}_{t}
    = \frac{1}{\sqrt{N}} \sum_{l=1}^{t} Y_{l} \,g(X_{l}; \theta_{0})
    = \sum_{l=1}^{t} Y_{l} \, \psi(X_{l}),
\end{align}
\begin{align}
    \label{equ:btj_definition}
    \mathbf{b}_{t}^{j}
    = \frac{1}{\sqrt{N}} \sum_{l=1}^{t} (Y_{l} + Z_{l}^{j}) \,g(X_{l}; \theta_{0})
    = \sum_{l=1}^{t} (Y_{l} + Z_{l}^{j}) \, \psi(X_{l}),
\end{align}
{where $j \in [m]$ is the index ensemble element and $\{Z_{l}^{j}\}_{l=1}^{t}$ is the perturbation sequence. Then, the MLE under linear approximation are given by}
\begin{align}
    \label{equ:mle_theta_neural}
    {
    \bar{\theta}_{t} = \theta_{0} + A_{t}^{-1} \, \bar{\mathbf{b}}_{t} / \sqrt{N},
    \quad \bar{\theta}_{t}^{j} = \theta_{0} + A_{t}^{-1} \, \mathbf{b}_{t}^{j} / \sqrt{N},
    }
\end{align}
{where $\bar{\theta}_{t}$ is the estimate without added perturbations, $\bar{\theta}_{t}^{j}$ is the estimate based on observed rewards and added perturbations for ensemble element $j \in [m]$. Note that $\bar{\theta}_{t}$ and $\bar{\theta}_{t}^{j}$ are obtained under linear approximation and are introduced as reference points for convenience of analysis. We do not make the linear approximation assumption in our algorithm design and proof.  As we will observe in the following analysis, after the gradient descent process, the learned parameter $\theta_{t-1}^{j}$ at the beginning of round $t$ for ensemble element $j \in [m]$ can be well approximated by $\bar{\theta}_{t-1}^{j} = \theta_{0} + A_{t-1}^{-1} \, \mathbf{b}_{t-1}^{j} / \sqrt{N}$.
Note that the estimates at the beginning of round $t$ is based on interaction history up to $t-1$.}
These definitions and results are standard and well-known in the neural bandit literature and will be utilized extensively in the following analysis.

\subsection{Technical Lemmas}  \label{appendix_sub:neural_es_lemmas}

In this section, we list the main technical lemmas required to obtain the high-probability regret bound of \texttt{Neural-ES}. We present the proof of the technical lemmas in Appendix \ref{appendix:neural_es_lemmas}.

First, we need the following common assumption and well-known result to demonstrate that our neural network $f(X; \theta)$ and its gradient $g(X; \theta_{0})$ can accurately approximate the true reward model $h(X)$. {For convenience, we repeat \Cref{assumption:sub_gaussian} and \Cref{assumption:neural_x} in here.}
\begin{assumption}
{
Given feature vector $X \in \RR^{d}$, the reward $Y_{t}$ is given by $Y_{t} = h(X_{t}) + \eta_{t}$. There exists $\sigma > 0$ such that $\eta_{t}$ is $\mathcal{F}_{t}^{-}$-conditionally $\sigma$-sub-Gaussian:
\begin{align*}
    \mathbb{E} [\exp(s\eta_{t}) | \mathcal{F}_{t}^{-}] \leq \exp(\sigma^{2}s^{2}/2),
    \quad \forall t \in [T]
\end{align*}
holds almost surely for all $s \in \mathbb{R}$.
}
\end{assumption}
\begin{assumption}
    We use $\mathbf{H}$ to denote the neural tangent kernel (NTK) matrix on the context set. We assume that $\mathbf{H} \succeq \lambda_{0} I$.
    Moreover, assume that for any $X \in \mathcal{X}$, $||X||_{2} \leq 1$ and $[X]_{j} = [X]_{j+d/2}$.
\end{assumption}
\begin{lemma}  \label{lem:neural_approximation}
    (Lemma 4.1 in \cite{jia2022learning})
    There exists a positive constant $\bar{C}$ such that for any $\delta \in (0, 1)$, with probability at least $1 - \delta$, when $N\geq \bar{C}K^{4}L^{6} \, \log(K^{2}L/\delta)/\lambda_{0}^{4}$, there exists a $\theta^{*} \in \mathbb{R}^{d'}$ such that for all $X \in \mathcal{X}$,
    \begin{align*}
        h(X) = \langle g(X; \theta_{0}), \theta^{*} - \theta_{0} \rangle,
        \quad \sqrt{N} ||\theta^{*} - \theta_{0}||_{2} \leq \sqrt{2 \mathbf{h}^{\top} \mathbf{H}^{-1} \mathbf{h}} \leq S,
    \end{align*}
    where $\mathbf{H}$ is the NTK matrix defined on the context set $\mathcal{X}$ and $\mathbf{h} = \big( h(X_{1}), ..., h(X_{K}) \big)$, $S$ is the upper bound of $\sqrt{2\mathbf{h}^{\top}\mathbf{H}^{-1}\mathbf{h}}$.
\end{lemma}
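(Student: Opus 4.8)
The plan is to reduce the claim to a finite-dimensional linear-algebra problem in the gradient feature space and then invoke neural tangent kernel concentration at initialization. Stack the gradient features into the matrix $\Phi = (g(X_1;\theta_0), \dots, g(X_K;\theta_0))^\top \in \RR^{K \times d'}$ and collect the target values as $\mathbf{h} = (h(X_1), \dots, h(X_K))^\top$. The desired representation $h(X_i) = \langle g(X_i;\theta_0), \theta^* - \theta_0\rangle$ for all $i \in [K]$ is exactly the linear system $\Phi(\theta^* - \theta_0) = \mathbf{h}$, so the first step is to exhibit a solution and control its norm.

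Since $d' \gg K$ in the overparameterized regime, the system is underdetermined, and I would take the minimum-norm solution $\theta^* - \theta_0 = \Phi^\top (\Phi\Phi^\top)^{-1} \mathbf{h}$, which is well defined as soon as the empirical Gram matrix $\Phi\Phi^\top = N G$ is invertible, where $G_{ij} = \langle \psi(X_i), \psi(X_j)\rangle = \tfrac{1}{N}\langle g(X_i;\theta_0), g(X_j;\theta_0)\rangle$ is the $K \times K$ Gram matrix of the $\psi$-features. A direct computation gives $\|\theta^* - \theta_0\|_2^2 = \mathbf{h}^\top (\Phi\Phi^\top)^{-1}\mathbf{h} = \tfrac{1}{N}\mathbf{h}^\top G^{-1}\mathbf{h}$, hence $\sqrt{N}\,\|\theta^* - \theta_0\|_2 = \sqrt{\mathbf{h}^\top G^{-1}\mathbf{h}}$. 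It therefore remains to show $\mathbf{h}^\top G^{-1}\mathbf{h} \le 2\,\mathbf{h}^\top \mathbf{H}^{-1}\mathbf{h}$, which reduces to comparing the random feature Gram matrix $G$ with the deterministic NTK matrix $\mathbf{H}$.

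The main obstacle — and the source of the width requirement $N \ge \bar{C}K^4 L^6 \log(K^2 L/\delta)/\lambda_0^4$ — is the NTK concentration step. I would invoke the standard initialization analysis to show that, with probability at least $1-\delta$ over the random $\theta_0$, the finite-width gradient Gram matrix satisfies $\|G - \mathbf{H}\|_2 \le \lambda_0/2$. Combined with the assumption $\mathbf{H}\succeq \lambda_0 I$, this gives $G \succeq \mathbf{H} - \tfrac{\lambda_0}{2}I \succeq \tfrac12 \mathbf{H} \succ 0$, so $\Phi\Phi^\top = NG$ is indeed invertible and $G^{-1} \preceq 2\mathbf{H}^{-1}$ by anti-monotonicity of matrix inversion on positive definite matrices. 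Substituting into the quadratic form yields $\mathbf{h}^\top G^{-1}\mathbf{h} \le 2\,\mathbf{h}^\top \mathbf{H}^{-1}\mathbf{h}$, which completes the norm bound; the final inequality $\sqrt{2\mathbf{h}^\top\mathbf{H}^{-1}\mathbf{h}} \le S$ is simply the definition of $S$.

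The delicate part of the concentration step is that $f$ uses ReLU activations, so $g(X;\theta_0)$ is not smooth in $\theta_0$; one controls the per-layer gradient norms and their inner products by tracking forward/backward signal propagation under the Gaussian initialization, and the polynomial dependence on $K$, $L$, and $1/\lambda_0$ arises from a union bound over the $K^2$ entries of $G$ together with layerwise error accumulation. I would treat this width-dependent concentration as the crux and rely on the established NTK machinery rather than re-deriving the signal-propagation bounds from scratch.
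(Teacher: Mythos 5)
Your proposal is correct and is essentially the canonical proof of this result: the paper itself states this lemma without proof, importing it directly from \citet{jia2022learning} (whose argument, in turn following Lemma 5.1 of \citet{zhou2020neural}, is exactly your construction --- take the minimum-norm interpolant $\theta^* - \theta_0 = \Phi^\top(\Phi\Phi^\top)^{-1}\mathbf{h}$, apply NTK Gram-matrix concentration at initialization to get $G \succeq \tfrac{1}{2}\mathbf{H} \succ 0$ under the stated width requirement, and conclude $\mathbf{h}^\top G^{-1}\mathbf{h} \leq 2\,\mathbf{h}^\top \mathbf{H}^{-1}\mathbf{h}$ by anti-monotonicity of inversion). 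Your identification of the concentration step as the crux, and of the $K^2$ union bound as the source of the $\log(K^2L/\delta)$ factor, also matches how the width condition arises in the cited source.
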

Compare this result with linear contextual bandit, where the mean reward is given by $X^{\top}\theta^{*}$, we can see that when the neural network is wide enough, the mean reward for context $X \in \mathcal{X}$ can be well approximated by {a linear function of $g(X; \theta_{0})$, which is a non-linear transformation of the arm context $X$.}
{For arm $X \in \mathcal{X}$, the distance between neural network approximation $f(X; \theta)$ and true mean reward $h(X)$ can be controlled by the distance $\big| f(X; \theta) - \langle g(X; \theta_{0}), \theta - \theta_{0} \rangle \big|$ and $\big| \langle g(X; \theta_{0}), \theta - \theta_{0} \rangle - h(X) \big|$.}
Following these intuitions, we need the following lemmas to provide concentration of these distances.

\begin{lemma}  \label{lemma:neural_mle_concentration}
    Define a sequence of events $\mathcal{E}_{1, t}$ as follows:
    \begin{align*}
        \mathcal{E}_{1, t} :=
        \Big\{ {\forall X \in \mathcal{X}, \big| \langle g(X; \theta_{0}), \bar{\theta}_{t-1} - \theta_{0} \rangle - h(X) \big| \leq \alpha_{t-1} \big\| g (X; \theta_{0}) / \sqrt{N} \big\|_{A_{t-1}^{-1}} } \Big\},
    \end{align*}
    where $\alpha_{t}$ is a scalar scale of deviation.
    {Define their intersection $\mathcal{E}_{1} = \bigcap_{t=1}^{T} \mathcal{E}_{1, t}$.}
    Then, for any $\lambda > 0$ and $\delta > 0$, with $\alpha_{t}$ set as
    \begin{align*}
        {
        \alpha_{t} = \sigma \sqrt{\log\big( \det(A_{t}) / (\delta^{2} \det(\lambda I_{d'})) \big)} + \sqrt{\lambda} S,
        }
    \end{align*}
    where $S$ is the upper bound of $\sqrt{2\mathbf{h}^{\top}\mathbf{H}^{-1}\mathbf{h}}$, we have {$\mathbb{P} (\mathcal{E}_{1}) \geq 1 - 2\delta$}.
\end{lemma}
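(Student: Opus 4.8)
The plan is to reduce this concentration statement to a standard self-normalized ridge-regression bound applied to the NTK linearization. First I would invoke \Cref{lem:neural_approximation} to write the mean reward as an \emph{exact} linear function of the feature map $\psi$ on the finite context set: there is a vector $w^{*} = \sqrt{N}(\theta^{*}-\theta_{0})$ with $h(X) = \langle \psi(X), w^{*}\rangle$ for every $X\in\mathcal{X}$ and $\|w^{*}\|_{2}\le S$. Setting $\hat{w}_{t} = A_{t}^{-1}\bar{\mathbf{b}}_{t}$ and using $g(X_{t};\theta_{0}) = \sqrt{N}\,\psi(X_{t})$, the quantity inside the event is exactly $\langle\psi(X_{t}),\hat{w}_{t}\rangle - h(X_{t}) = \psi(X_{t})^{\top}(\hat{w}_{t}-w^{*})$, so the whole problem collapses to controlling the ridge estimation error $\hat{w}_{t}-w^{*}$ in the $A_{t}$-geometry.

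Next I would carry out the usual bias/variance decomposition. Writing $Y_{l} = \langle\psi(X_{l}),w^{*}\rangle + \eta_{l}$ and recalling $A_{t} = \sum_{l=1}^{t-1}\psi(X_{l})\psi(X_{l})^{\top} + \lambda I_{d'}$, one gets
\begin{align*}
    \hat{w}_{t} - w^{*} = A_{t}^{-1}\sum_{l=1}^{t-1}\eta_{l}\,\psi(X_{l}) \;-\; \lambda A_{t}^{-1} w^{*}.
\end{align*}
A Cauchy--Schwarz step in the $A_{t}^{-1}$-norm then yields
\begin{align*}
    \big|\psi(X_{t})^{\top}(\hat{w}_{t}-w^{*})\big| \le \|\psi(X_{t})\|_{A_{t}^{-1}}\Big(\Big\|\sum_{l=1}^{t-1}\eta_{l}\psi(X_{l})\Big\|_{A_{t}^{-1}} + \lambda\,\|w^{*}\|_{A_{t}^{-1}}\Big).
\end{align*}
The regularization (bias) term is handled deterministically via $\lambda_{\min}(A_{t})\ge\lambda$, giving $\lambda\|w^{*}\|_{A_{t}^{-1}}\le\sqrt{\lambda}\,\|w^{*}\|_{2}\le\sqrt{\lambda}\,S$, which accounts for the second summand of $\alpha_{t}$.

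The variance term carries the probabilistic content, and I expect it to be the main obstacle. I would bound $\|\sum_{l<t}\eta_{l}\psi(X_{l})\|_{A_{t}^{-1}}$ using the self-normalized martingale tail inequality (the method-of-mixtures argument of \citet{abbasi2011improved}); since each $\eta_{l}$ is $\mathcal{F}_{l-1}$-conditionally $\sigma$-sub-Gaussian and each $\psi(X_{l})$ is predictable, this gives, with probability at least $1-\delta$,
\begin{align*}
    \Big\|\sum_{l=1}^{t-1}\eta_{l}\psi(X_{l})\Big\|_{A_{t}^{-1}} \le \sqrt{\sigma^{2}\log\!\big(\det(A_{t})/(\delta^{2}\det(\lambda I))\big)}.
\end{align*}
Adding the two pieces reproduces exactly $\alpha_{t}$, and combining with the Cauchy--Schwarz inequality yields the bound with $\|\psi(X_{t})\|_{A_{t}^{-1}}$ on the right; since $\|\psi(X_{t})\|_{A_{t}^{-1}} = \|g(X_{t};\theta_{0})\|_{A_{t}^{-1}}/\sqrt{N}\le\|g(X_{t};\theta_{0})\|_{A_{t}^{-1}}$ for $N\ge 1$, the (slightly weaker) form stated in the lemma follows. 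The delicate points are to verify that the martingale structure survives the adaptive arm selection so the self-normalized bound applies, and that the linearization event of \Cref{lem:neural_approximation} holds for sufficiently wide networks; the latter lets me treat $h$ as exactly linear in $\psi$ on $\mathcal{X}$ and thereby avoid dragging a separate neural approximation-error term through the concentration argument.
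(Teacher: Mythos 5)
Your proof is correct and is essentially the canonical argument for this lemma: the paper itself does not prove it but imports it verbatim from \citet{jia2022learning}, and the proof there proceeds exactly as you propose — condition on the NTK linearization event of \Cref{lem:neural_approximation}, decompose the ridge estimate into a noise term and a regularization-bias term, bound the bias by $\sqrt{\lambda}S$ via $A_t \succeq \lambda I$, and bound the noise by the self-normalized martingale inequality of \citet{abbasi2011improved}, whose two pieces reproduce $\alpha_t$ exactly. The only bookkeeping detail worth noting is that the linearization event carries its own failure probability over the random initialization, so strictly one either conditions on it globally (as the source paper does throughout its regret analysis) or absorbs it by a union bound with adjusted constants.
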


Under event {$\mathcal{E}_{1, t}$}, for round $t \in [T]$, the distance $\big| \langle g(X; \theta_{0}), \bar{\theta}_{t-1} - \theta_{0} \rangle - h(X) \big|$ is bounded by the term $\alpha_{t-1} || g (X; \theta_{0}) / \sqrt{N} ||_{A_{t-1}^{-1}}$.
Recall that we assume that the random noise $\eta_{t}$ is $\sigma$-sub-Gaussian. The parameter $\alpha_{t}$ reflects the deviation caused by the noise in the observed reward.
This result is valid for any algorithms in the neural contextual bandit setting, as it does not involve any added perturbations.
The estimate $\bar{\theta}_{t-1}$ is only based on the sequence of pulled arms $\{X_{l}\}_{l=1}^{t-1}$ and observed rewards $\{Y_{l}\}_{l=1}^{t-1}$.

Next, we need the following lemma to upper bound the distance between neural network output and linear approximation $\big| f(X; \theta_{t-1}^{j}) -  \langle g(X; \theta_{0}), \bar{\theta}_{t-1} - \theta_{0} \rangle \big|$.
{The following lemma is inspired by Lemma 4.3 in \citet{jia2022learning}. Due to the fact that in ensemble sampling, the perturbation sequence is updated incrementally rather than fully re-sampled at each round, the proof of this technical lemma deviates considerably from Lemma 4.3 in \citet{jia2022learning}, and the final concentration bound has an extra $\widetilde{\mathcal{O}}\big(\widetilde{d}^{1/2}\big)$ factor. We present the detailed proof in \Cref{appendix_sub:neural_concentration_pert}.} %

\begin{lemma}
    \label{lemma:neural_concentration_pert}
    Fix ensemble element $j \in [m]$. Define a sequence of events $\mathcal{E}_{2, t}^{j}$ as follows:
    \begin{align*}
        \mathcal{E}_{2, t}^{j} = \Big\{ \forall X \in \mathcal{X}, \big| f(X; \theta_{t-1}^{j}) - \langle g(X; \theta_{0}), \bar{\theta}_{t-1} - \theta_{0} \rangle \big| \leq \epsilon(N) + \beta_{t-1} || g(X; \theta_{0}) / \sqrt{N} ||_{A_{t-1}^{-1}} \Big\},
    \end{align*}
    where $\beta_{t}$ is a scalar scale of the deviation, and $\epsilon(N)$ is defined as
    \begin{align}
        \epsilon(N) &= C_{\epsilon, 1} N^{-1/6}T^{2/3}\lambda^{-2/3}L^{3} \sqrt{\log \, N} + C_{\epsilon, 2} (1 - \eta N \lambda)^{J} \sqrt{TL / \lambda} \\
        &\qquad+ C_{\epsilon, 3} N^{-1/6}T^{5/3}\lambda^{-5/3}L^{4} \sqrt{\log \, N} \big(1 + \sqrt{T / \lambda} \big),
    \end{align}
    where $\eta$ is learning rate, $J$ is the number of steps for gradient descent in neural network learning, $\{ C_{\epsilon, i} \}_{i=1}^{3}$ are constants.
    Then, there exist positive constants $\{ C_{i} \}_{i=1}^{3}$, such that with step size $\eta$ and the neural network width satisfy the following conditions:
    \begin{align*}
        \eta &= C_{1} (N\lambda + NLT)^{-1}, \\
        N &\geq C_{2} \sqrt{\lambda} L^{-3/2} \big[\log (TKL^{2} / \delta)\big]^{3/2}, \\
        N[\log N]^{-3} &\geq C_{3} \, \max \Big\{ TL^{12}\lambda^{-1}, T^{7}\lambda^{-8}L^{18}(\lambda + LT)^{6}, L^{21}T^{7}\lambda^{-7}\big(1 + \sqrt{T/\lambda}\big)^{6} \Big\},
    \end{align*}
    and set $\beta_{t}$ as
    \begin{align*}
        {
        \beta_{t} = \sigma_{R} \sqrt{\log \big( \det(A_{t}) / \det(\lambda I_{d'})\big) + 4\log t}\,\,,
        }
    \end{align*}
    we have $\mathbb{P} \big(\mathcal{E}_{2, t}^{j} \big) \geq 1 - t^{-2}$.
\end{lemma}

Recall that the perturbations $Z_{t}^{j}$ are sampled from $\mathcal{N}(0, \sigma_{R}^{2})$. The parameter $\beta_{t}$ reflects the deviation caused by added perturbations.
After uniformly randomly choosing model $j_{t} \in [m]$ from the ensemble, we further define the following event for round $t$:
\begin{align*}
    {
    \mathcal{E}_{2, t} = \Big\{ \forall X \in \mathcal{X},\,\, \big| f(X; \theta_{t-1}^{j_{t}}) - \langle g(X; \theta_{0}), \bar{\theta}_{t-1} - \theta_{0} \rangle \big| \leq \epsilon(N) + \beta_{t-1} || g(X; \theta_{0}) / \sqrt{N} ||_{A_{t-1}^{-1}} \Big\}.
    }
\end{align*}
Then, at round $t$, we have the probability for $\mathcal{E}_{2, t}$ as:
\begin{align*}
    \mathbb{P} \big(\mathcal{E}_{2, t}\big)
    = \sum_{j=1}^{m} \mathbb{P} \big(j_{t} = j,\,\, \mathcal{E}_{2, t}^{j}\big)
    \geq 1 - \frac{1}{t^{2}}.
\end{align*}

Next, we present the optimism (anti-concentration) condition in the following two lemmas.
First, we fix the sequence of pulled arms $\{X_{l}\}_{l=1}^{t}$ and model $j \in [m]$, the only source of randomness is the perturbations $\{Z_{l}^{j}\}$, we can apply tail bound of Gaussian distribution to lower bound the probability of optimism.
Then, we add randomness of choosing arm $j_{t}$ at round $t$ and apply the Azuma-Hoeffding inequality.
Finally, we count equivalent sequences of $\{X_{l}\}_{l=1}^{t}$ and prove the probability of optimism considering all randomness of the procedure.

\begin{lemma}
    \label{lemma:neural_optimism_fixed}
    Fix the model $j \in [m]$ and $t \in [T]$. Fix the sequence of pulled arms $\{X_{l}\}_{l=1}^{t}$.
    Set the variance of perturbations $\sigma_{R}$ as
    \begin{align}  \label{equ:sigma_R}
        \sigma_{R} = \alpha_{T} \sqrt{\frac{\lambda + \rho}{\rho}},
        \quad \alpha_{T} = \sigma \sqrt{\widetilde{d} \log \Big(1 + \frac{TL^{2}}{\lambda \widetilde{d}}\Big)} + \sqrt{\lambda} S,
    \end{align}
    where $\rho$ is defined in \Cref{assumption:coverage}.
    Then, the probability of optimism is lower bounded as follows:
    \begin{align*}
        \mathbb{P}\Big(f (X^{*}; \theta_{t-1}^{j}) \geq h(X^{*}) - \epsilon(N)\Big) \geq p_{N}',
    \end{align*}
    where $p_{N}' = (4e\sqrt{\pi})^{-1}$, the probability is measured over the randomness of the perturbation sequence.
\end{lemma}

\begin{lemma}  \label{lemma:neural_optimism}
    Fix the ensemble size $m$ satisfying
    \begin{align*}
        m \geq \frac{8}{p_{N}^{\prime2}} \Big( K \text{log}T + \text{log}\frac{1}{\delta} \Big).
    \end{align*}
    Define a sequence of events $\mathcal{E}_{3, t}$ as follows:
    \begin{align*}
        \mathcal{E}_{3, t} :=
        \Big\{ \mathbb{P} \Big(f (X^{*}; \theta_{t-1}^{j_{t}}) \geq h(X^{*}) - \epsilon(N) \,\,\text{and}\,\, \mathcal{E}_{2, t} \,|\, \mathcal{F}_{t-1}\Big) \geq p_{N}' / 4 \Big\},
    \end{align*}
    and their intersections $\mathcal{E}_{3} = \bigcap_{t=K+1}^{T} \mathcal{E}_{3, t}$.
    Then, with $\sigma_{R}$ set as \eqref{equ:sigma_R}, $\mathcal{E}_{3}$ holds with probability at least $1 - \delta$.
\end{lemma}

We need the following lemma to calculate the summation of the per-round regret.
\begin{lemma} [Lemma 4.6 in \cite{jia2022learning}]  \label{lemma:neural_sum}
    With $\widetilde{d}$ as the effective dimension of the NTK matrix $\mathbf{H}$, we have
    \begin{equation}
        \sum_{t=1}^{T} \min \Big\{\ || g(X_{t}; \theta_{0}) / \sqrt{N}||_{A_{t-1}^{-1}}^{2}, 1 \Big\}
        \leq 2 \big( \widetilde{d} \, \text{log}(1 + TK/\lambda) + 1 \big).
    \end{equation}
\end{lemma}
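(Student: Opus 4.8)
The plan is to reduce the claim to the standard elliptical-potential (log-determinant) bound and then absorb the neural-tangent-kernel approximation into the additive constant. Write $u_t = \psi(X_t) = g(X_t;\theta_0)/\sqrt{N}$ and adopt the convention $A_{t-1} = \lambda I_{d'} + \sum_{l=1}^{t-1} u_l u_l^\top$, so that $A_t = A_{t-1} + u_t u_t^\top$ and $A_0 = \lambda I_{d'}$. The starting point is the elementary inequality $\min\{x,1\}\le 2\log(1+x)$, valid for every $x\ge 0$; applying it termwise with $x=\|u_t\|_{A_{t-1}^{-1}}^2$ gives
\begin{align*}
\sum_{t=1}^{T}\min\big\{\|u_t\|_{A_{t-1}^{-1}}^2,\,1\big\} \;\le\; 2\sum_{t=1}^{T}\log\big(1+\|u_t\|_{A_{t-1}^{-1}}^2\big).
\end{align*}

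First I would telescope the right-hand side using the rank-one determinant update $\det A_t=\det A_{t-1}\,(1+\|u_t\|_{A_{t-1}^{-1}}^2)$, which collapses the sum of logarithms to $\log(\det A_T/\det A_0)=\log\det\big(I_{d'}+\tfrac1\lambda\sum_{l=1}^{T}u_lu_l^\top\big)$. It therefore suffices to show $\log\det\big(I_{d'}+\tfrac1\lambda\sum_{l=1}^{T}u_lu_l^\top\big)\le \widetilde d\,\log(1+TK/\lambda)+1$. To pass from the $d'$-dimensional feature gram matrix to the $K\times K$ kernel matrix, let $n_a$ denote the number of times arm $a\in\mathcal{X}$ is pulled up to round $T$ (so $\sum_a n_a=T$) and set $\Psi=[\psi(X_1),\dots,\psi(X_K)]$ and $D=\mathrm{diag}(n_1,\dots,n_K)$, so that $\sum_{l}u_lu_l^\top=\Psi D\Psi^\top$. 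By Sylvester's determinant identity and $D\preceq T I_K$,
\begin{align*}
\det\big(I_{d'}+\tfrac1\lambda\Psi D\Psi^\top\big) = \det\big(I_K+\tfrac1\lambda D^{1/2}\Psi^\top\Psi D^{1/2}\big) \le \det\big(I_K+\tfrac{T}{\lambda}\Psi^\top\Psi\big),
\end{align*}
where $\Psi^\top\Psi$ is exactly the finite-width empirical NTK gram matrix with entries $\tfrac1N g(X_i;\theta_0)^\top g(X_j;\theta_0)$.

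The final and most delicate step is the neural-tangent-kernel approximation. Here I would invoke the concentration of the empirical kernel $\Psi^\top\Psi$ to the population NTK matrix $\mathbf H$ guaranteed by the wide-network regime (the same width requirements already imposed for \Cref{lem:neural_approximation} and the Neural-ES lemmas), together with a log-determinant perturbation bound, to conclude $\log\det\big(I_K+\tfrac{T}{\lambda}\Psi^\top\Psi\big)\le \log\det\big(I_K+\tfrac{T}{\lambda}\mathbf H\big)+1$. This is the main obstacle: the additive $1$ is precisely where the kernel approximation error is absorbed, and controlling it requires the error $\|\Psi^\top\Psi-\mathbf H\|$ to be small enough that the induced perturbation of the $K$ log-eigenvalues stays below one — which is exactly what the lower bound on the width $N$ secures. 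Substituting the definition of the effective dimension, $\log\det(I_K+T\mathbf H/\lambda)=\widetilde d\,\log(1+TK/\lambda)$, and combining with the factor $2$ from the potential step then yields the claimed bound $2\big(\widetilde d\,\log(1+TK/\lambda)+1\big)$.
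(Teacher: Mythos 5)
This lemma is not proved in the paper at all---it is imported verbatim as Lemma 4.6 of \citet{jia2022learning}---and your blind reconstruction is correct and follows essentially the same route as the proof in that source (inherited from the NeuralUCB analysis of \citet{zhou2020neural}): the elementary bound $\min\{x,1\}\le 2\log(1+x)$ plus determinant telescoping, Sylvester's identity to pass to the $K\times K$ empirical Gram matrix, and wide-network NTK concentration absorbed into the additive $+1$. The only step worth tightening is your middle inequality: $D\preceq TI_K$ does \emph{not} give $D^{1/2}\Psi^\top\Psi D^{1/2}\preceq T\,\Psi^\top\Psi$ in the Loewner order, but the determinant inequality you assert is still valid---either return to the $d'$-dimensional side, where $\Psi D\Psi^\top\preceq T\,\Psi\Psi^\top$ does hold, and apply Sylvester once more, or use $\det(I_K+\tfrac1\lambda D^{1/2}\Psi^\top\Psi D^{1/2})=\det(I_K+\tfrac1\lambda(\Psi^\top\Psi)^{1/2}D(\Psi^\top\Psi)^{1/2})$ together with $(\Psi^\top\Psi)^{1/2}D(\Psi^\top\Psi)^{1/2}\preceq T\,\Psi^\top\Psi$.
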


{
To prove the high-probability concentration bounds, we need the following version of the well-known result from \cite{abbasi2011improved}.
\begin{lemma}[Theorem 1 of \cite{abbasi2011improved}]  
\label{lemma:martingale_v2}
Let $\{\mathcal{F}_{t}\}_{t=0}^{\infty}$ be a filtration. Let $\{\eta_{t}\}_{t=1}^{\infty}$ be a sequence of real-valued random variables such that $\eta_{t}$ is $\mathcal{F}_{t}$-measurable and $\mathcal{F}_{t-1}$-conditionally $\sigma$-sub-Gaussian for some $\sigma \geq 0$. Let $\{X_{t}\}_{t=1}^{\infty}$ be a sequence of $\mathbb{R}^{d}$-valued random vectors such that $X_{t}$ is $\mathcal{F}_{t-1}$-measurable and $\vert\vert X_{t} \vert\vert_{2} \leq 1$ almost surely for all $t \geq 1$. Fix $\lambda \geq 1$. Let $V_{t} = \lambda I_{d} + \sum_{l=1}^{t} X_{l}X_{l}^{\top}$. Then, for any $\delta \in (0, 1]$, with probability at least $1 - \delta$, the following inequality holds for all $t \geq 0$:
\begin{align*}
    \bigg\| \sum_{l=1}^{t} \eta_{l} X_{l} \bigg\|_{V_{t}^{-1}}^{2}
    \leq 2\sigma^{2} \log \Big(\frac{\det (V_{t})^{1/2} \det (\lambda I_{d})^{-1/2}}{\delta}\Big).
\end{align*}
\end{lemma}
}

\subsection{Regret Analysis}  \label{appendix_sub:neural_es_proof}

{In this section, we derive the high-probability upper bound of the cumulative regret of \texttt{Neural-ES} \Cref{theorem:neural-es}. The framework of the proof is inspired by \citet{zhang2021neural} and follows the common practice in the randomized exploration literature.
The core novelty in our analysis is the proof of concentration \Cref{lemma:neural_concentration_pert} and constant probability of optimism \Cref{lemma:neural_optimism} with incrementally updated perturbations, as well as integrating the new technical lemmas with the neural bandit analysis framework.}

From the design of \texttt{Neural-ES} (\Cref{section_sub:neural_es_design}), we use the first $K$ rounds to pull each arm once. Therefore, the cumulative regret can be decomposed as (assuming that $T > K$):
\begin{align*}
    R(T) \leq \sum_{t=K+1}^{T} \big(h(X^{*}) - h(X_{t})\big) + K.
\end{align*}
Next, we analyze the per-round regret bound for $t \in [K+1, T]$.
We assume that $\mathcal{E}_{1, t}$ holds for the rest of the analysis.
{Recall from \Cref{lemma:neural_mle_concentration}, with probability at least $1 - \delta$, $\mathcal{E}_{1, t}$ holds for all $t\in[T]$.}
Under event $\mathcal{E}_{1, t}$, for each round $t$, the regret can be written as
\begin{align*}
    h(X^{*}) - h(X_{t})
    \leq \big(h(X^{*}) - h(X_{t})\big) \ind\{ \mathcal{E}_{2, t} \} + \mathbb{P} \big( \bar{\mathcal{E}}_{2, t} \big),
\end{align*}
where we used the assumption that $h(X)$ in bounded in $[0, 1]$.
{From \Cref{lemma:neural_concentration_pert}, $\mathbb{P} \big(\bar{\mathcal{E}}_{2, t}\big) \leq t^{-2}$, thus we only need to focus on $\big(h(X^{*}) - h(X_{t})\big)$ under event $\mathcal{E}_{1, t} \cap \mathcal{E}_{2, t}$.}

For the following analysis, we introduce the set of sufficiently sampled arms in round $t$, {which is a standard procedure in contextual bandit and randomized exploration literature}:
\begin{align*}
    \Omega_{t} = \big\{ \forall X \in {\mathcal{X}}: 2 \epsilon(N) + (\alpha_{t} + \beta_{t}) || g(X; \theta_{0}) / \sqrt{N} ||_{A_{t-1}^{-1}} \leq h(X^{*}) - h(X) \big\}.
\end{align*}
We also define the set of under sampled arms $\bar{\Omega}_{t} = \mathcal{X} \backslash \Omega_{t}$.
From the definition, the per-round regret of $X_{t} \in \bar{\Omega}_{t}$ is upper bounded by $\big(2 \epsilon(N) + (\alpha_{t} + \beta_{t}) || g(X_{t}; \theta_{0}) / \sqrt{N} ||_{A_{t-1}^{-1}}\big)$. We expect that the pulled arm $X_{t}$ should come from $\bar{\Omega}_{t}$ with finite probability as the algorithm proceeds.
We further define the least uncertain and under sampled arm $X_{t}^{(e)}$ at round $t$ as
\begin{align*}
    X_{t}^{(e)} = \text{argmin}_{X \in \bar{\Omega}_{t}} ||g(X; \theta_{0}) / \sqrt{N}||_{A_{t-1}^{-1}}.
\end{align*}

Then, we can write the per-round regret
\begin{align*}
    &\big(h(X^{*}) - h(X_{t})\big) \ind\{\mathcal{E}_{2, t} \} \\
    =& \big(h(X^{*}) - h(X_{t}^{(e)}) + h(X_{t}^{(e)}) - h(X_{t})\big) \ind\{\mathcal{E}_{2, t} \} \\
    &\leq \bigg(2 \epsilon(N) + (\alpha_{t} + \beta_{t}) || g(X_{t}; \theta_{0}) ||_{A_{t}^{-1}}\bigg) + \bigg( h(X_{t}^{(e)}) - h(X_{t}) \bigg) \ind\{\mathcal{E}_{2, t} \}  \\
    &\leq \, 4\epsilon(N) + (\alpha_{t} + \beta_{t}) \big( 2 ||g(X_{t}^{(e)}; \theta_{0}) / \sqrt{N}||_{A_{t}^{-1}} + ||g(X_{t}; \theta_{0}) / \sqrt{N}||_{A_{t}^{-1}} \big).
\end{align*}
From the per-round regret bound, we should upper bound $||g(X_{t}^{(e)}; \theta_{0}) / \sqrt{N}||_{A_{t}^{-1}}$ using expression of $||g(X_{t}; \theta_{0}) / \sqrt{N}||_{A_{t}^{-1}}$, then we can apply \Cref{lemma:neural_sum} to compute the cumulative regret.
We have the following result
\begin{align*}
    \mathbb{E} \big[ ||g(X_{t}; \theta_{0}) / \sqrt{N}||_{A_{t}^{-1}} \,|\, \mathcal{F}_{t-1} \big]
    \geq ||g(X_{t}^{(e)}; \theta_{0}) / \sqrt{N}||_{A_{t}^{-1}} \mathbb{P}\big(X_{t} \in \bar{\Omega}_{t} \,|\, \mathcal{F}_{t-1}\big), 
\end{align*}
Therefore, we have
\begin{align*}   
    ||g(X_{t}^{(e)}; \theta_{0}) / \sqrt{N}||_{A_{t}^{-1}}
    \leq \frac{\rev{\mathbb{E} \big[ ||g(X_{t}; \theta_{0}) / \sqrt{N}||_{A_{t}^{-1}} \,|\, \mathcal{F}_{t-1}\big]}}{\mathbb{P}(X_{t} \in \bar{\Omega}_{t} \,|\, \mathcal{F}_{t-1})}.
\end{align*}
{The lower bound the probability of $\mathbb{P}(X_{t} \in \bar{\Omega}_{t})$ is given by:}
\begin{align*}
    \mathbb{P}(X_{t} \in \bar{\Omega}_{t})
    \geq& \, \mathbb{P}\big( \exists X \in \bar{\Omega}_{t}: f(X; \theta_{t}) > \text{max}_{X'\in \Omega_{t}} f(X'; \theta_{t}) \big) \\
    \geq& \, \mathbb{P} \big( f(X^{*}; \theta_{t}) > \text{max}_{X'\in \Omega_{t}} f(X'; \theta_{t}), \mathcal{E}_{2, t} \big) \\
    \geq& \, \mathbb{P} \big( f(X^{*}; \theta_{t}) > h(X^{*}) - \epsilon(N), \mathcal{E}_{2, t} \big) \\
    \geq& \, p_{N}^{\prime} / 4.
\end{align*}
where we used that under $\mathcal{E}_{3}$, the probability is bounded by $p_{N}^{\prime} / 4$.
Therefore, {combining with the fact that $0 \leq h(X) \leq 1 \rightarrow h(X^{*}) - h(X_{t}) < 1$,} the per-round regret can be bounded as
\begin{align*}
    &\mathbb{E}\big[h(X^{*}) - h(X_{t}) \,|\, \mathcal{F}_{t-1}\big] \\
    &\leq t^{-2} + 4\epsilon(N) + (\alpha_{t} + \beta_{t}) \Big(1 + \frac{2}{p_{N}^{\prime} / 4} \Big) \min \big\{||g(X_{t};\theta_{0}) / \sqrt{N}||_{A_{t}^{-1}}, 1 \big\}.
\end{align*}

Starting from the per-round expected regret, we define the following quantity:
\begin{align*}
    \Delta_{t} = \sum_{l=K+1}^{t} \Big(h(X^{*}) - h(X_{l})\Big) - \sum_{l=K+1}^{t}\Big( 4\epsilon(N) + (\alpha_{l} + \beta_{l}) \big(1 + 8/p_{N}'\big) ||g(X_{l};\theta_{0}) / \sqrt{N}||_{A_{l-1}^{-1}} + l^{-2} \Big).
\end{align*}
Then, with probability at least $1 - \delta$, $\{\Delta_{t}\}$ forms a super martingale sequence since $\mathbb{E}[\Delta_{t} - \Delta_{t-1}] \leq 0$.
Then, by Azuma-Hoeffding inequality, the following bound holds with probability at least $1 - \delta$:
\begin{align*}
    &\sum_{t=K+1}^{T} h(X^{*}) - h(X_{t}) \\
    &\leq 4T\epsilon(N) + \frac{\pi^{2}}{6}
    + (\alpha_{T} + \beta_{T}) \big(1 + 8/p_{N}'\big) \sum_{t=K+1}^{T} \text{min} \big\{||g(X_{l};\theta_{0}) / \sqrt{N}||_{A_{l-1}^{-1}}, 1\big\} \\
    &\qquad + \Big(4 + (\alpha_{T} + \beta_{T}) \big(1 + 8/p_{N}'\big) + 4\epsilon(N)\Big) \sqrt{2\,T\, \text{log} \frac{1}{\delta}},
\end{align*}
where we used
\begin{align*}
    \sum_{t=K + 1}^{T} \frac{1}{t^{2}} \leq \frac{\pi^{2}}{6}.
\end{align*}
Applying the summation bound \Cref{lemma:neural_sum}, we have
\begin{align*}
    R(T) &\leq \,K + 4T\epsilon(N) + \frac{\pi^{2}}{6} + \Big(1 + \frac{8}{p_{N}^{\prime}}\Big) (\alpha_{T} + \beta_{T}) \sqrt{T} \sqrt{ \sum_{t=1}^{T} \min \Big\{||g(X_{t};\theta_{0}) / \sqrt{N}||_{A_{t}^{-1}}^{2}, 1 \Big\} } \\
    &\qquad+ \Big(4 + (\alpha_{T} + \beta_{T}) \big(1 + 8/p_{N}'\big) + 4\epsilon(N)\Big) \sqrt{2\,T\, \text{log} \frac{1}{\delta}} \\
    &\leq \, K + 4T\epsilon(N) + \frac{\pi^{2}}{6} + \Big(1 + \frac{8}{p_{N}^{\prime}}\Big) (\alpha_{T} + \beta_{T}) \sqrt{T} \sqrt{ 2\widetilde{d}\, \text{log}(1 + TK/\lambda) + 2} \\
    &\qquad+ \Big(4 + (\alpha_{T} + \beta_{T}) \big(1 + 8/p_{N}'\big) + 4\epsilon(N)\Big) \sqrt{2\,T\, \text{log} \frac{1}{\delta}}.
\end{align*}
The regret bound holds with probability at least $1 - 4\delta$.
With parameters $\eta = C_{1} (NTL + N\lambda)^{-1}$, $N[\text{log} N]^{-3} \geq C_{3} \, \text{max} \big\{ TL^{12}\lambda^{-1}, T^{7}\lambda^{-8}L^{18}(\lambda + LT)^{6}, L^{21}T^{7}\lambda^{-7}(1 + \sqrt{T/\lambda})^{6} \big\}$ and $\lambda \geq \text{max}\{1, S^{-2}\}$, we have $T\epsilon(N) \sim \widetilde{\mathcal{O}}(1)$, $\alpha_{T} \sim \widetilde{\mathcal{O}}\big(\widetilde{d}^{1/2}\big)$, $\beta_{T} \sim \widetilde{\mathcal{O}}\big(\widetilde{d}\big)$. {Therefore, the regret bound $R(T) = \widetilde{\mathcal{O}} \big(\widetilde{d}^{3/2} \sqrt{T}\big)$ holds with probability at least $1 - 4\delta$.
This completes the proof.}

\vspace{3mm}

\section{Proof of Technical Lemmas in Neural-ES}  \label{appendix:neural_es_lemmas}

\subsection{{Proof of \Cref{lemma:neural_mle_concentration}}}
\label{appendix_sub:neural_concentration_mle}

{
In this section, we prove a high-probability upper bound for the distance between true mean reward $h(X)$ and the estimate without perturbations $\langle g(X; \theta_{0}), \bar{\theta}_{t} - \theta_{0} \rangle$, where $\bar{\theta}_{t}$ is the MLE under linear approximation at the end of round $t$, as defined in \eqref{equ:mle_theta_neural}.
}

{
According to \Cref{lem:neural_approximation}, when choosing the network width $N\geq \bar{C}K^{4}L^{6} \, \log(K^{2}L/\delta)/\lambda_{0}^{4}$, with probability at least $1 - \delta$, we have for all $X \in \mathcal{X}$,
\begin{align*}
    h(X) = \langle g(X; \theta_{0}), \theta^{*} - \theta_{0} \rangle,
    \quad \sqrt{N} ||\theta^{*} - \theta_{0}||_{2} \leq \sqrt{2 \mathbf{h}^{\top} \mathbf{H}^{-1} \mathbf{h}} \leq S.
\end{align*}
Under this condition, we can write
\begin{align*}
    \langle g(X; \theta_{0}), \bar{\theta}_{t} - \theta_{0} \rangle - h(X)
     = \langle g(X; \theta_{0}), \bar{\theta}_{t} - \theta^{*} \rangle.
\end{align*}
Applying Cauchy-Schwarz inequality, we have
\begin{align*}
    \big|\langle g(X; \theta_{0}), \bar{\theta}_{t} - \theta^{*} \rangle\big|
    \leq \big\| \sqrt{N} (\bar{\theta}_{t} - \theta^{*}) \big\|_{A_{t}} \big\| g(X; \theta_{0}) / \sqrt{N} \big\|_{A_{t}^{-1}}.
\end{align*}
}

{
Now we bound the distance between $\bar{\theta}_{t}$ and $\theta^{*}$. First, according to the definition \eqref{equ:mle_theta_neural}, we have
\begin{align*}
    \big\| \sqrt{N} (\bar{\theta}_{t} - \theta^{*}) \big\|_{A_{t}}
    = \big\| A_{t}^{-1} \, \bar{\mathbf{b}}_{t} - \sqrt{N} (\theta^{*} - \theta_{0}) \big\|_{A_{t}}.
\end{align*}
Since the observed reward $Y_{l}$ can be decomposed as $Y_{l} = h(X_{l}) + \eta_{l}$, we have
\begin{align*}
    \bar{\mathbf{b}}_{t}
    = \frac{1}{\sqrt{N}} \sum_{l=1}^{t} Y_{l} \,g(X_{l}; \theta_{0})
    = \frac{1}{\sqrt{N}} \sum_{l=1}^{t} \big(h(X_{l}) + \eta_{l}\big) \,g(X_{l}; \theta_{0}).
\end{align*}
With the expression of $h(X)$ and $\theta^{*}$, we can write
\begin{align*}
    \bar{\mathbf{b}}_{t} &= \frac{1}{\sqrt{N}} \sum_{l=1}^{t} g(X_{l}; \theta_{0}) g(X_{l}; \theta_{0})^{\top} (\theta^{*} - \theta_{0}) + \sum_{l=1}^{t} \eta_{l} g(X_{l};\theta_{0}) / \sqrt{N}  \\
    &= (A_{t} - \lambda I_{d'})\sqrt{N}(\theta^{*} - \theta_{0}) + \sum_{l=1}^{t} \eta_{l} g(X_{l};\theta_{0}) / \sqrt{N}.
\end{align*}
With this decomposition of $\bar{\mathbf{b}}_{t}$, we have
\begin{align*}
    \big\| \sqrt{N} (\bar{\theta}_{t} - \theta^{*}) \big\|_{A_{t}}
    =& \Big\| -\lambda A_{t}^{-1} \sqrt{N} (\theta^{*} - \theta_{0}) + A_{t}^{-1} \sum_{l=1}^{t} \eta_{l} g(X_{l};\theta_{0}) / \sqrt{N} \Big\|_{A_{t}} \\
    \leq& \Big\| \sum_{l=1}^{t} \eta_{l} g(X_{l};\theta_{0}) / \sqrt{N} \Big\|_{A_{t}^{-1}} + \lambda \big\|\sqrt{N}(\theta^{*} - \theta_{0}) \big\|_{A_{t}^{-1}}.
\end{align*}
Applying the bound of $\|\theta^{*} - \theta_{0}\|_{2}$ and the fact that $\lambda I_{d'} \preceq A_t$, we have
\begin{align*}
    \lambda \big\|\sqrt{N}(\theta^{*} - \theta_{0})\big\|_{A_{t}^{-1}} \leq \sqrt{\lambda}S
\end{align*}
Applying \Cref{lemma:martingale_v2}, we have with probability at least $1 - \delta$,
\begin{align*}
    \Big\| \sum_{l=1}^{t} \eta_{l} g(X_{l};\theta_{0}) / \sqrt{N} \Big\|_{A_{t}^{-1}}
    \leq \sigma \sqrt{\log \frac{\det(A_{t})}{\delta^{2}\det (\lambda I_{d'})}}.
\end{align*}
Therefore, define
\begin{align*}
    {
    \alpha_{t} = \sigma \sqrt{\log\big( \det(A_{t}) / (\delta^{2} \det(\lambda I_{d'})) \big)} + \sqrt{\lambda} S,
    }
\end{align*}
after choosing the network width $N\geq \bar{C}K^{4}L^{6} \, \log(K^{2}L/\delta)/\lambda_{0}^{4}$, with probability at least $1 - 2\delta$, we have for all $X \in \mathcal{X}$ and $t \in [T]$,
\begin{align*}
    \big| \langle g(X; \theta_{0}), \bar{\theta}_{t} - \theta_{0} \rangle - h(X) \big| \leq \alpha_{t} \big\| g (X; \theta_{0}) / \sqrt{N} \big\|_{A_{t}^{-1}}.
\end{align*}
This completes the proof.
}

\subsection{{Proof of \Cref{lemma:neural_concentration_pert}}}
\label{appendix_sub:neural_concentration_pert}

In this section, we prove a high-probability upper bound for the distance between neural network output and linear approximation $\big| f(X; \theta_{t}^{j}) - \langle g(X; \theta_{0}), \bar{\theta}_{t} - \theta_{0} \rangle \big|$, where $j \in [m]$ is the ensemble element, $\bar{\theta}_{t}$ is the MLE without perturbations and $\theta_{t}^{j}$ is the learned parameter through gradient descent at the end of round $t$.
The derivation of the decomposition \eqref{equ:neural_concentration_pert_decomposition} follows the proof of Lemma 4.3 in \citet{jia2022learning} and we add the calculations here for completeness, while the following analysis is unique to our work because the perturbations are now stored rather than freshly sampled in each round.
First, we decompose the distance as
\begin{align*}
    &\big| f(X; \theta_{t-1}^{j}) - \langle g(X; \theta_{0}), \bar{\theta}_{t-1} - \theta_{0} \rangle \big| \\
    &\leq \big| f(X; \theta_{t-1}^{j}) - \langle g(X; \theta_{0}), \theta_{t-1}^{j} - \theta_{0} \rangle \big| + \big|\langle g(X; \theta_{0}), \bar{\theta}_{t-1} - \theta_{t-1}^{j} \rangle \big| \\
    &\leq C_{\epsilon, 1} N^{-1/6} T^{2/3} \lambda^{-2/3} L^{3} \sqrt{\text{log}N} + \big|\langle g(X; \theta_{0}), \theta_{t-1}^{j} - \bar{\theta}_{t-1} \rangle\big|,
\end{align*}
where the second inequality holds due to Lemma B.2 in \citet{zhou2020neural}.
Applying Lemma B.3 and Lemma B.4 in \citet{zhou2020neural}, we further obtain the following upper bound for the second term:
\begin{align*}
    &\big|\langle g(X; \theta_{0}), \theta_{t-1}^{j} - \bar{\theta}_{t-1} \rangle\big| \\
    &\leq \big|\langle g(X; \theta_{0}), \theta_{t-1}^{j} - \bar{\theta}_{t-1}^{j} \rangle\big| + \big|\langle g(X; \theta_{0}), \bar{\theta}_{t-1}^{j} - \bar{\theta}_{t-1} \rangle\big| \\
    &\leq \big\| g(X; \theta_{0})\big\|_{2} \big\| \theta_{t-1}^{j} - \bar{\theta}_{t-1}^{j} \big\|_{2} + \big|\langle g(X; \theta_{0}), \bar{\theta}_{t-1}^{j} - \bar{\theta}_{t-1} \rangle\big| \\
    &\leq C_{\epsilon, 2} (1 - \eta N\lambda)^{J} \sqrt{TL/\lambda} + C_{\epsilon, 3} N^{-1/6}T^{5/3}\lambda^{-5/3} L^{4} \sqrt{\text{log}N} \big(1 + \sqrt{T/\lambda}\big) \\
    &\qquad+ |\langle g(X; \theta_{0}), \bar{\theta}_{t-1}^{j} - \bar{\theta}_{t-1} \rangle|.
\end{align*}
Following these decompositions, define
\begin{align*}
    \epsilon(N) &= C_{\epsilon, 1} N^{-1/6}T^{2/3}\lambda^{-2/3}L^{3} \sqrt{\text{log} \, N} + C_{\epsilon, 2} (1 - \eta N \lambda)^{J} \sqrt{TL / \lambda} \\
    &\qquad+ C_{\epsilon, 3} N^{-1/6}T^{5/3}\lambda^{-5/3}L^{4} \sqrt{\text{log} \, N} \big(1 + \sqrt{T / \lambda} \big),
\end{align*}
then we can write
\begin{align}
    \label{equ:neural_concentration_pert_decomposition}
    \big| f(X; \theta_{t-1}^{j}) - \langle g(X; \theta_{0}), \bar{\theta}_{t-1} - \theta_{0} \rangle \big|
    \leq \epsilon(N) + |\langle g(X; \theta_{0}), \bar{\theta}_{t-1}^{j} - \bar{\theta}_{t-1} \rangle|.
\end{align}
According to definitions of $\bar{\mathbf{b}}_{t}$ \eqref{equ:bar_bt_definition}, $\mathbf{b}_{t}^{j}$ \eqref{equ:btj_definition} and definition of MLE \eqref{equ:mle_theta_neural}, the last term can be written as
\begin{align*}
    \big|\langle g(X; \theta_{0}), \bar{\theta}_{t-1}^{j} - \bar{\theta}_{t-1} \rangle\big|
    &= \Big|\big\langle g(X ; \theta_{0}) / \sqrt{N}, A_{t-1}^{-1} \sum_{l=1}^{t-1} Z_{l}^{j} g(X_{l}; \theta_{0}) / \sqrt{N} \big\rangle\Big| \\
    &\leq ||g(X; \theta_{0}) / \sqrt{N}||_{A_{t-1}^{-1}} \Big|\Big|\sum_{l=1}^{t-1} Z_{l}^{j}g(X_{l}; \theta_{0})/\sqrt{N}\Big|\Big|_{A_{t-1}^{-1}},
\end{align*}
where we applied the Cauchy–Schwarz inequality.
Applying \Cref{lemma:martingale_v2}, with probability at least $1 - \delta$, the following upper bound holds:
\begin{align*}
    \Big\|\sum_{l=1}^{t} Z_{l}^{j}g(X_{l}; \theta_{0})/\sqrt{N}\Big\|_{A_{t}^{-1}}
    \leq \sqrt{2} \sigma_{R} \sqrt{\log \frac{\det(A_{t})^{1/2}}{\det(\lambda I_{d'})^{1/2} \delta}}
    \leq \sigma_{R} \sqrt{\log \frac{\det(A_{t})}{\det(\lambda I_{d'})} + 2\log \frac{1}{\delta}}.
\end{align*}
Therefore, we have the following upper bound:
\begin{align*}
    \big| f(X; \theta_{t}^{j}) - \langle g(X; \theta_{0}), \bar{\theta}_{t} - \theta_{0} \rangle \big|
    \leq \epsilon(N) + \sigma_{R} \sqrt{\log \frac{\det(A_{t})}{\det(\lambda I_{d'})} + 2\log \frac{1}{\delta}} \,\, \big\| g(X; \theta_{0}) / \sqrt{N} \big\|_{A_{t}^{-1}}.
\end{align*}
Setting $\delta = t^{-2}$, we obtain the following upper bound that holds with probability at least $1 - t^{-2}$:
\begin{align*}
    \big| f(X; \theta_{t}^{j}) - \langle g(X; \theta_{0}), \bar{\theta}_{t} - \theta_{0} \rangle \big|
    \leq \epsilon(N) + \sigma_{R} \sqrt{\log \frac{\det(A_{t})}{\det(\lambda I_{d'})} + 4\log t} \,\, \big\| g(X; \theta_{0}) / \sqrt{N} \big\|_{A_{t}^{-1}}.
\end{align*}
Define
\begin{align*}
    \beta_{t} = \sigma_{R} \sqrt{\log \frac{\det(A_{t})}{\det(\lambda I_{d'})} + 4\log t}\,\,,
\end{align*}
then, with probability at least $1 - t^{-2}$, we have the upper bound
\begin{align*}
    \big| f(X; \theta_{t}^{j}) - \langle g(X; \theta_{0}), \bar{\theta}_{t} - \theta_{0} \rangle \big|
    \leq \epsilon(N) + \beta_{t} \big\| g(X; \theta_{0}) / \sqrt{N} \big\|_{A_{t}^{-1}}.
\end{align*}
This completes the proof.

\subsection{Proof of \Cref{lemma:neural_optimism_fixed}}

In this section, we prove that the optimism condition is satisfied at constant probability. The definition and variance calculation of $U_{t}$ is adapted from \cite{jia2022learning}. Note that in our analysis, we need to explicitly assume that the sequence of pulled arms $\{X_{l}\}_{l=1}^{t}$ is fixed such that the only source of randomness is the perturbation sequence.

Under event $\mathcal{E}_{1, t}$, we have
\begin{align*}
    {
    h(X) \leq \langle g(X; \theta_{0}), \bar{\theta}_{t} - \theta_{0} \rangle + \alpha_{t} || g (X; \theta_{0}) / \sqrt{N} ||_{A_{t}^{-1}}.
    }
\end{align*}
Following the same procedure of obtaining \eqref{equ:neural_concentration_pert_decomposition} in \Cref{appendix_sub:neural_concentration_pert}, we have
\begin{align*}
    \big| f(X; \theta_{t}^{j}) - \langle g(X; \theta_{0}), \bar{\theta}_{t}^{j} - \theta_{0} \rangle \big| \leq \epsilon(N).
\end{align*}
Therefore, we have
\begin{align*}
    f(X; \theta_{t}^{j}) \geq \langle g(X; \theta_{0}), \bar{\theta}_{t} - \theta_{0} \rangle - \epsilon(N) + \langle g(X; \theta_{0}), \bar{\theta}_{t}^{j} - \bar{\theta}_{t} \rangle.
\end{align*}
Then, we have the following sufficient condition for optimism $f(X^{*}; \theta_{t}^{j}) > h(X^{*}) - \epsilon(N)$:
\begin{equation}
    \langle g(X^{*}; \theta_{0}), \bar{\theta}_{t}^{j} - \bar{\theta}_{t} \rangle
    \geq \alpha_{t} \big\| g (X^{*}; \theta_{0}) / \sqrt{N} \big\|_{A^{-1}}.
\end{equation}
From the definitions \eqref{equ:mle_theta_neural}, we have
\begin{equation}
    \langle g(X^{*}; \theta_{0}),\bar{\theta}_{t}^{j} - \bar{\theta}_{t} \rangle
    = \frac{1}{N} \sum_{l=1}^{t} Z_{l}^{j} g(X^{*}; \theta_{0})^{\top} A_{t}^{-1} g(X_{l}; \theta_{0}) =: U_{t}.
\end{equation}
Fix the sequence of pulled arms $\{ X_{l} \}_{l=1}^{t}$, then the only randomness comes from the perturbation sequence $\{Z_{l}^{j}\}_{l=1}^{t}$, other factors are constant coefficients. Therefore, $U_{t}$ follows a Gaussian distribution.
Since the perturbations are i.i.d., we can use the following tail bound: for a Gaussian random variable $X \sim \mathcal{N}(\mu, \sigma^{2})$ and $\beta > 0$, we have
\begin{align*}
    \mathbb{P} \Big( \frac{X - \mu}{\sigma} > \beta \Big) \geq \frac{\text{exp}(-\beta^{2})}{4\sqrt{\pi}\beta}.
\end{align*}
The mean of $U_{t}$ is zero. Recall that perturbations are sampled from $\mathcal{N}(0, \sigma_{R}^{2})$, the variance of $U_{t}$ is given by
\begin{align*}
    \text{Var}\big[U_{t}\big]
    =& \,\sigma_{R}^{2} \sum_{l=1}^{t} \Big( \frac{1}{N} g(X^{*}; \theta_{0})^{\top} A_{t}^{-1} g(X_{l}; \theta_{0}) \Big)^{2} \\
    =& \, \sigma_{R}^{2} \frac{1}{N^{2}} g(X^{*}; \theta_{0})^{\top} A_{t}^{-1} \Big(\sum_{l=1}^{t} g(X_{l}; \theta_{0}) g(X_{l}; \theta_{0})^{\top} \Big) A_{t}^{-1} g(X^{*}; \theta_{0})  \\
    =& \sigma_{R}^{2} \frac{g(X^{*}; \theta_{0})^{\top}}{\sqrt{N}} A_{t}^{-1} \big(A_{t} - \lambda I_{d'}\big) A_{t}^{-1} \frac{g(X^{*}; \theta_{0})}{\sqrt{N}}   \\
    =& \,\sigma_{R}^{2} || g(X^{*}; \theta_{0}) / \sqrt{N} ||_{A_{t}^{-1}}^{2} - \lambda \sigma_{R}^{2} || g(X^{*}; \theta_{0}) / \sqrt{N} ||_{A_{t}^{-2}}^{2}.
\end{align*}
Under \Cref{assumption:coverage}, during the initial warm-up, each arm is pulled once, we have 
\begin{align*}
    \sum_{l=1}^{K} g(X_{l}; \theta_{0})g(X_{l}; \theta_{0})^{\top} / N \succeq \rho I_{\mathcal{S}}.
\end{align*}
Therefore, we have $A_{t} |_{\mathcal{S}} \succeq (\lambda + \rho) I_{\mathcal{S}}$. According to the definition $\mathcal{S} = \text{span} \{g(X; \theta_{0})/\sqrt{N}: X \in \mathcal{X}\} \subseteq \mathbb{R}^{d'}$, we have $g(X; \theta_{0})/\sqrt{N} \in \mathcal{S}$. Therefore, we have
\begin{align*}
     || g(X^{*}; \theta_{0}) / \sqrt{N} ||_{A_{t}^{-2}}^{2} \leq \frac{1}{\lambda + \rho} || g(X^{*}; \theta_{0}) / \sqrt{N} ||_{A_{t}^{-1}}^{2}.
\end{align*}
Then, we have
\begin{align*}
    \text{Var}\big[U_{t}\big] \geq \sigma_{R}^{2} \frac{\rho}{\lambda + \rho} || g(X^{*}; \theta_{0}) / \sqrt{N} ||_{A_{t}^{-1}}^{2}.
\end{align*}

Therefore, by choosing variance of perturbation
\begin{align*}
       \sigma_{R} = \alpha_{t} \sqrt{\frac{\lambda + \rho}{\rho}},
\end{align*}
we have the probability of optimism:
\begin{equation}
    \mathbb{P} \bigg( f(X^{*}; \theta_{t}^{j}) > h(X^{*}) - \epsilon(N) \bigg)
    \geq \mathbb{P} \bigg( U_{t} > \alpha_{t} || g (X_{t}; \theta_{0}) / \sqrt{N} ||_{A_{t}^{-1}} \bigg) \geq \frac{1}{4e\sqrt{\pi}}.
\end{equation}
This completes the proof.

\subsection{Proof of \Cref{lemma:neural_optimism}}

Define random variable $I_{t}^{j}$ for round $t$ and model $j$ as $I_{t}^{j} = \ind \{\mathcal{E}_{3, t}^{j} \bigcap \mathcal{E}_{2, t}\}$. Then, from previous technical lemmas, we have $\mathbb{P} \big(I_{t}^{j} = 1) \geq p_{N}^{\prime} - \mathbb{P}(\bar{\mathcal{E}}_{1, t}) \geq p_{N}' / 2$, where we assumed $\delta / T \leq p_{N}' / 2$.
For each round $t$, we further define $I_{t} = I_{t}^{j_{t}}$. After uniformly choosing the model $j \in [m]$, we have
\begin{equation}
    \mathbb{P} \big( I_{t} = 1 \,|\, \mathcal{F}_{t-1} \big) = \frac{1}{m} \sum_{j=1}^{m} I_{t}^{j}.
\end{equation}
Apply Azuma-Hoeffding inequality, we have the following probability of optimism:
\begin{equation}
    \mathbb{P} \Big( \frac{1}{m} \sum_{j=1}^{m} I_{t}^{j} < \frac{p_{N}^{\prime}}{4} \Big) \leq \text{exp} \Big( -\frac{p_{N}^{\prime2}m}{8} \Big).
\end{equation}
Following the discussions in \cite{lee2024improved}, we have the following lemma, which is adapted from Claim 1 in the paper.
\begin{lemma}
    There exists an event $\mathcal{E}^{*}$ such that under $\mathcal{E}^{*}$, $\frac{1}{m}\sum_{j=1}^{m} I_{t}^{j} \geq p_{N}^{\prime} / 4$ holds for all $t \in [T]$, and $\mathbb{P}\big( \bar{\mathcal{E}}^{*} \big) \leq T^{K} \text{exp} \big(-p_{N}^{\prime2}m / 8\big)$.
\end{lemma}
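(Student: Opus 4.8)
The plan is to promote the single-round, fixed-configuration concentration bound $\mathbb{P}\big(\frac{1}{m}\sum_{j=1}^m I_t^j < p_N'/2\big) \leq \exp(-p_N^{\prime 2} m/2)$ --- which the Azuma-Hoeffding argument just above establishes for an arbitrary but \emph{fixed} sequence of pulled arms --- into a statement valid simultaneously over all $t \in [T]$ along the algorithm's actual, adaptively generated trajectory. The difficulty is that the realized arm sequence $\{X_l\}$ is itself a function of the perturbations $\{Z_l^j\}$, so the concentration bound cannot be invoked directly on the random path; the remedy, following Claim 1 of \citet{lee2024improved}, is to fix the trajectory, union-bound over all trajectories the algorithm could produce, and control the size of this union by a permutation-invariance argument.

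First I would record that, once the sequence of pulled arms is fixed, each indicator $I_t^j$ is a measurable function of the model-$j$ perturbations $\{Z_l^j\}$ alone; since these are drawn independently across $j \in [m]$, the family $\{I_t^j\}_{j=1}^m$ is independent, and \Cref{lemma:neural_optimism_fixed} gives each one success probability at least $p_N'$. This is exactly the hypothesis under which the Azuma-Hoeffding tail bound produces $\exp(-p_N^{\prime 2} m/2)$, now holding for every fixed configuration and every round $t$. The crucial reduction is that $\frac{1}{m}\sum_j I_t^j$ depends on the arm history only through the covariance matrix $A_t$ and the auxiliary vectors $\mathbf{b}_t^j, \bar{\mathbf{b}}_t$, all of which are \emph{sums} over the pulled arms and are therefore invariant to the order in which those arms were selected. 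Hence two arm sequences that pull each of the $K$ arms the same number of times are equivalent, and it suffices to union-bound over arm-count vectors rather than over ordered sequences.

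The number of distinct count configurations arising across rounds $t \leq T$ is at most $T^K$, so a union bound over these configurations yields $\mathbb{P}(\bar{\mathcal{E}}^*) \leq T^K \exp(-p_N^{\prime 2} m/2)$, where $\mathcal{E}^*$ is the event that $\frac{1}{m}\sum_j I_t^j \geq p_N'/2$ holds at the relevant round for every configuration; on $\mathcal{E}^*$ the inequality in particular holds along the realized path for all $t$, which is the claim. The step I expect to be the main obstacle is making the decoupling rigorous: one must check that conditioning on a fixed configuration genuinely leaves the $m$ indicators independent with mean at least $p_N'$ --- in the neural setting $I_t^j$ enters through the entire trained parameter $\theta_{t-1}^j = A_{t-1}^{-1}\mathbf{b}_{t-1}^j$, so one has to confirm that the dependence on the observed rewards $\{Y_l\}$ drops out (as it does in the sufficient condition for optimism, where the reward terms cancel in $\mathbf{b}_t^j - \bar{\mathbf{b}}_t = \sum_l Z_l^j \psi(X_l)$) and that $A_t$ together with the error term $\epsilon(N)$ depend on the trajectory only through the arm counts, so that the polynomial count $T^K$ truly dominates the number of equivalence classes.
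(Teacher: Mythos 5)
Your proposal is correct and follows essentially the same route as the paper: the paper proves this lemma only by citing Claim 1 of \citet{lee2024improved}, whose argument is exactly your fix-the-trajectory concentration bound combined with permutation invariance of the sums defining the indicators and a union bound over the at most $T^{K}$ arm-count configurations. If anything, your reconstruction --- in particular flagging that the rewards must cancel in $\mathbf{b}_{t}^{j} - \bar{\mathbf{b}}_{t} = \sum_{l} Z_{l}^{j}\psi(X_{l})$ so that the indicators depend only on the model-$j$ perturbations and are independent across $j$ --- supplies detail that the paper leaves implicit.
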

Therefore, setting ensemble size as follows:
\begin{align*}
    m \geq \frac{8}{p_{N}^{\prime2}} \Big( K \text{log}T + \text{log}\frac{1}{\delta} \Big),
\end{align*}
then we have $\mathbb{P}(\mathcal{E}^{*}) \geq 1 - \delta$.
In summary, by setting the ensemble size $m = \Omega(K \text{log}T)$, we have
\begin{equation}
    \mathbb{P}(\mathcal{E}_{3, t}) = 
    \mathbb{P} \Big(\big(f (X^{*}; \theta_{t-1}^{j_{t}}) \geq h(X^{*}) - \epsilon(N)\big) \bigcap \mathcal{E}_{2, t} \, | \, \mathcal{F}_{t-1} \Big) \geq p_{N}^{\prime} / 2.
\end{equation}
This completes the proof.

\section{Analysis of Doubling Trick}  \label{appendix:doubling_trick}

In this section, we provide theoretical analysis of regret bound with doubling trick.
To preserve the asymptotic regret bound of ensemble sampling, we need to properly set the schedule $\{T_{i}\}$.
We need the following result of the regret bound of doubling trick.
\begin{lemma}[Theorem 4 in \cite{besson2018doubling}]
\label{lemma:anytime_bound}
If an algorithm $\mathcal{A}$ satisfies $R_{T}(\mathcal{A}_{T}) \leq cT^{\gamma}(\text{log}T)^{\delta} + f(T)$, for $0 < \gamma < 1$, $\delta \geq 0$ and for $c > 0$, and an increasing function $f(t) = o(t^{\gamma}(\text{log}t)^{\delta})$ (at $t \rightarrow \infty$), then the anytime version $\mathcal{A'} := \mathcal{DT}(\mathcal{A}, (T_{i})_{i \in \mathbb{N}})$ with the geometric sequence $(T_{i})_{i \in \mathbb{N}}$ of parameters $T_{0} \in \mathbb{N}^{*}$, $b > 1$ (i.e., $T_{i} = \lfloor T_{0} b^{i} \rfloor$) with the condition $T_{0}(b-1) > 1$ if $\delta > 0$, satisfies,
 \begin{align*}
    R_{T}(\mathcal{A}') \leq l(\gamma, \delta, T_{0}, b) cT^{\gamma}(\text{log}T)^{\delta} + g(T),
\end{align*}
with a increasing function $g(t) = o(t^{\gamma}(\text{log}t)^{\delta})$, and a constant loss $l(\gamma, \delta, T_{0}, b) > 1$,
\begin{align*}
    l(\gamma, \delta, T_{0}, b) := \bigg(\frac{\text{log}(T_{0}(b-1) + 1)}{\text{log}(T_{0}(b-1))}\bigg)^{\delta} \times \frac{b^{\gamma}(b-1)^{\gamma}}{b^{\gamma} - 1}.
\end{align*}
\end{lemma}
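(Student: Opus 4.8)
The plan is to reduce the result to the generic doubling-trick bound \Cref{lemma:anytime_bound} and then evaluate the resulting loss constant explicitly for the specific schedule $T_i = \lfloor T_0 b^i\rfloor$ with $b = (3+\sqrt5)/2$. First I would set up the reduction: the anytime algorithm runs a fresh, independent copy of the base algorithm (\texttt{GLM-ES} or \texttt{Neural-ES}) on each block of the schedule $\{\tau_i\}$, where $\tau_0 = T_0$ and $\tau_i = T_{i+1}-T_i$, each copy run at its own confidence level $\delta_i$ with $\sum_{i\ge 0}\delta_i \le \delta$ (for instance $\delta_i = \delta\,2^{-(i+1)}$). A union bound over the per-block good events then guarantees that, with probability at least $1-\delta$, every block simultaneously obeys its non-anytime high-probability guarantee, so that $R^{DT}(T,\delta) \le \sum_{i:\,T_i < T} R(\tau_i,\delta_i)$. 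This turns the probabilistic statement into a deterministic summation to which the schedule analysis of \Cref{lemma:anytime_bound} applies.

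Next I would verify the hypotheses of \Cref{lemma:anytime_bound}. Both \Cref{theorem:glm-es} and \Cref{theorem:neural-es} give bounds of the square-root form $R(T,\delta) = c\,T^{1/2}(\log T)^{\delta_{0}} + f(T)$, i.e. with exponent $\gamma = 1/2$, a polylogarithmic factor $(\log T)^{\delta_0}$ (the power $\delta_0$ being a fixed constant coming from the $\widetilde{\mathcal{O}}$, and playing the role of besson's log-exponent, kept notationally distinct from the failure probability $\delta$), and a lower-order remainder $f(T) = o(T^{1/2})$ — for \texttt{GLM-ES} the $\widetilde{\mathcal{O}}(d^{9/2})$ term is $T$-independent, and for \texttt{Neural-ES} the $\epsilon(N)$ and constant terms are likewise $o(\sqrt T)$. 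Applying \Cref{lemma:anytime_bound} with $\gamma = 1/2$ yields $R^{DT}(T,\delta) \le l(1/2,\delta_0,T_0,b)\,c\,T^{1/2}(\log T)^{\delta_0} + g(T)$, so the whole problem collapses to bounding the loss constant
\begin{align*}
    l\Big(\tfrac12,\delta_0,T_0,b\Big) = \bigg(\frac{\log(T_0(b-1)+1)}{\log(T_0(b-1))}\bigg)^{\delta_0}\cdot \frac{b^{1/2}(b-1)^{1/2}}{b^{1/2}-1}.
\end{align*}

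The heart of the argument is the evaluation of the second factor using the arithmetic identity that motivates the choice of $b$. Writing $\varphi = (1+\sqrt5)/2$, the point is that $b = (3+\sqrt5)/2 = \varphi^2$, so that $\sqrt b = \varphi = b-1$. Substituting $b-1 = \sqrt b$ and $b^{1/2}-1 = \varphi-1 = 1/\varphi = b^{-1/2}$ collapses the factor to
\begin{align*}
    \frac{b^{1/2}(b-1)^{1/2}}{b^{1/2}-1} = \frac{b^{1/2}\,b^{1/4}}{b^{1/2}-1} = \frac{b^{3/4}}{b^{-1/2}} = b^{5/4} = \varphi^{5/2} \approx 3.33.
\end{align*}
I would additionally record why this schedule is optimal: substituting $u = \sqrt b$ turns the second factor into $u\sqrt{u+1}/\sqrt{u-1}$, whose logarithmic derivative vanishes exactly when $u^2 - u - 1 = 0$, i.e. $u = \varphi$, confirming that $b = \varphi^2$ minimizes the doubling-trick blow-up for square-root regret and that the minimal value is $\varphi^{5/2}$.

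Finally I would control the first factor and assemble the constant. With $T_0 \ge 100$ and $b-1 = \varphi > 1.6$ we have $T_0(b-1) \ge 161$, hence $\log(T_0(b-1)+1)/\log(T_0(b-1)) \le 1 + 1/\big(T_0(b-1)\log(T_0(b-1))\big)$ lies within about $10^{-3}$ of $1$; raised to the fixed power $\delta_0$ it remains negligibly above $1$. Combining this with $\varphi^{5/2}\approx 3.33$ and comparing against the full non-anytime bound $R(T,\delta)$ (which itself carries the $f(T)$ terms, absorbing the sublinear remainder $g(T)$) gives the reported $R^{DT}(T,\delta) \le 3.3\, R(T,\delta)$ with probability at least $1-\delta$; the argument is identical for \texttt{GLM-ES} and \texttt{Neural-ES} because only $\gamma = 1/2$ was used. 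The main obstacle is precisely this last reconciliation: the minimal loss constant is $\varphi^{5/2}\approx 3.330$, so extracting the clean reported value $3.3$ requires charging the $O(10^{-3})$ log-ratio correction and the lower-order $g(T)$ terms against the slack in $R(T,\delta)$ rather than against its leading term alone. A secondary, bookkeeping-heavy subtlety is propagating the per-block confidence levels $\delta_i$ (which enter the base bounds only through $\log(1/\delta_i)$) through the geometric summation without inflating the constant beyond the factor captured by the log-ratio term.
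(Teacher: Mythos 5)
The statement you were asked to prove is \Cref{lemma:anytime_bound} itself---the generic doubling-trick theorem of \citet{besson2018doubling}, valid for arbitrary $0<\gamma<1$ and $\delta\geq 0$, with the explicit loss constant $l(\gamma,\delta,T_{0},b)$. Your opening move, however, is to ``reduce the result to the generic doubling-trick bound \Cref{lemma:anytime_bound},'' i.e., you invoke the target statement as a black box; everything that follows proves the paper's \emph{downstream} theorem $R^{DT}(T,\delta)\leq 3.3\,R(T,\delta)$ for \texttt{GLM-ES}/\texttt{Neural-ES}, not the lemma. This is a circularity, not a proof. For what it is worth, your application-level work is correct and closely mirrors what the paper actually does in \Cref{appendix:doubling_trick} (the paper likewise imports the lemma by citation, without proof, and merely instantiates $\gamma=1/2$, $\delta=3/2$, $b=(3+\sqrt{5})/2$, $T_{0}=100$): with $b=\varphi^{2}$ one indeed has $b-1=\sqrt{b}=\varphi$, hence $b^{1/2}(b-1)^{1/2}/(b^{1/2}-1)=\varphi^{5/2}\approx 3.33$; your $u=\sqrt{b}$ optimality computation is right; and your per-block confidence splitting $\delta_{i}=\delta\,2^{-(i+1)}$ with a union bound is a sensible refinement of the high-probability bookkeeping that neither the lemma (stated for deterministic regret bounds) nor the paper's sketch spells out.

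What is missing is the entire content of the lemma. A genuine proof must: (i) use the restart structure to decompose $R_{T}(\mathcal{A}')\leq\sum_{i=0}^{i_{T}}R_{\tau_{i}}(\mathcal{A})$, where $i_{T}=\min\{i:T_{i}\geq T\}$ and $\tau_{i}=T_{i}-T_{i-1}$, handling the floors in $T_{i}=\lfloor T_{0}b^{i}\rfloor$ so that $\tau_{i}\leq T_{0}(b-1)b^{i-1}+1$; (ii) apply the hypothesis blockwise, bound $(\log\tau_{i})^{\delta}\leq(\log T)^{\delta}\bigl(\log(T_{0}(b-1)+1)/\log(T_{0}(b-1))\bigr)^{\delta}$---the worst ratio occurs at the first block, which is exactly why the condition $T_{0}(b-1)>1$ is needed when $\delta>0$---and sum the geometric series $\sum_{i\leq i_{T}}b^{\gamma i}\leq b^{\gamma(i_{T}+1)}/(b^{\gamma}-1)$; (iii) convert $b^{\gamma i_{T}}$ into $T^{\gamma}$ via $T>T_{i_{T}-1}$, which is where the numerator $b^{\gamma}(b-1)^{\gamma}$ in $l$ arises; and (iv) verify that $\sum_{i\leq i_{T}}f(\tau_{i})=:g(T)$ is $o(T^{\gamma}(\log T)^{\delta})$ and can be taken increasing. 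None of these steps appear in your write-up, and your closing constant-evaluation (the $\varphi^{5/2}$ arithmetic, the $10^{-3}$ log-ratio slack for $T_{0}\geq 100$) concerns only the instantiation. In short: as a proof of the paper's anytime theorem your proposal is essentially the paper's own argument, slightly more careful on the probability accounting; as a proof of the stated lemma it is vacuous, since it assumes the lemma at the outset.
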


With \Cref{lemma:anytime_bound}, the regret bound analysis is straightforward.
From the regret bound of ensemble sampling, the dependency on $T$ is given by $\mathcal{O} \Big((\text{log}T)^{\frac{3}{2}}\sqrt{T}\Big)$.
Therefore, apply \Cref{lemma:anytime_bound} to ensemble sampling algorithms (\texttt{GLM-ES} or \texttt{Neural-ES}), we have $\gamma = 1/2$, $\delta = 3/2$.
We can minimize the expression of $l(\gamma, \delta, T_{0}, b)$ by properly selecting the parameter of the doubling sequence $T_{0}$ and $b$.
The optimal choice of $b$ is $(3+\sqrt{5})/2 \approx 2.6$ and we can choose $T_{0}$ large enough to reduce the other factor.
For example, if we choose $T_{0} = 100$, the extra factor $l(\gamma, \delta, T_{0}, b) \approx 3.3$.

From the discussions above, we can choose the sequence
\begin{align*}
    \{T_{i}\} = \{T_{0},\, T_{0}b,\, T_{0}b^{2},...\}.
\end{align*}
The number of rounds follows the sequence:
\begin{align*}
    \{\tau_{i}\} = \{T_{0},\, T_{0}(b-1),\, T_{0}(b-1)b,\, T_{0}(b-1)b^{2}, ...\}.
\end{align*}
From the discussions above, by directly applying doubling trick, we obtain the same asymptotic regret bound with the cost of a constant factor.

\vspace{3mm}

\section{Additional Experiments}  \label{appendix:experiments}

\begin{figure}[t]
\centering
    \subfigure[Logistic bandit environment with different ensemble size $m$.]{\includegraphics[width=.48\textwidth]{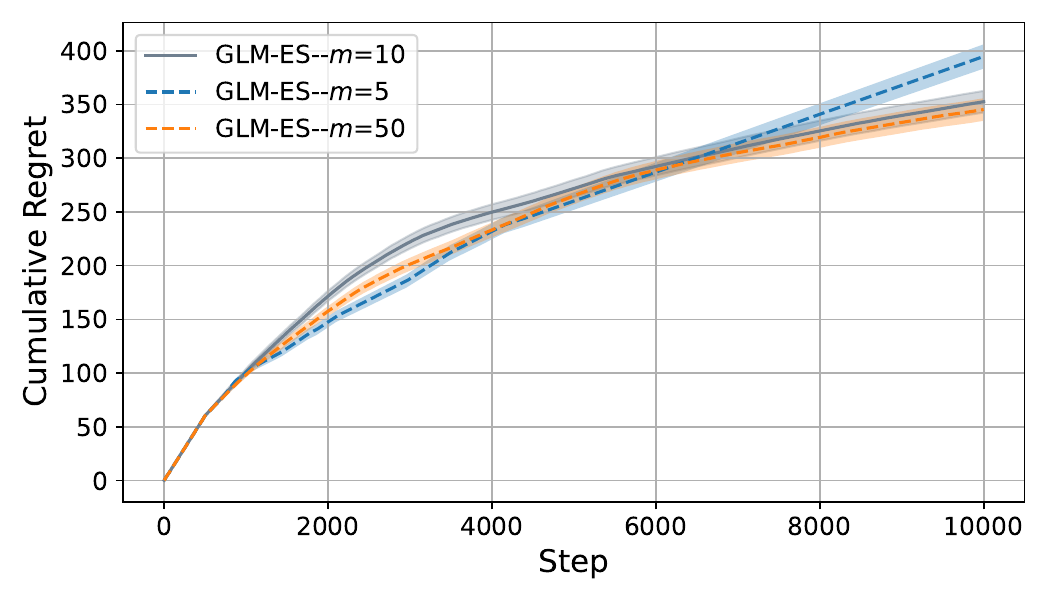}}
    \subfigure[Logistic bandit environment with different regularization $\lambda$.]{\includegraphics[width=.48\textwidth]{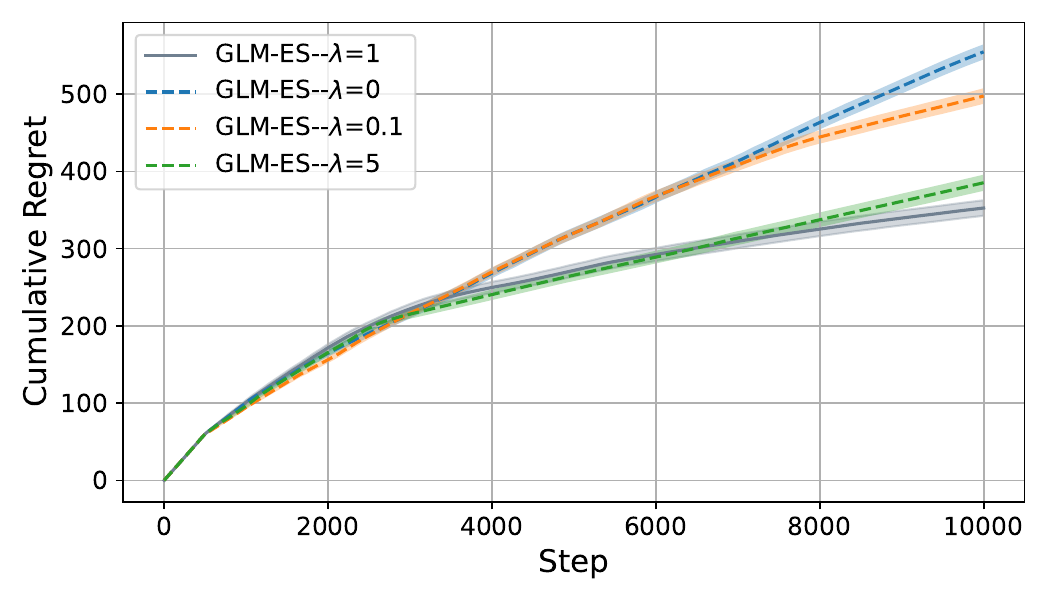}}
    \subfigure[Logistic bandit environment with different perturbation variance $\sigma_{R}$.]{\includegraphics[width=.48\textwidth]{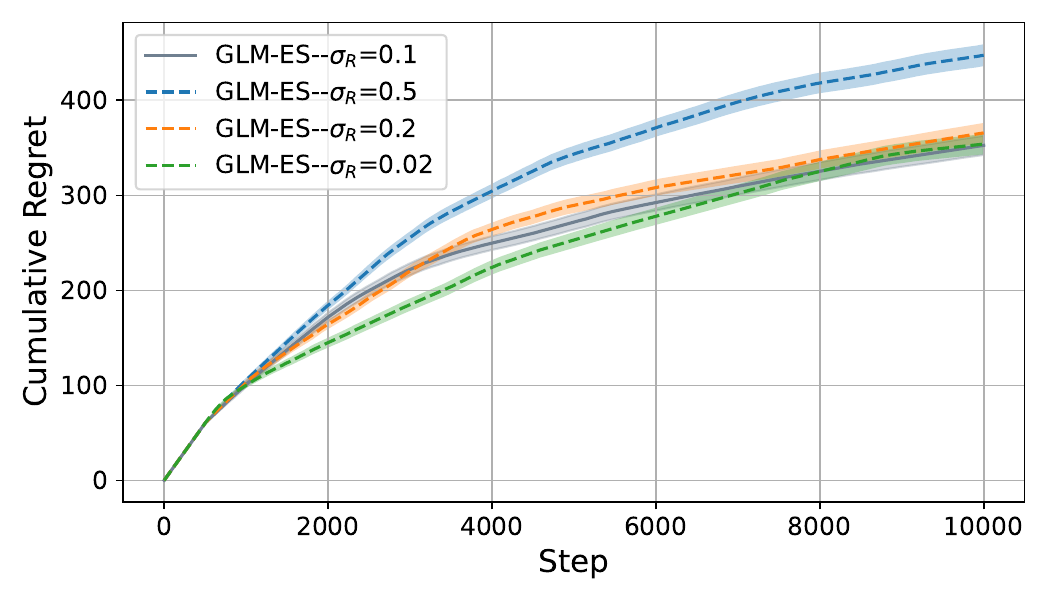}}
    \subfigure[Quadratic bandit environment with different network structure.]{\includegraphics[width=.48\textwidth]{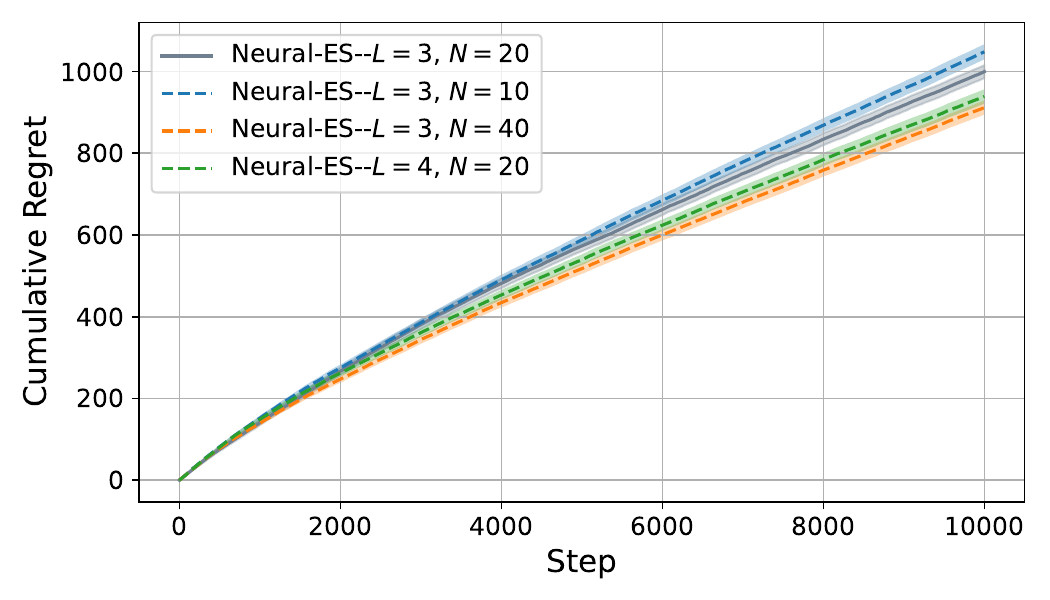}}
    \caption{Experiment results with different parameter setup and model structures.}
    \label{fig:diff_params}
\end{figure}

In this section, we present additional experiment results to explain how the performance of ensemble sampling is affected by hyper-parameters of the algorithm and the environment.
We also add experiment results for the UCI datasets to demonstrate the practicality of ensemble sampling in real-world environments.

\subsection{Experiments on Simulated Environments}

\subsubsection{Performance Test with Different Model Hyper-parameters}

First, we consider the performance of \texttt{GLM-ES} and \texttt{Neural-ES} with different ensemble size, regularization, perturbation distribution and network structure. The experiment results are plotted in \Cref{fig:diff_params}.
For \texttt{GLM-ES}, our baseline algorithm uses the following parameters: we use 100 iterations of gradient descent with step size 0.01, ensemble size $m = 10$, perturbation distribution $\mathcal{N}(0, \sigma_{R}^{2})$ with $\sigma_{R} = 0.1$, warm-up steps $\tau = 500$ and regularization $\lambda = 1.0$.
For \texttt{Neural-ES}, our baseline algorithm uses the following parameters: we use fully connected neural network with $L=3$ layers and width $N = 20$, we optimize the loss function using gradient descent with 100 steps and learning rate 0.01, ensemble size $m = 10$, perturbation distribution $\mathcal{N}(0, \sigma_{R}^{2})$ with $\sigma_{R} = 0.1$, warm-up steps $\tau = 50$ and regularization $\lambda = 1.0$.
To study the effects of $m$, $\lambda$, $\sigma_{R}$ and network structure, in each experiment in \Cref{fig:diff_params}, we change one hyper-parameter while keeping other parameters the same as the baseline algorithm.
In \Cref{fig:diff_params}, the baseline algorithm is plotted in gray solid line, additional experiment results are plotted in dashed lines.
We discuss the experiment results and effect of each hyper-parameter as follows.

(i) Ensemble size $m$.
For very small ensemble size $m$, it is likely that most models are trapped in sub-optimal arms, thus the cumulative regret grows fast for large $T$ compared to experiments with moderate or large ensemble sizes. We also observe higher variance for $m=5$, thus the algorithm becomes unstable at very small ensemble size.
For very large ensemble size ($m=50$), we observe marginal improvements on the cumulative regret compared to $m=10$. Since the computational cost grows linearly with $m$, moderate ensemble size such as $m=10$ suffices to reach competitive performance for \texttt{GLM-ES}.

(ii) Regularization $\lambda$.
For very small regularization $\lambda \sim 0$, the MLE estimates of the true parameter becomes unstable. Therefore, the predictions of reward become noisy and could cause unnecessary exploration.
For very large regularization, the loss function is dominated by the regularization term and the estimated parameter becomes very small. This could cause estimated rewards to be very similar across different arms, making identifying the optimal arm more difficult.
From the experiment results, $\lambda \sim 1$ is a proper choice, the cumulative regret becomes higher when regularization is too small or too large.

(iii) Perturbation variance $\sigma_{R}$.
For very small perturbation variance $\sigma_{R}$, the algorithm behaves greedy and the perturbed rewards for each model are nearly identical. Without sufficient exploration from perturbations, the algorithm is likely to be trapped in sub-optimal arms, resulting in fast growth of cumulative regret for large $T$.
For very large perturbations, the true reward is dominated by random perturbations and the model cannot effectively learn the true parameters of the environment. This results in large cumulative regret, especially at small $t$.
From the experiment results, $\sigma_{R} \sim 0.1$ is a proper choice of perturbation variance that provides sufficient exploration without introducing too much noise.

(iv) Network structure $L$ and $N$.
For very small neural networks, the model has limited capacity to approximate the true reward model, the cumulative regret grows fast compared to larger networks.
For very large networks, while the neural network can accurately approximate the true reward model, with limited data from interactions, the model could overfit the random noise and added perturbations. From experiments results, the performance is not significantly improved with larger network structures.
We also note that expanding the width of the neural network is generally more effective than adding more layers. This agrees with our theoretical analysis. As in \Cref{theorem:neural-es}, wider network can considerably reduce the cumulative regret.

\subsubsection{Performance Test with Different Anytime Schedules}

\begin{figure}[t]
\centering
    \subfigure[Logistic bandit with different anytime schedules (GLM-ES-$T_{0}$-$b$).]{\includegraphics[width=.45\textwidth]{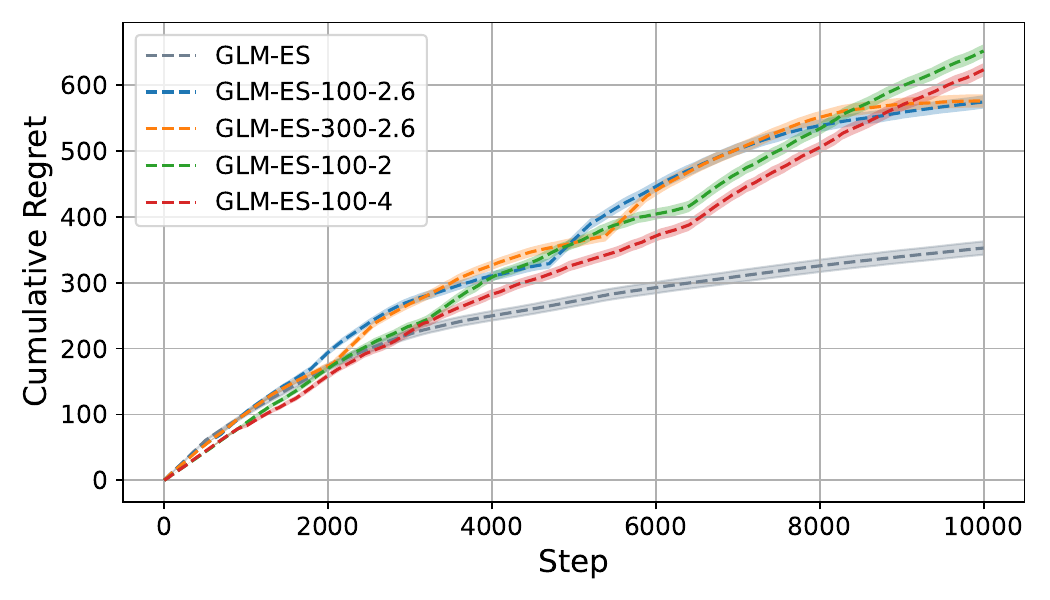}}
    \subfigure[Constant factor $l(T_{0}, b)$ in doubling trick.]{\includegraphics[width=.45\textwidth]{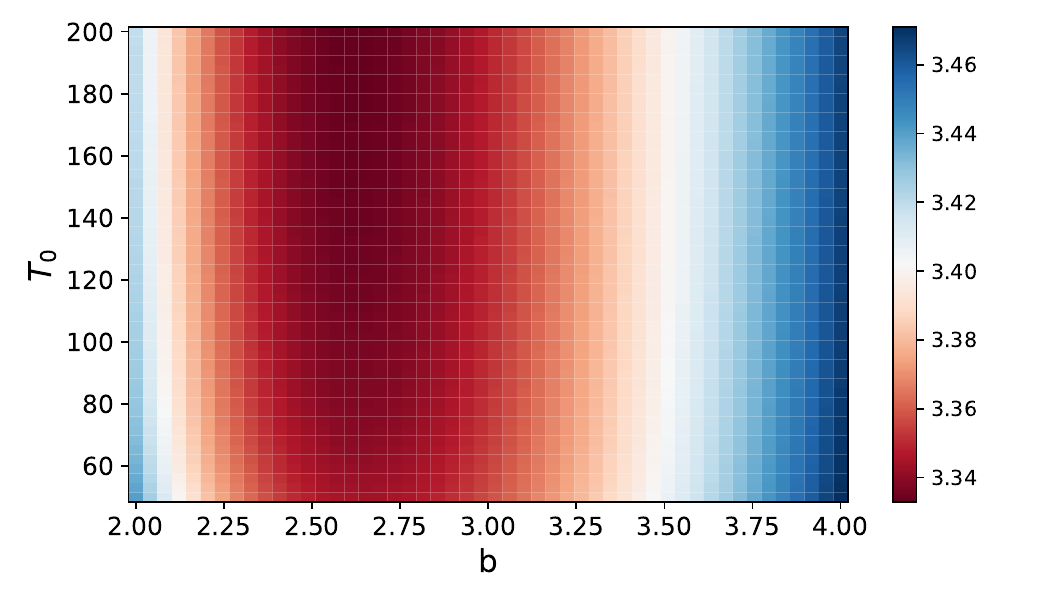}}
    \caption{Experiment results with different anytime schedules. \icml{The constant factor $l(T_{0}, b)$ is sensitive to $b$ but not sensitive to $T_{0}$, which is verified in the numerical results in logistic bandit settings. Therefore, we should keep $b$ at the optimal value $b \approx 2.6$, the choice of $T_{0}$ can be very flexible depending on practical requirements.}}
    \label{fig:anytime_schedule}
\end{figure}

Next, we consider the performance of anytime versions of ensemble sampling with different schedules $b$ and $T_{0}$. The experiment results are plotted in \Cref{fig:anytime_schedule}.
From theoretical analysis, for anytime versions of ensemble sampling, the regret bound increases by a constant factor $l(T_{0}, b)$:
\begin{align*}
    l(T_{0}, b) := \bigg(\frac{\text{log}(T_{0}(b-1) + 1)}{\text{log}(T_{0}(b-1))}\bigg)^{\delta} \times \frac{b^{\gamma}(b-1)^{\gamma}}{b^{\gamma} - 1},
\end{align*}
where $\gamma = 1/2$, $\delta = 3/2$.
The value of $l(T_{0}, b)$ is plotted in \Cref{fig:anytime_schedule}.
From this plot, the regret bound becomes smaller as we choose higher $T_{0}$ and $b = (3 + \sqrt{5})/2 \approx 2.6$. We also observe that $T_{0}$ has minor effect on the performance of the algorithm, while moving away from the optimal value of $b$ can considerably cause greater cumulative regret.
This agrees with the experiment results. Setting different $T_{0}$ results in nearly identical cumulative regret, while setting $b$ away from $(3 + \sqrt{5})/2$ has more significant impact on the performance of the algorithm.
Therefore, for practicality, we should keep $b$ at the optimal value $b = (3 + \sqrt{5})/2 \approx 2.6$ and set $T_{0}$ in the order of $10^{2}$.

\subsubsection{Performance Test on Larger Feature Spaces}

\begin{figure}[t]
\centering
    \subfigure[Logistic bandit with larger feature spaces.]{\includegraphics[width=.45\textwidth]{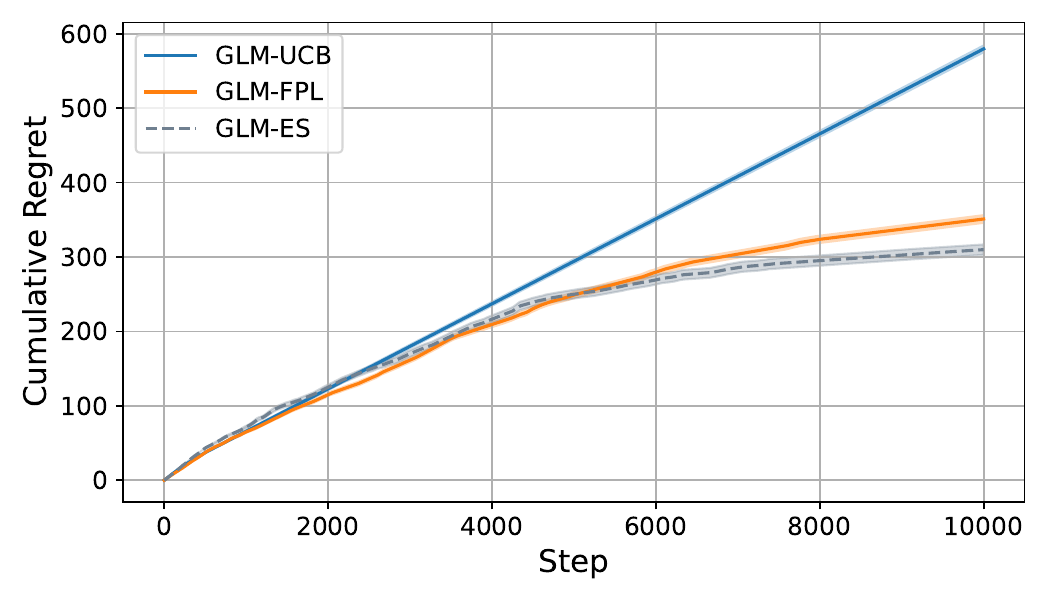}}
    \subfigure[Quadratic bandit with larger feature spaces.]{\includegraphics[width=.45\textwidth]{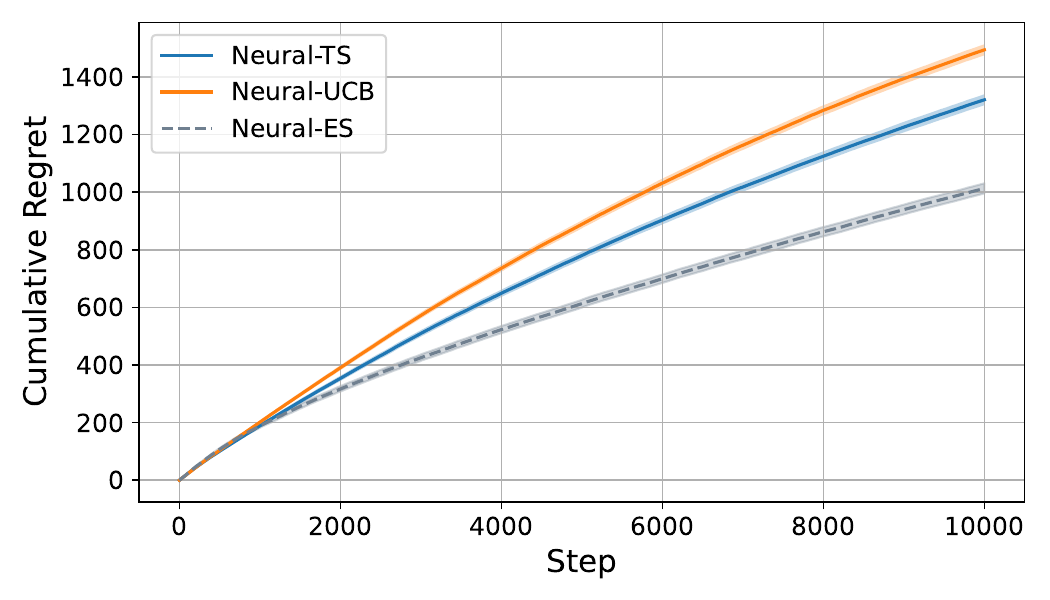}}
    \caption{Experiment results with larger feature spaces.}
    \label{fig:larger}
\end{figure}

Finally, we use synthetic environment with higher-dimensional features and more arms in the arm set $\mathcal{X}$.
Specifically, we use $d = 50$, $K = 500$ in the experiments. The results are plotted in \Cref{fig:larger}.
From the experiment results, the ensemble sampling algorithms remain competitive in larger feature spaces compared to baseline algorithms.

\subsection{Experiments on UCI Datasets}

\begin{figure}[t]
\centering
    \subfigure[UCI Shuttle Dataset.]{\includegraphics[width=.45\textwidth]{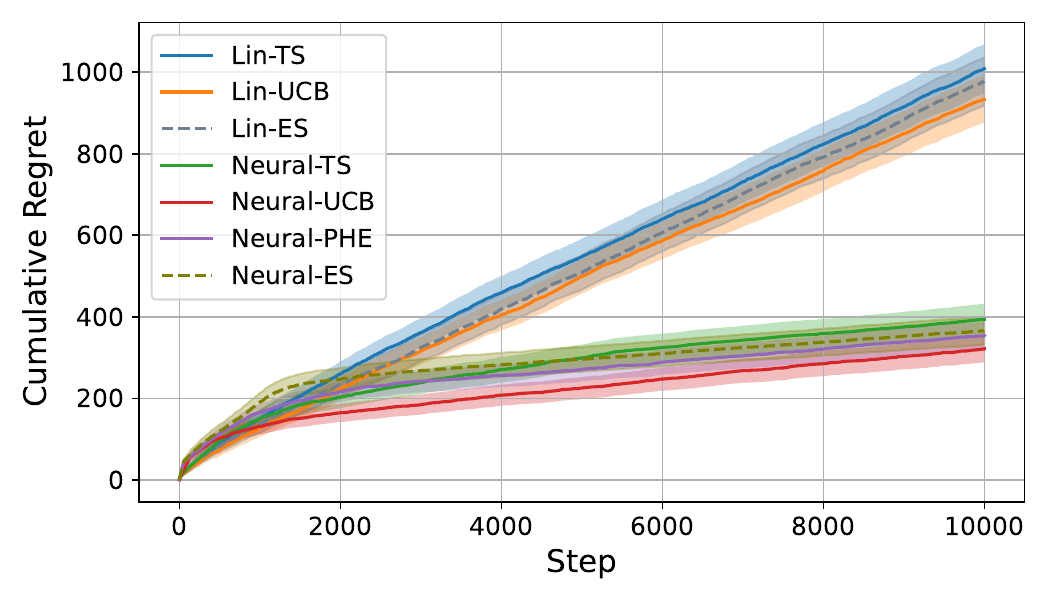}}
    \subfigure[UCI Mushroom Dataset.]{\includegraphics[width=.45\textwidth]{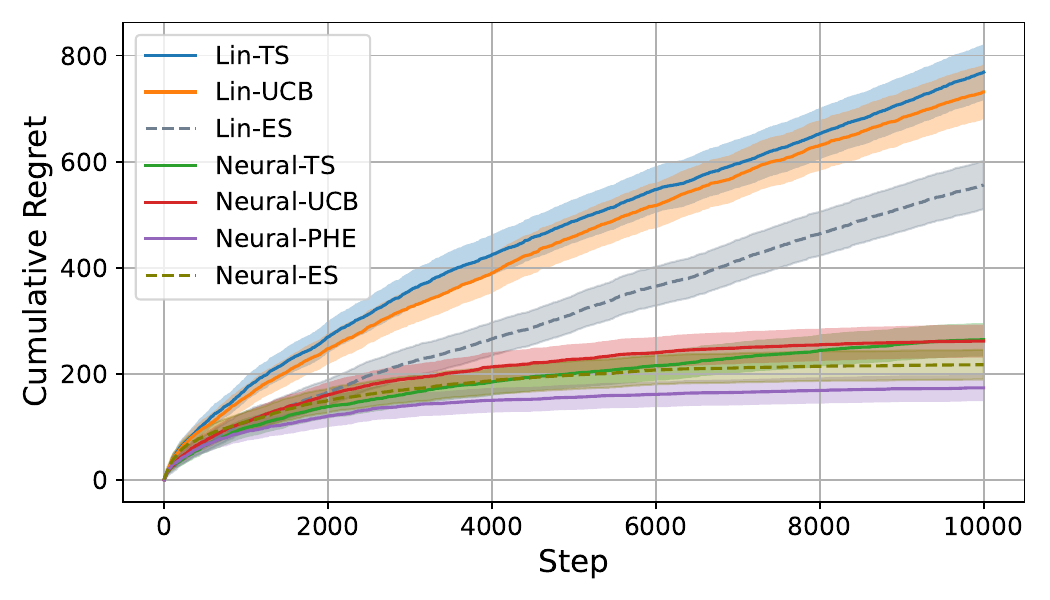}}
    \subfigure[UCI Adult Dataset.]{\includegraphics[width=.45\textwidth]{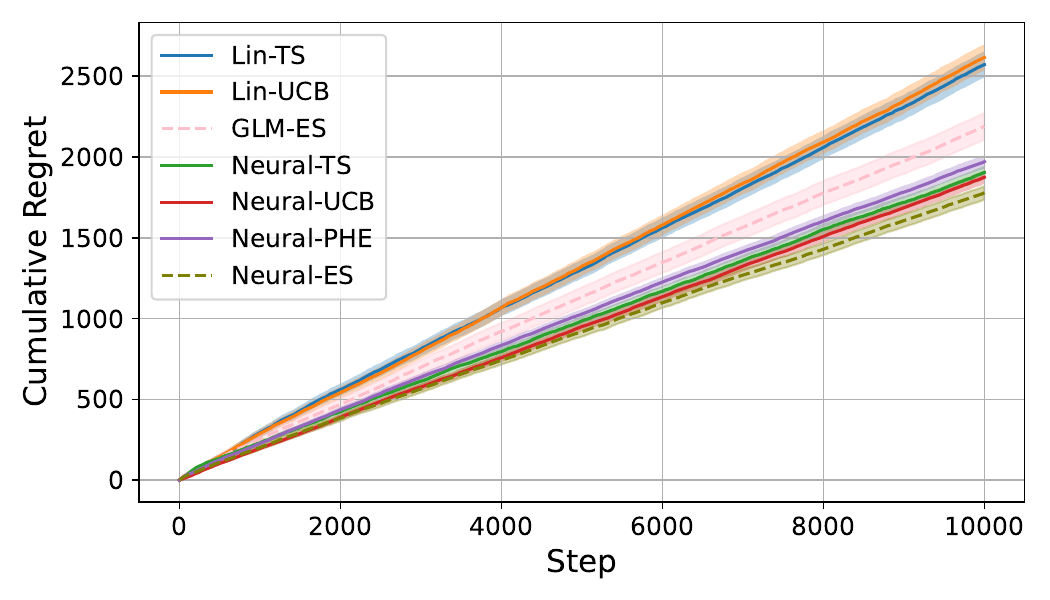}}
    \subfigure[UCI Covertype Dataset.]{\includegraphics[width=.45\textwidth]{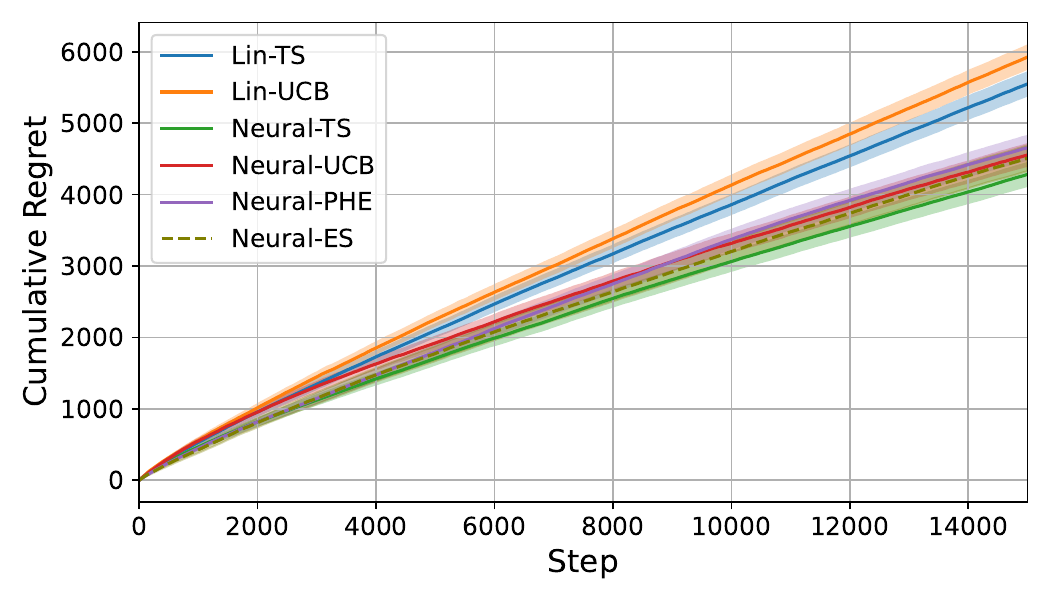}}
    \caption{Experiment results in UCI datasets. The results are averaged over 10 repetitions. The shaded area represents the standard deviation of the cumulative regret.}
    \label{fig:uci}
\end{figure}

We present the experiment results on classification tasks with the UCI datasets in \Cref{fig:uci}.
The context dimension and number of arms for each environment is listed below:

\begin{table}[h]
\centering
\caption{Context dimension and number of arms for UCI datasets.}
\label{tab:icml_example}
\begin{tabular}{lcccc}
\toprule
Environment & \texttt{Shuttle} & \texttt{Mushroom} & \texttt{Adult} & \texttt{Covertype} \\
\midrule
Context Dimension $d$        & 9 & 22 & 14 & 54 \\
Number of Arms $K$ & 7 & 2 & 2 & 7 \\
\bottomrule
\end{tabular}
\end{table}

The hyper-parameter setup for each model is the same as in \Cref{section:exp}. Compared to baseline algorithms \texttt{Lin-TS} and \texttt{Lin-UCB}, \texttt{Lin-ES} consistently achieves similar cumulative regret with reduced computational cost.
In these non-linear real-world datasets, linear models cannot capture the non-linearity effectively, thus algorithms designed for generalized linear bandit and neural bandit demonstrate considerably stronger empirical performance.
Compared to baseline algorithms \texttt{Neural-TS}, \texttt{Neural-UCB} and \texttt{Neural-PHE}, \texttt{Neural-ES} consistently achieved better or similar cumulative regret with significantly improved computational cost.

Overall, our experiments on UCI datasets verifies the competence and efficiency of ensemble sampling-based algorithms in real-world environments.

\bibliography{reference}
\bibliographystyle{ims}

\end{document}